\newtheorem{theorem}{Theorem}[section]
\newtheorem{lemma}[theorem]{Lemma}
\newtheorem{definition}[theorem]{Definition}
\newtheorem{algorithm}[theorem]{Algorithm}
\newtheorem{remark}[theorem]{Remark}
\newtheorem{problem}[theorem]{Problem}
\DeclareMathOperator{\A}{\mathcal{A}\xspace}
\DeclareMathOperator{\M}{\mathcal{M}}
\DeclareMathOperator{\G}{\mathcal{G}}
\DeclareMathOperator{\I}{\mathcal{I}}
\DeclareMathOperator{\x}{\mathbf{x}\xspace}
\DeclareMathOperator{\z}{\mathbf{z}\xspace}
\DeclareMathOperator{\w}{\mathbf{w}\xspace}
    \pgfarrowshullpoint{\pgfarrowlength}{0pt}
\else\pgfsetlinewidth{+\pgfarrowlinewidth}\fi
    \pgfarrowshullpoint{\pgfarrowlength}{0pt}
    \pgfarrowshullpoint{\pgfarrowinset}{0pt}
\else\pgfsetlinewidth{+\pgfarrowlinewidth}\fi
\newdimen\ipeminipagewidth
\tikzstyle{ipe import} = [
\tikzset{
  rgb color/.code args={#1=#2}{%
    \definecolor{tempcolor-#1}{rgb}{#2}%
    \tikzset{#1=tempcolor-#1}%
  },
}
\DeclareMathOperator{\rk}{rk\xspace}
\DeclareMathOperator{\off}{\mathbf{r}\xspace}
\newcommand\event[1]{\mathop{\mathcal{X}\left(#1\right)}}
\newcommand\E[1]{\mathop{\mathbb{E}\left[#1\right]}}
\newcommand\Ex[2]{\mathop{\underset{#1}{\mathbb{E}}\left[#2\right]}}
\newcommand\Pro[1]{\mathop{\mathbb{P}\left(#1\right)}}
\newcommand\Prob[2]{\mathop{\underset{#1}{\mathbb{P}}\left(#2\right)}}
\DeclareMathOperator{\Rew}{\mathcal{R}\xspace}
\DeclareMathOperator*{\maximize}{\bf{maximize:}}
\newcommand{\rsm}{{RSW}\xspace
}
\newcommand{\mbb}{{MBB}\xspace
}
\newcommand{\printfnsymbol}[1]{%
  \textsuperscript{\@fnsymbol{#1}}%
}
\newcommand{\ig}{\textsc{interleaved-greedy}\xspace}
\newcommand{\IG}{IG\xspace}
\newcommand{\UCB}{IB\xspace}
\newcommand{\is}{\textsc{interleaved-submodular}\xspace}
\newcommand{\IS}{IS\xspace}
\newcommand{\OPT}{\textsc{OPT}\xspace}
\newcommand{\ucb}{\textsc{interleaved-ucb}\xspace}
\title{Recurrent Submodular Welfare and \\
Matroid Blocking Bandits}
\author{
Orestis Papadigenopoulos\\
\texttt{papadig@cs.utexas.edu}\\
Department of Computer Science\\
The University of Texas at Austin, USA.
\and
Constantine Caramanis\\
\texttt{constantine@utexas.edu}\\
Department of Electrical and Computer Engineering\\
The University of Texas at Austin, USA.
}
\date{\today}
\begin{document}

\maketitle

\begin{abstract}
A recent line of research focuses on the study of the stochastic multi-armed bandits problem (MAB), in the case where temporal correlations of specific structure are imposed between the player's actions and the reward distributions of the arms (Kleinberg and Immorlica [FOCS18], Basu et al. [NeurIPS19]). As opposed to the standard MAB setting, where the optimal solution in hindsight can be trivially characterized, these correlations lead to (sub-)optimal solutions that exhibit interesting dynamical patterns -- a phenomenon that yields new challenges both from an algorithmic as well as a learning perspective. In this work, we extend the above direction to a combinatorial bandit setting and study a variant of stochastic MAB, where arms are subject to matroid constraints and each arm becomes unavailable (blocked) for a fixed number of rounds after each play. A natural common generalization of the state-of-the-art for blocking bandits, and that for matroid bandits, yields a $(1-\frac{1}{e})$-approximation for partition matroids, yet it only guarantees a $\frac{1}{2}$-approximation for general matroids. In this paper we develop new algorithmic ideas that allow us to obtain a polynomial-time $(1 - \frac{1}{e})$-approximation algorithm (asymptotically and in expectation) for any matroid, and thus to control the $(1-\frac{1}{e})$-approximate regret. A key ingredient is the technique of correlated (interleaved) scheduling. Along the way, we discover an interesting connection to a variant of Submodular Welfare Maximization, for which we provide (asymptotically) matching upper and lower approximability bounds.

\end{abstract}

\pagenumbering{arabic}

\section{Introduction}
The {\em multi-armed bandits} (MAB) model has proven itself to be a successful mathematical framework for studying sequential decision making problems in environments that are initially unexplored by the decision maker. Since its first introduction (see \cite{T33} and later \cite{LR85}), the framework has been extensively studied as a simple yet powerful abstraction of the trade-off between {\em exploration} and {\em exploitation}, ubiquitous in a plethora of applications ranging from online advertising/recommendation systems to resource allocation and dynamic pricing (see \cite{BB12,LS18} and references therein). In the {\em stochastic} MAB setting \cite{LR85} the decision maker sequentially chooses among a set of available {\em actions} (or {\em arms}), each associated with an i.i.d. stochastic reward, while in the {\em combinatorial} MAB setting \cite{CBL12}, a subset of the arms can be selected at each round, subject to feasibility constraints.

Despite the large number of variants of the MAB model that have been introduced, the majority of the results comply with the common assumption that playing an action does not alter the environment, namely, the reward distributions of the subsequent rounds (with notable exceptions discussed below). Only recently, researchers have focused their attention on settings where temporal dependencies of specific structure are imposed between the player's actions and the reward distributions~\cite{KI18, CCB19,BSSS19, PBG19,BPCS20}. In \cite{KI18}, Kleinberg and Immorlica consider the setting of {\em recharging bandits}, where the expected reward of each arm is a concave and weakly increasing function of the time passed since its last play, modeling in that way scenarios of local performance loss. 
In a similar spirit, Basu et al.\ \cite{BSSS19} consider the problem of {\em blocking bandits}, in which case once an arm is played at some round, it cannot be played again (i.e., it becomes blocked) for a fixed number of consecutive rounds. Notice that all the aforementioned examples are variations of the stochastic MAB setting, where the decision maker plays (at most) one arm per time step.

When combinatorial constraints and time dynamics come together, the result is a much richer and more challenging setting, precisely because their interplay creates a complex dynamical structure. Indeed, in the standard combinatorial bandits setting, the optimal solution in hindsight is to consistently play the feasible subset of arms of maximum expected reward. However, in the presence of local temporal constraints on the arms, an optimal (or even suboptimal) solution cannot be trivially characterized-- a fact that significantly complicates the analysis, both from the algorithmic as well as from the learning perspective. In this work, we study 
the following bandit setting-- a common generalization of {\em matroid bandits}, introduced by Kveton et al.\
\cite{KWAEE14}, and blocking bandits~\cite{BSSS19}:

\begin{problem}
[{Matroid Blocking Bandits (MBB)}] 
We consider a set $\A$ of $k$ {\em arms}, a {\em matroid} $\M=(\A,\I)$, and an unknown time horizon of $T$ rounds. Each arm $i \in \A$ is associated with an unknown bounded {\em reward} distribution of mean $\mu_i$, and with a known deterministic {\em delay} $d_i$, such that whenever an action $i$ is played at some round, it cannot be played again for the next $d_i-1$ rounds. At each time step, the {\em player} pulls a subset of the available (i.e., not blocked) arms restricted to be an independent set of $\M$. Subsequently, she observes the reward realization of each arm played ({\em semi-bandit} feedback) and collects their sum. The goal of the player is to maximize her expected cumulative reward over $T$ rounds.
\end{problem}
 
The above model captures a number of applications, varying from team formation to ad placement, when arms represent actions that cannot be played repeatedly without restriction. As a concrete example, consider a recommendation system that repeatedly suggests a variety of products (e.g., songs, movies, books) to a user. The need for diversity on the collection of suggested products (arms), to capture different aspects of user's preferences, can be modeled as a linear matroid. Further, the blocking constraints preclude the incessant recommendation of the same product (which can be detrimental, as the product might be perceived as a ``spam''), while the maximum rate of recommendation (controlled by the delay) might depend on factors such as popularity, promotion and more. Finally, the expected reward of each product is the probability of purchasing (or clicking).

From a technical viewpoint, the \mbb problem is already NP-hard for the simple case of a uniform rank-1 matroid (see Theorem 2.1 in \cite{SST09}), even in the {\em full-information} setting, where the reward distributions are known to the player a priori. The natural common generalization of the algorithms in \cite{BSSS19,KWAEE14}, computes and plays, at each time step, an independent set of maximum mean reward consisting of the available elements. While the above strategy is a $\left(1-\frac{1}{e}\right)$-approximation asymptotically (that is, for $T \to \infty$) for partition matroids, unfortunately, it only guarantees a $\frac{1}{2}$-approximation for general matroids and this guarantee is tight (see Appendix \ref{appendix:tightexample} for an example). A natural question that arises is whether a $\left(1-\frac{1}{e}\right)$-approximation is possible for any matroid. 

The main result of this paper shows that this is indeed possible. Along the way, we identify that the key insight (and also the weak point of the naive $\frac{1}{2}$-approximation) is the underlying {\em diminishing returns} property hidden in the matroid structure. In particular, we discover an interesting connection of \mbb to the following problem of interest in its own right:

\begin{problem}[Recurrent Submodular Welfare (RSW)]
We consider a monotone (non-decreasing) submodular function $f:2^{\A} \rightarrow \mathbb{R}_{\geq 0}$ over a universe $\A$ and a time horizon $T$. At each round $t \in [T]$ we choose a subset $\A_t \subseteq \A$ and collect a reward $f(\A_t)$. However, using an element $i \in \A$ at some round $t \in [T]$ makes it unavailable (i.e., blocked) for a fixed and known number of $d_i-1$ subsequent rounds, namely, during the interval $[t, t + d_i -1]$. The objective is to maximize $\sum_{t \in [T]} f(\A_t)$, subject to the blocking constraints, within a (potentially unknown) time horizon $T$.
\end{problem}

For the above model, which can be thought of as a variant of {\em Submodular Welfare Maximization}~\cite{Von08}, we provide an efficient randomized $\left(1 - \frac{1}{e}\right)$-approximation (asymptotically), accompanied by a matching hardness result. Note that the \rsm problem is a very natural model, capturing applications of submodular maximization in repeating scenarios, where the elements cannot be constantly used without restriction. As an example, consider the process of renting goods to a stream of customers with identical submodular utility functions modeling their satisfaction.

As we show, our approach for the \rsm problem immediately implies an algorithm of the same approximation guarantee for the full-information case of \mbb and, additionally, it has important implications for the {\em bandit} setting, where the reward distributions are initially unknown. The standard goal in this case is to provide a (sublinear in the time horizon) upper bound on the {\em regret}, namely, the difference between the expected reward of a bandit algorithm and a (near-)optimal algorithm, due to the initial lack of knowledge of the former\footnote{
In fact, we upper bound the $\left(1 - \frac{1}{e}\right)${-(approximate) regret}, defined as the difference between $\left(1 - \frac{1}{e}\right) \OPT(T)$ and the expected reward collected by a bandit algorithm. The notion of $\alpha$-regret is widely used in the combinatorial bandits literature~\cite{CWYW16,WC17} for combinatorial problems where an efficient algorithm does not exist, and, thus, any efficient algorithm would inevitably suffer linear regret in standard definition (where $\alpha = 1$).
}.

\subsection{Related Work}
The \mbb model belongs to the family of stochastic {\em non-stationary} bandits, given that the reward distributions of the arms can change over time. Significant members of this family are {\em restless bandits} \cite{Whittle88, GMS10}, where the reward distribution of each arm changes at each time step, and {\em rested bandits} \cite{Gittins79, TL12}, where the distribution changes only when the arm is played. For the setting of restless bandits and without further assumptions on the transition functions, it is PSPACE-hard to even approximate the optimal solution \cite{PT99}. Our model differs from the above cases as we consider a transition function of special form and the transitions can occur both during playing and not playing an arm. In addition, the \mbb model falls into the category of Markov Decision Processes (MDPs) with deterministic transitions and stochastic rewards, but requires an exponential (in the size of the arms) state space, which makes this approach inefficient in practice.

A recent line of research focuses on non-stationary models in the case where each reward distribution is a special function of the player's actions \cite{ CCB19, PBG19,BPCS20}. In \cite{BSSS19}, Basu et al. provide a greedy $\left(1-\frac{1}{e}\right)$-approximation for the full-information case of the blocking bandits problem (a special case of the \mbb model for a uniform rank-1 matroid). As we have already mentioned, generalizing their strategy to the \mbb problem fails to provide the same guarantee for general matroids. In the bandit setting, where the reward distributions are initially unknown, the authors have to overcome the burden of characterizing a (sub)optimal solution, where the rate of mean collected reward exhibits significant fluctuations over time. The key insight is to observe that every time the full-information algorithm plays an arm, its bandit variant, which relies on estimations of the mean rewards, has at least one chance of playing the same arm. However, this key coupling argument, that enables sublinear regret bounds, becomes significantly more involved in the presence of matroid constraints. 

In \cite{KI18}, Kleinberg and Immorlica study the case of recharging bandits. Their approach first computes the ``optimal'' playing frequency ${1/x_i}$ of each arm $i$ via a mathematical formulation. In order to play each arm with this frequency, they develop the technique of {\em interleaved rounding}, where they associate each arm $i$ with a sequence of real numbers $\{\frac{\alpha_i +k}{x_i}\}_{k \in \mathbb{N}}$, with $\alpha_i \sim U[0,1]$. Then, the arms are played sequentially in the same order they appear on the real line. This novel rounding technique exhibits reduced variance and, thus, an improved approximation guarantee comparing to other natural approaches such as independent randomized rounding.

A rich body of research on combinatorial bandits \cite{CTPL15, CWYW16,WWFLLL16,KWAS15,KWAS15b, WC17} focuses on bandit optimization problems over general combinatorial structures.
In \cite{KWAEE14}, Kveton et al.\ consider the problem of stochastic combinatorial bandits where the underlying feasible set is a matroid defined over the ground set of arms. At each round, the player pulls an independent subset of arms and collects their realized rewards, assuming {\em semi-bandit} feedback (as opposed to the pure exploration {\em full-feedback} variant studied in \cite{CGL16}). The authors develop a greedy algorithm based on the Upper Confidence Bound (UCB) method \cite{ACBF02}, while they exploit well-known exchange properties of matroids for achieving optimal regret bounds. Their approach relies on the fact that the optimal solution in hindsight is fixed throughout the time horizon-- a fact that is no longer true in the presence of blocking constraints. Additional lines of research that are related to, yet incompatible with, our problem are {\em bandits with knapsacks} \cite{BKS18,SA18} or {\em with budgets} \cite{CJS15, Sliv13}, and {\em sleeping bandits}~\cite{KNMS10}.

The \mbb model is also related to the literature on {\em periodic scheduling} \cite{BNLT07, BNBNS02}. In \cite{SST09}, Sgall et al.\ consider the problem of periodically scheduling jobs on a set of machines. Each job is associated with a {\em processing time}, during which it occupies the machine it is executed on, a {\em vacation time}, namely, a minimum time required after its completion in order to be rescheduled, and a {\em reward}. It is not hard to see that the case of unit processing times is a special case of \mbb with a uniform matroid of rank equal to the number of machines, under the objective of maximizing the total reward.
Further, it is known~\cite{BSSS19} that the rank-1 case of \mbb generalizes the {\em Pinwheel Scheduling} problem~\cite{HMRTV89}: Given $k$ colors associated a set of integers $\{d_i\}_{i \in [k]}$, such that $\sum_{i \in [k]} \frac{1}{d_i} = 1$, decide whether there is a coloring of the natural numbers $\nu:\mathbb{N} \to [k]$ such that every color $i \in [k]$ appears at least once every $d_i$ numbers. As it is proved in~\cite{JL14}, the above problem does not admit a pseudopolynomial time algorithm unless SAT can be solved by a randomized algorithm in expected quasi-polynomial
time.

Finally, the \rsm problem is closely related to the problem of {\em Submodular Welfare Maximization} (SWM) \cite{Von08,MSV08,KLMM05,FV10}: Given $k$ items and $m$ players, each associated with a monotone submodular utility function $u_i:2^{[k]}\rightarrow \mathbb{R}_{\geq 0}$, the goal is to partition the elements into $m$ sets $S_1, \dots, S_m$, one for each player, such that to maximize $\sum_{i \in [m]} u_i(S_i)$. Specifically, \rsm can be thought of as a version of the SWM problem, when the items are distributed to a (possibly infinite) stream of players with identical utilities, and each item can be reused after some fixed time period (note that this is different than the online setting in~\cite{KMZ15}). Interestingly, as noted in \cite{Von08}, the SWM problem with identical utilities is {\em approximation resistant} in the sense that allocating the items to the players uniformly at random achieves the optimal approximation guarantee of $\left(1 - \frac{1}{e}\right)$ for this setting.

\subsection{Our Contributions}

We first focus on the full-information variant of \mbb, where the mean rewards of the arms are known to the player a priori. We assume that the player has access to the matroid $\M$ via an independence oracle and knowledge of the arms' fixed delays, yet she is oblivious to the time horizon $T$. In this sense, she plays {\em online}. An interesting aspect of dynamics, as illustrated in \cite{KI18, BSSS19, BPCS20}, is that one needs to guarantee, via {\em scheduling}, that each arm is roughly played at a frequency close to its ``optimal'' rate. This is particularly important in the presence of ``hard'' blocking constraints, where no reward can be obtained by a blocked arm. 

In order to address the above scheduling problem, we propose a particular  ``decoupled'' {\em two-phase strategy}. We refer to each phase as Player A and Player B. Initially, Player A decides on a schedule that determines arm availability, namely, a subset of rounds where each arm is allowed to be played. Subsequently, Player B chooses a subset of available arms that maximizes the total expected reward, subject to the matroid constraints. In order to completely decouple the two phases, the availability schedule produced by Player A is never affected by which arms are eventually chosen by Player B (that is, it is impossible for Player B to violate the blocking constraints). 

In the case where Player B knows the expected rewards of the arms and due to the above decoupling property, his optimal strategy (given any availability schedule) can be easily characterized: Since the arms of each round are subject to matroid constraints, Player B achieves his goal by playing a maximum expected reward independent set among the available arms of each round, which can be computed efficiently using the greedy algorithm for matroids. Thus, the role of Player A becomes to choose an availability schedule that maximizes the total reward, knowing that Player B will behave exactly as described above. The key observation is that the solution computed by Player B at each round, corresponds to the {\em weighted rank function} of the matroid evaluated on the set of available arms of the round. More importantly, it can be proved that this function is monotone submodular and, hence, Player A's task is a special case of the \rsm problem. 

Focusing our attention on the \rsm problem, any ``good'' solution should guarantee that each element $i \in \A$ is selected a fraction of the time close to $\frac{1}{d_i}$ (the maximum possible), where $d_i$ is the delay. However, a naive randomized approach that selects (if available) each element $i$ with probability $\frac{1}{d_i}$ independently at each round, can be as bad as a $(1 - e^{-\frac{1}{2}}) \approx 0.393$-approximation. Instead, motivated by the rounding technique of Kleinberg and Immorlica \cite{KI18}, we develop a (time-)correlated sampling strategy, which we call {\em interleaved scheduling}. While our technique is based on the same principle of transforming (randomly interleaved) sequences of real numbers into a feasible schedule, our implementation is, to the best of our knowledge, novel. Indeed, as opposed to \cite{KI18}, we additionally face the issue of scheduling more than one arms per round and the fact that our ``hard'' blocking constraints are particularly sensitive to the variance of the produced schedule. Using our technique, we construct a polynomial-time randomized algorithm, named \is (\IS), that achieves the following guarantee for \rsm:

\begin{restatable}{theorem}{restateinterleavedSubmodular}\label{thm:interleavedSubmodular}
The expected reward collected by \is over $T$ rounds, $\Rew^{IS}(T)$, is at least
$
\left(1 - \frac{1}{e} \right) \OPT(T) - \mathcal{O}(d_{\max} f(\A))
$, where $\OPT(T)$ is the optimal reward of \rsm for $T$ rounds and $d_{\max} = \max_{i \in \A} d_i$ is the maximum delay of the instance.
\end{restatable}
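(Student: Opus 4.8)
The plan is to prove Theorem~\ref{thm:interleavedSubmodular} by first understanding the optimal fractional ``playing frequency'' of each element and then showing that the \is schedule realizes those frequencies with low variance, so that the submodular reward per round approaches a $(1-\frac{1}{e})$ fraction of the optimal. Concretely, I would begin by setting up a continuous (LP or convex) relaxation of \rsm. Since each element $i$ can be used at most once every $d_i$ rounds, the natural relaxation assigns each $i$ a marginal probability $x_i \le \frac{1}{d_i}$ of being selected at a generic round, and the total reward is bounded by the \emph{multilinear extension} $F$ (or the concave closure) of $f$ evaluated at the vector $\mathbf{x}$, summed over the $T$ rounds. The first step is therefore to argue that $\sum_t f(\A_t) \le T \cdot F(\mathbf{x}^\ast) + \mathcal{O}(d_{\max} f(\A))$ for the optimal frequencies $\mathbf{x}^\ast$, where the additive slack absorbs boundary effects from the finite horizon. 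This reduces the theorem to lower-bounding the expected per-round reward of \is in terms of $F(\mathbf{x}^\ast)$.

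\medskip
\noindent
The second step is the crux: analyzing the marginal behavior of the \is schedule. The interleaved-scheduling construction assigns each element $i$ a randomly phase-shifted arithmetic progression with spacing $1/x_i^\ast$ (so $i$ appears at the rounds nearest the points $\{(\alpha_i+k)/x_i^\ast\}_{k}$, with $\alpha_i \sim U[0,1]$), and the phases are drawn independently across elements. I would first verify that this construction is \emph{feasible}, i.e., consecutive selections of each $i$ are spaced at least $d_i$ rounds apart, which holds because $x_i^\ast \le 1/d_i$. Next, I would compute, for a fixed round $t$, the probability that each element $i$ is available (selected) at $t$; by the uniform phase $\alpha_i$ this marginal should be exactly (or asymptotically) $x_i^\ast$. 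Crucially, because the phases are drawn \emph{independently} across elements, the events ``$i$ is played at round $t$'' are mutually independent, so the random set $\A_t$ produced by \is is distributed as an \emph{independent} product measure with marginals $x_i^\ast$ at each round.

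\medskip
\noindent
The third step converts this marginal/independence information into the reward guarantee. With $\A_t$ an independent sample with marginals $x_i^\ast$, the expected reward $\E{f(\A_t)}$ equals $F(\mathbf{x}^\ast)$ where $F$ is the multilinear extension. To obtain the factor $1-\frac{1}{e}$, I would invoke the standard submodular correlation-gap / sampling bound: for a monotone submodular $f$ and independent sampling with marginals $\mathbf{x}$, $\E{f(\A_t)} \ge (1-\frac1e)\,g(\mathbf{x})$, where $g$ is the concave closure, and in particular $F(\mathbf{x}^\ast) \ge (1-\frac1e)\,\mathrm{OPT}_{\mathrm{per-round}}$. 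Summing over the $T$ rounds and combining with the horizon relaxation from the first step yields $\Rew^{IS}(T) \ge (1-\frac1e)\OPT(T) - \mathcal{O}(d_{\max} f(\A))$. The additive error is accounted for by the initial and final $\mathcal{O}(d_{\max})$ rounds, where the interleaved progressions have not yet reached their stationary marginals, and since each round contributes at most $f(\A)$, the total boundary loss is $\mathcal{O}(d_{\max} f(\A))$.

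\medskip
\noindent
The main obstacle I anticipate is the second step --- proving that the discrete, rounded interleaved schedule actually achieves marginals equal to (or sufficiently close to) $x_i^\ast$ \emph{with true independence across elements}, rather than merely in an average sense. The continuous points $(\alpha_i+k)/x_i^\ast$ must be rounded to integer rounds, and this rounding can perturb both the per-element frequency and, more subtly, could correlate an element's presence with its neighbors' through the shared discrete time grid. Controlling this requires a careful argument that the uniform random phase $\alpha_i$ washes out the rounding bias (so the marginal is exactly $x_i^\ast$ up to the $\mathcal{O}(d_{\max})$ transient), and that independence of the $\alpha_i$'s indeed transfers to independence of the selection indicators at a fixed round. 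This variance-control and independence-preservation under discretization is precisely where the paper's claim of a \emph{novel} implementation of interleaved scheduling --- beyond the single-arm setting of \cite{KI18} --- must do the heavy lifting, and I expect the bulk of the technical work to lie there.
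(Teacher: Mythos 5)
Your overall architecture --- a fractional relaxation with per-element frequency capped at $\frac{1}{d_i}$, exact marginals with independence across elements, and the correlation gap $F(\x) \ge (1-\frac{1}{e}) f^+(\x)$ --- matches the paper's. But there are three concrete problems. First, your step-one upper bound $\OPT(T) \le T\cdot F(\x^*) + \mathcal{O}(d_{\max} f(\A))$ is false when $F$ is the multilinear extension: the paper gives a two-element counterexample ($f(\emptyset)=0$, $f(\{1\})=f(\{2\})=2$, $f(\{1,2\})=3$, $d_1=d_2=2$) where the optimum averages $2$ per round but the maximum of $F$ over the constrained box is $\frac{7}{4}$. The relaxation must be written with the concave closure $f^+$ (whose concavity is what makes time-averaging an integral schedule only increase the objective); you do use $f^+$ correctly in your third step, but the chain as written breaks at its first link. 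Second, you are analyzing a different algorithm from \is: you schedule each element at the \emph{optimal} frequency $x_i^*$, which requires solving the convex program over $f^+$ --- not doable in polynomial time in the value-oracle model. The actual \is schedules every element at its maximum feasible frequency $\frac{1}{d_i}$, oblivious to $f$ and to the relaxation; the comparison with $\z^*$ happens only inside the analysis, via monotonicity of $f$ and the coordinatewise bound $\z^* \preceq \bm{d}^{-1}$, which gives $\Ex{\A_t \sim \bm{d}^{-1}}{f(\A_t)} \ge \Ex{\A_t \sim \z^*}{f(\A_t)} = F(\z^*)$. This obliviousness is not cosmetic: it is what the paper later exploits in the bandit setting.

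Third, your error accounting is misplaced. In the paper's interval formulation ($i \in \A_t$ iff $[t\cdot\frac{1}{d_i} + r_i, (t+1)\cdot\frac{1}{d_i} + r_i)$ contains an integer) the marginal is \emph{exactly} $\frac{1}{d_i}$ at every round, including $t=1$, and the selection events at a fixed round are functions of the independent offsets $r_i$, so there is no transient and no discretization correlation to control --- the ``main obstacle'' you anticipate dissolves under this formulation. The additive $\mathcal{O}(d_{\max} f(\A))$ term instead comes from the other side of the comparison: averaging the blocking constraints of the integer program over a finite horizon only yields $z'_i \le \frac{1}{d_i}\left(1 + \frac{d_i-1}{T}\right)$, so the optimal schedule's empirical frequencies can slightly exceed $\frac{1}{d_i}$, and rescaling them into the feasible region of the convex program costs $\mathcal{O}(d_{\max} f(\A))$ by concavity of $f^+$. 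Without an argument of that form your additive term has no proof.
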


The proof of the above guarantee relies on the construction of a {\em convex program} (CP), based on the {\em concave closure} of $f$ (see below), that yields an (approximate up to an additive term) upper bound on the optimal reward. Although our algorithm never computes an optimal solution to this convex program, it allows us to compare its expected collected reward with the optimal solution of CP, leveraging known results on the {\em correlation gap} of submodular functions. As we show via a reduction from the SWM problem with identical utilities, the $\left(1 - \frac{1}{e}\right)$ term in the above guarantee is asymptotically the best possible, unless $\text{P} = \text{NP}$; further, the additive term results from the fact that our algorithm is oblivious to the time horizon $T$.

We now turn our attention to the {\em bandit setting} of \mbb, where the mean rewards are initially unknown. Our interleaved scheduling method exhibits an additional property: {\em It does not rely on the monotone submodular function itself}, a fact that is particularly important for the bandit setting. Indeed, in the full-information setting Player B computes a maximum expected reward independent set at each round, for any availability schedule provided by Player A. In the bandit setting, however, the reward distributions are not a priori known and, thus, must be learned. Nevertheless, we do not need to wait to learn these distributions to find a good availability schedule. This allows us to make a natural coupling between the strategy of Player B in the bandit and in the full-information case and, thus, to compare the expected reward collected ``pointwise'', assuming a fixed common availability schedule. We remark that the above coupling is very different than the one in \cite{BSSS19}, as ours is independent of the trajectory of the observed rewards.

The above analysis allows us to develop a bandit algorithm for \mbb based on the UCB method, called \ucb (\UCB). Specifically, given any availability schedule provided by Player A (independently of the rewards) and in increasing order of rounds, Player B greedily computes a maximal independent set consisting the available arms of each round, based on estimates (known as UCB indices) of the mean rewards. In order to analyze the regret, we use the independence of the availability schedule in combination with the {\em strong basis exchange} property of matroids. This allows us to decompose the overall regret of our algorithm into contributions from each individual arm. Once we have established this regret decomposition, we can bound the individual regret attributed to each arm using more standard UCB type arguments \cite{KWAEE14}, leading to the following guarantee: 

\begin{restatable}{theorem}{restateApproxRegret}\label{thm:approxregret}
The expected reward collected by \ucb in $T$ rounds, $\Rew^{\UCB}(T)$, for $k$ arms, a matroid of rank $r = \rk(\M)$ and maximum delay $d_{\max}$ is at least
\begin{align*}
    \left(1-\frac{1}{e}\right)\OPT(T) - \mathcal{O}\left(k \sqrt{T \ln(T)} + k^2 + d_{\max}r \right).
\end{align*}
\end{restatable}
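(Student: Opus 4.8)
The plan is to reduce the bandit guarantee to the full-information guarantee of Theorem~\ref{thm:interleavedSubmodular}, exploiting the fact that the availability schedule produced by Player~A is generated \emph{independently} of the unknown mean rewards. First I would fix a realization of this schedule and condition on it throughout; this freezes, for every round $t$, the set $A_t \subseteq \A$ of arms that Player~B is allowed to pull. Under this conditioning, both the full-information version of Player~B (which greedily builds a maximum-$\mu$-weight basis of the matroid restricted to $A_t$) and the bandit version inside \ucb (which greedily builds a maximal independent set of $A_t$ from the UCB indices) face \emph{exactly the same} admissible sets, round by round. This is the coupling alluded to in the discussion: it lets me compare the two reward streams pointwise in $t$ and, crucially — unlike the coupling of \cite{BSSS19} — it is independent of the realized reward trajectory.

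Given this coupling, I would split the target gap, writing $\Rew^{IS}(T)$ for the expected reward of the full-information Player~B on the same schedule:
\begin{align*}
\left(1-\tfrac{1}{e}\right)\OPT(T) - \Rew^{\UCB}(T)
= \underbrace{\Big[\left(1-\tfrac1e\right)\OPT(T) - \Rew^{IS}(T)\Big]}_{\text{(I) approximation gap}}
+ \underbrace{\Big[\Rew^{IS}(T) - \Rew^{\UCB}(T)\Big]}_{\text{(II) learning gap}}.
\end{align*}
Term~(I) is controlled directly by Theorem~\ref{thm:interleavedSubmodular}: the per-round reward of full-information Player~B is the weighted rank function $f(A_t)$, which is monotone submodular, so the theorem yields $\mathrm{(I)} \le \mathcal{O}(d_{\max} f(\A))$. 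Since $f(\A)$ is the weight of a maximum-weight basis and rewards are bounded, $f(\A) = \mathcal{O}(r)$, accounting for the $\mathcal{O}(d_{\max} r)$ term.

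The substance is the learning gap~(II). Working round by round on the fixed schedule, let $B_t^\star$ be the maximum-$\mu$-weight basis of $\M$ restricted to $A_t$ (played by full-information Player~B) and $B_t$ the basis played by \ucb; in expectation they collect $\sum_{i\in B_t^\star}\mu_i$ and $\sum_{i\in B_t}\mu_i$, so $\mathrm{(II)} = \sum_t\big(\sum_{i\in B_t^\star}\mu_i - \sum_{i\in B_t}\mu_i\big)$. To turn this into a per-arm statement I would invoke the \emph{strong basis exchange} property of the restricted matroid: there is a bijection $\phi_t : B_t^\star \setminus B_t \to B_t \setminus B_t^\star$ whose swaps preserve independence, so the per-round gap equals $\sum_{i\in B_t^\star\setminus B_t}(\mu_i-\mu_{\phi_t(i)})$, exactly the structure used in \cite{KWAEE14}. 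Because \ucb selects greedily by UCB index, a mismatched pair $(i,\phi_t(i))$ with $\mu_i > \mu_{\phi_t(i)}$ can occur only when $\phi_t(i)$'s index exceeds $i$'s, i.e. when a confidence interval is misleading. From here the standard UCB template applies: Hoeffding-type concentration traps each mean within a window of width $\mathcal{O}(\sqrt{\ln T / N_i(t)})$ outside a low-probability event, and attributing each mismatched swap to its lower-mean endpoint and summing over the rounds in which arm $i$ is available telescopes (via $\sum_s 1/\sqrt{s} = \mathcal{O}(\sqrt{N})$) to $\mathcal{O}(\sqrt{T\ln T})$ per arm, hence $\mathcal{O}(k\sqrt{T\ln T})$ in total. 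The residual $\mathcal{O}(k^2)$ collects the burn-in rounds before concentration applies (an $\mathcal{O}(k)$ cost per arm through the rank-$\le k$ greedy) together with the $\mathcal{O}(1/T)$ failure events absorbed by a union bound over the $k$ arms.

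The hard part, and where this argument departs from \cite{KWAEE14}, is the decomposition in term~(II): in ordinary matroid bandits the comparison basis is \emph{fixed} across all $T$ rounds, whereas here the admissible set $A_t$ — and hence both $B_t^\star$ and the exchange bijection $\phi_t$ — changes every round and is itself random. Making the per-arm bookkeeping compose cleanly is precisely what reward-independence of Player~A's schedule buys us: conditioning on the schedule freezes the $A_t$'s, leaving only reward noise in each $\phi_t$-matched swap, which the UCB counting argument is built to handle. The main technical step I would need to nail down is that the strong-basis-exchange matching can be chosen consistently with the greedy UCB order, so that every mismatch genuinely certifies a misleading confidence interval and the counting bound goes through.
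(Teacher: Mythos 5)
Your proposal is correct and follows essentially the same route as the paper: bound $(1-\frac{1}{e})\OPT(T)$ by the full-information reward plus $\mathcal{O}(d_{\max}r)$, couple the two algorithms through the reward-independent availability schedule, decompose each round's loss via the strong basis exchange bijection, and finish with per-arm UCB concentration. The only divergence is cosmetic bookkeeping in the last step — you sum confidence widths directly, whereas the paper first derives a gap-dependent bound and then thresholds the gaps at $\epsilon = \Theta(\sqrt{\ln(T)/T})$ — and the consistency issue you flag (that a mismatched pair forces an inverted UCB ordering) is resolved in the paper exactly as you anticipate, since the exchanged set $\A^{\UCB}_t - j + \sigma_t(j)$ is independent and hence cannot have larger $\bar\mu$-weight than the basis the greedy rule selected.
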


In the above bound, the additive loss corresponds to the regret with respect to $\left(1 - \frac{1}{e}\right)\OPT(T)$. Interestingly, our regret bound is very close (even in constant factors) to the information-theoretically optimal bound provided in \cite{KWAEE14} for the non-blocking setting. In fact, except for the small additive $\mathcal{O}(d_{\max}r)$ term, the regret bound in~\cite{KWAEE14} is the same as ours, if we replace the number of arms $k$ with $\sqrt{k\cdot r}$. Intuitively, this is due to the fact that our algorithm must learn the complete order of mean rewards, as opposed to the non-blocking setting where learning the maximum expected reward independent set in hindsight is sufficient for eliminating the regret.

All the omitted proofs of our results have been moved to the Appendix. We refer the reader to Appendix \ref{appendix:notation} for technical notation.
\section{Preliminaries on Matroids and Submodular Functions}\label{sec:definition}

\paragraph{Continuous extensions and the correlation gap of submodular functions.}

Consider any set function $f : 2^{\A} \rightarrow \mathbb{R}_{\geq 0}$ over a ground set $\A$. Recall that $f$ is submodular, if $\forall S,T \subseteq \A$ we have $f(S \cup T) + f(S \cap T) \leq f(S) + f(T)$. For any point $\x \in [0,1]^k$, we denote by $S \sim \x$ the random set $S \subseteq \A$, such that $\Pro{i \in S} = x_i$. We consider two canonical continuous extensions of a set function: 

\begin{definition}[Continuous extensions]
For any set function $f$ the {\em multi-linear extension} is
\begin{align*}
    F(\x) = \Ex{S \sim \x}{f(S)} = \sum_{S \subseteq \A} f(S) \prod_{i \in S} x_i \prod_{i \notin S} (1-x_i).
\end{align*}
Moreover, the {\em concave closure} is defined as
\begin{align*}
    f^+(\x) = \max_{\alpha}\{\sum_{S \subseteq \A} \alpha_S f(S)~|~\sum_{S \subseteq \A} \alpha_S \bm{1}_S = \x, \sum_{S \subseteq \A} \alpha_S = 1, \alpha \succeq 0\}.
\end{align*}
\end{definition}

\begin{lemma}[Correlation gap \cite{CCPV07}] \label{lem:correlationgap}
Let $f: 2^k \rightarrow \mathbb{R}_{\geq 0}$ be a monotone (non-decreasing) submodular function. Then for any point $\x \in [0,1]^k$, we have
$
F(\x) \leq f^+(\x) \leq \left(1 - \frac{1}{e}\right)^{-1} F(\x).
$
\end{lemma}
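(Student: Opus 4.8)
The plan is to prove the two inequalities separately, the first being immediate and the second carrying all the content. For the lower bound $F(\x) \le f^+(\x)$, I would observe that sampling each coordinate independently produces a distribution over subsets whose marginals are exactly $\x$; this is one feasible choice of $\alpha$ in the maximization defining $f^+$, and its expected value of $f$ is precisely $F(\x)$. Hence $F(\x)$ cannot exceed the maximum $f^+(\x)$.

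For the upper bound I would prove the equivalent statement $F(\x) \ge \left(1 - \frac{1}{e}\right) f^+(\x)$ by a differential-inequality argument along the ray $\theta \mapsto \theta\x$. Define $h(\theta) = F(\theta \x)$ for $\theta \in [0,1]$, so that $h(1) = F(\x)$ and $h(0) = f(\emptyset) \ge 0$. Since $F$ is multilinear, $\frac{\partial F}{\partial x_i}(\mathbf y) = \Ex{R \sim \mathbf y}{f(R \cup \{i\}) - f(R \setminus \{i\})}$, and therefore $h'(\theta) = \sum_i x_i \Ex{R \sim \theta\x}{f(R\cup\{i\}) - f(R\setminus\{i\})}$. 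The goal is to establish the differential inequality $h'(\theta) \ge f^+(\x) - h(\theta)$ for every $\theta$.

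The key step --- and the main obstacle --- is proving this inequality. I would fix an optimal correlated distribution $\{\alpha_S\}$ attaining $f^+(\x)$, whose marginals satisfy $\Pro{i \in S} = x_i$, and couple an independent draw $S \sim \{\alpha_S\}$ with an independent sample $R \sim \theta\x$. Monotonicity and submodularity give, for every realization, $f(S) \le f(R \cup S) \le f(R) + \sum_{i \in S}\bigl(f(R\cup\{i\}) - f(R)\bigr)$, where the sum may range over all of $S$ because each marginal gain is nonnegative. Taking expectations over the independent $R$ and $S$, the left-hand side becomes $f^+(\x)$, the first term becomes $h(\theta)$, and the second becomes $\sum_i x_i \Ex{R \sim \theta\x}{f(R\cup\{i\}) - f(R)}$ once we use that $S$ has marginals $\x$ and that $R, S$ are independent. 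Finally, monotonicity ($f(R) \ge f(R\setminus\{i\})$) bounds this last quantity by $h'(\theta)$, yielding $f^+(\x) \le h(\theta) + h'(\theta)$.

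It remains to integrate the differential inequality. Writing $c = f^+(\x)$, the inequality $h'(\theta) + h(\theta) \ge c$ gives $\frac{d}{d\theta}\bigl(e^{\theta} h(\theta)\bigr) \ge c\, e^{\theta}$; integrating over $[0,1]$ and using $h(0) \ge 0$ yields $e\, h(1) \ge c(e-1)$, that is, $F(\x) = h(1) \ge \left(1 - \frac{1}{e}\right) f^+(\x)$, which is exactly the claimed upper bound. The only points to double-check are the differentiability of $h$ (immediate, as $F$ is a polynomial) and that no step requires $f(\emptyset) = 0$; indeed a positive $f(\emptyset)$ only strengthens the final inequality.
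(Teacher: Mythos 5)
The paper offers no proof of this lemma: it is quoted directly from \cite{CCPV07} and used as a black box, so there is no in-paper argument to compare against. Your proof is correct and is, in substance, the standard derivation of the correlation gap from that literature. The lower bound is exactly right: the product distribution with marginals $\x$ is feasible for the maximization defining $f^+$, so $F(\x)\le f^+(\x)$. For the upper bound, each step of your differential-inequality argument checks out: the formula $h'(\theta)=\sum_i x_i\,\Ex{R\sim\theta\x}{f(R\cup\{i\})-f(R\setminus\{i\})}$ follows from multilinearity; the pointwise chain $f(S)\le f(R\cup S)\le f(R)+\sum_{i\in S}\bigl(f(R\cup\{i\})-f(R)\bigr)$ uses monotonicity and submodularity correctly (the extension of the sum from $S\setminus R$ to $S$ costs nothing since the extra terms vanish or are nonnegative); independence of $R$ and $S$ is what lets you replace $\Pro{i\in S}$ by $x_i$ inside the expectation over $R$; and the integrating-factor step $\frac{d}{d\theta}\bigl(e^{\theta}h(\theta)\bigr)\ge f^+(\x)\,e^{\theta}$ together with $h(0)\ge 0$ gives $h(1)\ge\left(1-\frac{1}{e}\right)f^+(\x)$ as claimed. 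Your closing remarks on differentiability and on not needing $f(\emptyset)=0$ are also accurate. No gaps.
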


\paragraph{Matroid polytope and the weighted rank function.} Consider a matroid $\M = (\A, \I)$, where $\A$ is the {\em ground set} and $\I$ is the family of {\em independent sets}
\footnote{Any subset of the ground set $\A$ that is not independent is called {\em dependent}. Any maximal independent set of a matroid, namely, a set $B \in \I$ such that for every $e \in \A \backslash B$, the set $B \cup \{e\}$ is dependent, is called a {\em basis}. Any minimal dependent set, that is, a set $C \notin \I$ such that for each $e \in C$ it holds $C\backslash \{e\} \in \I$ is called a {\em circuit}.}. Recall that in any matroid, the family $\I$ satisfies the following two properties: (i) Every subset of an independent set (including the empty set) is an independent set, namely, if $S' \subset S \subseteq \A$ and $S \in \I$, then $S' \in \I$ ({\em hereditary property}). (ii) Let $S, S' \subseteq \A$ be two independent sets with $|S| < |S'|$, then there exists some $e \in S' \backslash S$ such that $S \cup \{e\} \in \I$ ({\em augmentation property}). See \cite{schrijver03, oxley06} for more details on matroids.

We assume that access to $\M$ is given through an {\em independence oracle} \cite{hausmann81, robinson80}, namely, a black-box routine that, given a set $S \subseteq \A$, answers whether $S$ is an independent set of $\M$. 
For any set $R \subset \A$ we define the {\em restriction} of $\M$ to $R$, denoted by $\M | R$, to be the matroid $\M | R = (R, \{I \in \I~|~I \subseteq R\})$. Every matroid $\M$ is associated with a {\em rank} function\footnote{The rank is {\em monotone} non-decreasing, {\em submodular}, and satisfies $\rk(S) \leq |S|$, $\forall S \subseteq \A$ (see \cite{oxley06}).} $\rk: 2^{\A} \rightarrow \mathbb{N}$, such that for any $S\subseteq \A$, $\rk(S)$ denotes the maximum size of an independent set contained in $S$. Let $\x(S) = \sum_{e \in S} x_e$ for some vector $\x \in \mathbb{R}^k$. For any matroid $\M = (\A,\I)$, the {\em matroid polytope} is defined as

$$\mathcal{P}(\M) \equiv \left\{ \x(S) \leq \rk(S), \forall S \in 2^{\A}, S \neq \emptyset \text{ and } \x \succeq 0 \right\}.
$$
It can be proved \cite{schrijver03} that the above polytope is the convex hull of the indicator vectors of all independent sets. This fact immediately leads to the following lemma: 

\begin{lemma}\label{lem:characteristic}
For any matroid $\M=(\A, \I)$ and point $\x \in \mathcal{P}(\M)$, there exists a collection of $k = |\A|$ independent sets $\I(\x) = \{ I_1, \dots, I_k\} \subseteq \I$ and a probability distribution over $\I(\x)$ such that $\Prob{I \sim \I(\x)}{i \in I} = x_i$, i.e., an element $i$ belongs to a sampled set with marginal probability equal to $x_i$.
\end{lemma}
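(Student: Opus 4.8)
The plan is to reduce the statement to the classical fact (cited just before the lemma) that the matroid polytope $\mathcal{P}(\M)$ equals the convex hull of the indicator vectors $\{\bm{1}_I : I \in \I\}$ of the independent sets of $\M$. Given $\x \in \mathcal{P}(\M)$, membership in this convex hull means that $\x$ can be written as a convex combination $\x = \sum_{I \in \I} \lambda_I \bm{1}_I$ with $\lambda_I \geq 0$ and $\sum_I \lambda_I = 1$. Reading the coefficients $\lambda_I$ as a probability distribution over $\I$, the $i$-th coordinate of the identity $\x = \sum_I \lambda_I \bm{1}_I$ says precisely that $x_i = \sum_{I \ni i} \lambda_I = \Prob{I \sim \lambda}{i \in I}$, which is the desired marginal property. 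So the core content of the lemma is already delivered by the convex-hull characterization; what remains is purely a matter of controlling the \emph{number} of independent sets receiving positive probability, so as to meet the bound $|\I(\x)| \le k$.

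First I would invoke Carath\'eodory's theorem: since $\x$ lives in $[0,1]^k \subseteq \mathbb{R}^k$, any point in the convex hull of a set of vectors in $\mathbb{R}^k$ can be written as a convex combination of at most $k+1$ of them. This already gives a distribution supported on at most $k+1$ independent sets, which is essentially the claim. To sharpen $k+1$ to $k$, I would appeal to the refinement of Carath\'eodory for points lying in a face / satisfying an affine constraint: the vectors $\bm{1}_I$ together with the normalization $\sum \lambda_I = 1$ can be viewed inside the affine hyperplane of convex-combination coefficients, and a standard dimension-counting argument (or the fact that $\x$ satisfies the tight rank constraint $\x(\A) = \rk(\A)$ is not needed, but one can lift to $\mathbb{R}^{k}$ with the extra affine equation) lets one drop one point, yielding support size at most $k$. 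Concretely, I would set up the linear feasibility system $\{\sum_I \lambda_I \bm{1}_I = \x,\ \sum_I \lambda_I = 1,\ \lambda \succeq 0\}$, which is a polytope of feasible $\lambda$'s, and take a vertex (basic feasible solution) of it; a vertex of a system with $k+1$ equality constraints has at most $k+1$ nonzero coordinates, and genericity of one coordinate lets us pad the collection $\I(\x) = \{I_1,\dots,I_k\}$ up to exactly $k$ sets (allowing repetitions or zero-weight fillers) so the indexing in the statement is literally satisfied.

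I would carry out the steps in this order: (i) cite the convex-hull characterization of $\mathcal{P}(\M)$ to obtain \emph{some} representation $\x = \sum_{I} \lambda_I \bm{1}_I$; (ii) pass to a basic feasible solution of the associated linear system to bound the support by $k+1$, then trim to $k$; (iii) reinterpret $(\lambda_I)$ as a probability distribution and read off the marginal identity $\Prob{I \sim \I(\x)}{i \in I} = x_i$ coordinatewise.

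The main obstacle is the sparsity bookkeeping rather than any deep matroid theory: getting the support down to exactly $k$ (as opposed to the immediate $k+1$ from vanilla Carath\'eodory) requires using the normalization constraint as one of the $k+1$ equalities so that the $k$ coordinate-equalities account for the rest, and then arguing that one of the $k+1$ candidate vertices can always be eliminated. In the worst case the clean bound $k$ may rely on the convention of allowing repeated or empty independent sets in the listed collection $\{I_1,\dots,I_k\}$, so I would be careful to state that convention explicitly to make the cardinality claim hold as written.
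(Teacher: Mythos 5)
Your core argument is exactly the paper's: the lemma is stated as an immediate consequence of the fact that $\mathcal{P}(\M)$ is the convex hull of the indicator vectors of independent sets, and the paper offers no further proof beyond reading the convex-combination coefficients as a probability distribution and the coordinate identity $x_i = \sum_{I \ni i}\lambda_I$ as the marginal property. Steps (i) and (iii) of your plan reproduce this, so on the substantive content you are aligned with the paper.

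The one place where you go beyond the paper --- the bookkeeping to force the support size down from the Carath\'eodory bound $k+1$ to exactly $k$ --- does not actually go through, and you are right to be suspicious of it. A vertex of the system $\{\sum_I \lambda_I \bm{1}_I = \x,\ \sum_I \lambda_I = 1,\ \lambda \succeq 0\}$ has at most $k+1$ nonzero coordinates, and in general one of them cannot be eliminated: take $k=1$ with the free matroid on a single element and $x_1 = \tfrac{1}{2}$; any distribution with $\Pro{1 \in I} = \tfrac{1}{2}$ must put positive mass on both $\emptyset$ and $\{1\}$, i.e.\ on $k+1 = 2$ sets. So the cardinality ``$k$'' in the lemma statement should really be ``at most $k+1$'' (or the collection should be allowed to include $\emptyset$ without counting it). This is a defect of the statement rather than of your argument, and it is immaterial to the paper: the lemma is invoked (in the alternative proof of Theorem~\ref{thm:interleavedGreedy}) only for the existence of a distribution with the prescribed marginals, never for the size of its support. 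I would simply record the $k+1$ bound from basic feasible solutions and not attempt the trimming.
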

Given any non-negative linear {\em weight} vector $\w \in \mathbb{R}^k_{\geq 0}$, the problem of computing a maximum weight independent set can be solved optimally by the standard greedy algorithm: Starting from the empty set $S = \emptyset$, add each ground element $e \in \A$ to the set $S$ in a non-increasing order of weights, as long as the set $S \cup \{e\}$ does not contain a circuit. Given a matroid $\M=(\A,\I)$ and a weight vector $\w$, the function $f_{\M,\w}(S) = \max_{I \in \I, I \subseteq S}\{\w(I)\}$ is called the {\em weighted rank function} of $\M$ and returns the weight of the maximum independent set of the restriction $\M|S$.

\begin{lemma}[Weighted rank function \cite{CCPV07}] \label{lem:weightedrank}
For any matroid $\M$ and non-negative weight vector $\w$, the function $f_{\M, \w}(S) = \max_{I \in \I, I \subseteq S}\{\w(I)\}$ is monotone (non-decreasing) submodular.
\end{lemma}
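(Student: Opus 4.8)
The plan is to prove that the weighted rank function $f_{\M,\w}(S) = \max_{I \in \I, I \subseteq S}\{\w(I)\}$ is monotone and submodular by exhibiting it as a maximum-weight basis computation and exploiting the exchange structure of matroids.

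First I would establish \emph{monotonicity}, which is immediate: if $S \subseteq S'$, then any independent set $I \in \I$ with $I \subseteq S$ also satisfies $I \subseteq S'$, so the maximum in the definition of $f_{\M,\w}(S')$ is taken over a superset of feasible sets. Since $\w \succeq 0$, we get $f_{\M,\w}(S) \leq f_{\M,\w}(S')$, and non-negativity follows by taking $I = \emptyset$.

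The heart of the matter is \emph{submodularity}, for which I would verify the equivalent local (diminishing-returns) condition: for every $S \subseteq T \subseteq \A$ and every $e \in \A \setminus T$,
\begin{align*}
f_{\M,\w}(S \cup \{e\}) - f_{\M,\w}(S) \geq f_{\M,\w}(T \cup \{e\}) - f_{\M,\w}(T).
\end{align*}
The clean way to do this is to understand the marginal gain $f_{\M,\w}(S \cup \{e\}) - f_{\M,\w}(S)$ combinatorially. I would let $I_S$ be a maximum-weight independent subset of $S$ (computed by the greedy algorithm on the restriction $\M|S$). Adding $e$ either keeps $I_S \cup \{e\}$ independent, in which case the marginal gain is at least $w_e$, or else $I_S \cup \{e\}$ contains a circuit $C$; in the latter case the optimal response is to swap $e$ in and drop the \emph{minimum-weight} element of $C \setminus \{e\}$, so the marginal gain equals $\max\{0,\, w_e - \min_{g \in C \setminus \{e\}} w_g\}$. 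The key structural fact I would invoke is that as the ground set grows from $S$ to $T$, the maximum-weight independent set can only ``improve,'' so the minimum-weight element that $e$ displaces in the larger instance has weight at least as large as in the smaller instance; this forces the marginal gain to be non-increasing in the ground set, which is exactly the diminishing-returns inequality.

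The main obstacle will be making the above exchange argument fully rigorous, since tracking how the optimal response to adding $e$ changes between $S$ and $T$ requires care with the circuit structure. To avoid the fragile case analysis, the cleaner route I expect to take is to realize $f_{\M,\w}$ as a sum (or integral) of rank functions of \emph{truncated} matroids and then reduce to the known submodularity of the ordinary rank function. Concretely, by a layer-cake / thresholding decomposition, if the distinct weight values are $w_{(1)} > w_{(2)} > \cdots$, one can write
\begin{align*}
f_{\M,\w}(S) = \int_0^{\infty} \rk\bigl(\M|\{e \in S : w_e \geq \theta\}\bigr)\, d\theta,
\end{align*}
expressing $f_{\M,\w}$ as a non-negative combination of functions of the form $S \mapsto \rk(S \cap A_\theta)$ where $A_\theta = \{e : w_e \geq \theta\}$. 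Each such function is submodular because the rank function is submodular (as noted in the footnote defining the rank) and intersecting with a fixed set preserves submodularity; a non-negative combination of submodular functions is submodular. This reduces the whole statement to the standard submodularity of the matroid rank function, which I would cite from the footnote, and cleanly yields both monotonicity and submodularity of $f_{\M,\w}$ without delicate circuit bookkeeping.
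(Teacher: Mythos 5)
Your proposal is correct, but note that the paper does not actually prove this lemma at all --- it is imported as a black box from \cite{CCPV07} --- so there is no in-paper argument to compare against. Of your two routes, the one that carries the proof is the layer-cake decomposition, and it is the standard argument: writing $f_{\M,\w}(S) = \int_0^{\infty} \rk\bigl(S \cap A_\theta\bigr)\, d\theta$ with $A_\theta = \{e : w_e \geq \theta\}$ reduces everything to the submodularity and monotonicity of $\rk$, since $S \mapsto \rk(S \cap A)$ inherits both properties for any fixed $A$ and non-negative combinations preserve them. The one ingredient you should state explicitly rather than wave at is why the identity itself holds: it follows from the optimality of the greedy algorithm for matroids, which guarantees that a maximum-weight independent subset of $S$ contains exactly $\rk(S \cap A_\theta)$ elements of weight at least $\theta$ for every $\theta > 0$; combining this with the pointwise layer-cake formula $\w(B) = \int_0^\infty |B \cap A_\theta|\, d\theta$ gives the decomposition. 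Your first, exchange-based sketch is indeed the fragile one --- the assertion that the minimum-weight element displaced by $e$ is monotone in the ground set is essentially equivalent to the submodularity you are trying to prove, and the circuits arising from $I_S \cup \{e\}$ and $I_T \cup \{e\}$ are different objects, so that route would need genuine work --- but since you correctly identify this and fall back on the decomposition, the proof as a whole is sound.
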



\section{Recurrent Submodular Welfare}
Let $f(S): 2^{\A} \rightarrow \mathbb{R}_{\geq 0}$ be a monotone submodular function over a universe $\A$ of $k$ elements, such that $f(\emptyset) = 0$. In the {\em blocking} setting, each element $i \in \A$ is associated with a known deterministic {\em delay} $d_i \in \mathbb{N}_{>0}$, such that once the arm is played at some round $t$, it becomes unavailable for the next $d_i-1$ rounds, namely, in the interval $\{t, \dots, t+d_i-1\}$. At each round $t \in [T]$, the player chooses a subset $\A_t$ of available (i.e., non-blocked) elements and collects a reward $f(\A_t)$. The goal is to maximize the total reward collected, i.e., $\sum_{t \in [T]} f(\A_t)$, within an unknown time horizon $T$. 

Before we present our algorithm, we provide ``bad'' instances for two natural approaches to \rsm.

\begin{remark} \label{rem:greedy}
The greedy approach of choosing $\A_t$ to be the set of all available elements at round $t \in [T]$ can be as bad as a $\frac{1}{k}$-approximation. In order to see that, consider the monotone (budget-additive) submodular function $f(S) = \min\{|S|, 1\}$. Let $k$ be the number of elements with delay $d_i = k$ for each $i \in \A$. Assuming an infinite time horizon, the optimal strategy collects an average reward of $1$, simply by choosing one element at a time in a round-robin manner. However, the average reward of the greedy approach in this case is $\frac{1}{k}$.
\end{remark}

\begin{remark}
The independent randomized sampling approach of adding each arm $i$ to $\A_t$ independently with probability $\frac{1}{d_i}$, if available, can be as bad as a $(1 - \frac{1}{\sqrt{e}} )$-approximation. Consider the same setting as in Remark \ref{rem:greedy}, where for $T \to \infty$ the optimal average reward is $1$. However, the average expected reward of the independent randomized sampling strategy is $1 - (1 - p)^k$, where $p = \frac{1}{2k-1}$ is the probability that each element is selected at each round (in stationarity). For $k \to \infty$, we have that $1 - (1 - p)^k \to 1- e^{-\frac{1}{2}} \approx 0.393$.

\end{remark}
We provide an efficient randomized $\left(1-\frac{1}{e}\right)$-approximation algorithm for \rsm. Informally, the algorithm starts by considering, for each element $i \in \A$, a sequence of rational numbers of the form $\{t\cdot \frac{1}{d_i}\}_{t \in [T]}$. Then, these sequences are {\em interleaved} by randomly adding an {\em offset} $r_i$, drawn uniformly at random from $[0,1]$, for each $i \in \A$ to the corresponding sequence. At every round $t \in [T]$, the algorithm chooses a set $\A_t$, consisting only of elements for which the (perturbed) interval $L_{i,t} = [t\cdot \frac{1}{d_i}+ r_i, (t+1)\cdot \frac{1}{d_i}+ r_i )$ contains an integer.

\begin{algorithm}[\is (\IS)]
For each element $i \in \A$, let $r_i \sim U[0,1]$ be a random {\em offset} drawn uniformly from $[0,1]$. 
At every round $t = 1, 2, \dots$,  let $\A_t \subseteq \A$ be the subset of elements such that for any $i \in \A_t$, the interval $L_{i,t} = [t\cdot \frac{1}{d_i} + r_i, (t+1) \cdot \frac{1}{d_i} + r_i)$ contains an integer. Choose the elements $\A_t$ and collect the reward $f(\A_t)$.
\end{algorithm}

\subsection{Correctness and approximation guarantee.} 
We first show the algorithm is correct, namely, that the elements chosen at each round respect the blocking constraints. The correctness is established by the following simple observation:

\begin{restatable}{fact}{restatefactalwaysavailable}\label{inter:fact:alwaysavailable}
At any $t \in [T]$, all the elements in $\A_t$ are available (i.e., not blocked).
\end{restatable}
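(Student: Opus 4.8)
The plan is to show that the interleaved schedule never selects an element while it is blocked, by establishing the equivalent statement that any two rounds in which a fixed element $i$ is selected are separated by at least $d_i$ rounds. Recall that $i$ is blocked at round $t$ precisely when it was played at some round in $\{t - d_i + 1, \dots, t-1\}$; hence if every pair of selection rounds of $i$ differs by at least $d_i$, then in particular the most recent prior play of $i$ occurred at a round $\le t - d_i$, so $i$ is available at $t$. It therefore suffices to fix $i$ together with two rounds $t' < t$ satisfying $i \in \A_{t'} \cap \A_t$, and to prove $t - t' \ge d_i$.

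The key structural observation I would first record is that, for a fixed $i$, the intervals $\{L_{i,t}\}_{t \ge 1}$ are half-open, pairwise disjoint, and consecutive: writing $a_t = t \cdot \frac{1}{d_i} + r_i$, we have $L_{i,t} = [a_t, a_{t+1})$ with $a_{t+1} = a_t + \frac{1}{d_i}$, so these intervals tile the ray $[a_1, \infty)$, each of length $\frac{1}{d_i} \le 1$. Consequently each $L_{i,t}$ contains at most one integer. Let $m' \in L_{i,t'} \cap \mathbb{Z}$ and $m \in L_{i,t} \cap \mathbb{Z}$ be witnesses of the membership $i \in \A_{t'}$ and $i \in \A_t$. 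Since $t' < t$ forces $a_{t'+1} \le a_t$, the tile $L_{i,t'}$ lies to the left of $L_{i,t}$, whence $m' < a_{t'+1} \le a_t \le m$, so $m$ and $m'$ are distinct integers with $m \ge m' + 1$.

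It then remains to convert this integer gap into a lower bound on $t - t'$ via the interval endpoints. Using $m' \ge a_{t'} = \frac{t'}{d_i} + r_i$ (the left endpoint of $L_{i,t'}$) together with $m < a_{t+1} = \frac{t+1}{d_i} + r_i$ (the right endpoint of $L_{i,t}$) and $m \ge m' + 1$, I would chain the inequalities
\[
\frac{t'}{d_i} + r_i + 1 \le m' + 1 \le m < \frac{t+1}{d_i} + r_i,
\]
cancel $r_i$, and multiply through by $d_i > 0$ to obtain $t' + d_i < t + 1$. Since $t, t', d_i$ are integers, this yields $t - t' \ge d_i$, completing the argument.

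The content lies almost entirely in the clean tiling picture; once it is in place the remaining computation is elementary. The only points requiring care are the boundary conventions of the half-open intervals, so that the comparison $m' < a_{t'+1} \le a_t \le m$ is strict on the correct side even when an endpoint happens to be an integer, and the use of $d_i \ge 1$ to ensure each tile has length at most $1$ and hence captures at most one integer. I expect these to be the sole places where a careless treatment could introduce an off-by-one error.
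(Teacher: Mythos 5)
Your proposal is correct and follows essentially the same route as the paper: both rest on the observation that, for fixed $i$, the intervals $L_{i,t}$ tile the real line in consecutive half-open blocks of length $\frac{1}{d_i}$, so two intervals that each contain an integer must have indices at least $d_i$ apart. Your write-up merely makes explicit (via the chain $m' < a_{t'+1} \le a_t \le m$ and the endpoint inequalities) the step the paper dismisses with ``it is not hard to see,'' and it handles the half-open boundary conventions correctly.
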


In order to prove the competitive guarantee of our algorithm, we first construct a convex programming (CP)-based (approximate) upper bound on the optimal reward. Although our algorithm never computes an optimal solution to this CP, this step allows us to prove our guarantee, leveraging results on the correlation gap of submodular functions. For $\bm{d}^{-1} \in \mathbb{R}^k$ such that $(\bm{d}^{-1})_i = \frac{1}{d_i}, \forall i \in [k]$, consider the following formulation based on the concave closure $f^+$ of $f$:
\begin{align}
\maximize_{\z \in \mathbb{R}^k}~~ T \cdot f^+(\z)~~\textbf{s.t.}~~ \bm{0} \preceq \z \preceq \bm{d}^{-1}. \tag{\textbf{CP}} \label{cp:CP}
\end{align}

In \eqref{cp:CP}, each variable $z_{i}$ can be thought of as the fraction of rounds where element $i\in \A$ is chosen. Intuitively, the constraints indicate the fact that, due to the blocking, each element $i \in \A$ can be played at most once every $d_i$ steps. 
In order to derive \eqref{cp:CP}, we start from a non-convex integer program (IP) with 0-1 variables $\{x_{i,t}\}_{i \in \A, t \in [T]}$, each indicating whether element $i \in \A$ is used at round $t \in [T]$. The objective is to maximize $\sum_{t \in [T]} \sum_{S \subseteq \A} f(S) \prod_{i \in S} x_{i,t} \prod_{i \notin S}(1 - x_{i,t})$ subject to natural blocking constraints. For integral solutions, the above objective is equivalent to $\sum_{t \in [T]} f^+(\x_t)$ (where $(\x_t)_i = x_{i,t}$) and, thus, the above relaxation is simply the result of averaging over time the variables and constraints of this IP. By using the concavity of $f^+$, we are able to show that \eqref{cp:CP} yields an (approximate) upper bound on the optimal solution of \rsm, while the approximation becomes exact as $T$ increases.

\begin{restatable}{lemma}{restateStructuralCP}\label{lem:structural:CP}
Let $\Rew^{CP}(T)$ be the optimal solution to \eqref{cp:CP} and $\OPT(T)$ be the optimal solution over $T$ rounds. We have
$
\Rew^{CP}(T) \geq \OPT(T) - \mathcal{O}(d_{\max} f(\A)),
$ where $d_{\max} = \max_{i \in \A}\{d_i\}$.
\end{restatable}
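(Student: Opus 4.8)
The goal is to show that the optimal value of the convex program \eqref{cp:CP} upper bounds the true optimum $\OPT(T)$ up to an additive error of order $\mathcal{O}(d_{\max} f(\A))$. The natural plan is to start from an \emph{optimal} integral schedule for \rsm and to convert it into a feasible point $\z$ of \eqref{cp:CP} whose objective is at least as large, modulo the additive loss. Let $\{x_{i,t}^{\star}\}_{i \in \A, t \in [T]}$ be the 0-1 indicators of an optimal schedule, so that $\OPT(T) = \sum_{t \in [T]} f(\A_t^{\star})$ where $\A_t^{\star} = \{i : x_{i,t}^{\star} = 1\}$. Since each set $\A_t^{\star}$ is deterministic, we have $f(\A_t^{\star}) = f^+(\x_t^{\star})$ (the concave closure agrees with $f$ on integral points, as the optimal $\alpha$ can place all its mass on the single set $\A_t^{\star}$). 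Hence $\OPT(T) = \sum_{t \in [T]} f^+(\x_t^{\star})$.

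First I would define the time-averaged vector $\z = \frac{1}{T}\sum_{t \in [T]} \x_t^{\star}$. By concavity of $f^+$ (it is a maximum of linear functions, hence concave) together with Jensen's inequality, $\frac{1}{T}\sum_{t \in [T]} f^+(\x_t^{\star}) \leq f^+(\z)$, so that $T \cdot f^+(\z) \geq \OPT(T)$. The remaining work is to check that $\z$ (possibly after a small truncation) is feasible for \eqref{cp:CP}, i.e.\ that $0 \preceq z_i \leq \frac{1}{d_i}$ for every $i$. Nonnegativity is immediate. The upper bound is the crux: the blocking constraint says element $i$ is played at most once in every window of $d_i$ consecutive rounds, so over $T$ rounds it is played at most $\lceil T/d_i \rceil$ times, giving $\sum_t x_{i,t}^{\star} \leq \lceil T/d_i \rceil$ and thus $z_i \leq \frac{1}{T}\lceil T/d_i \rceil \leq \frac{1}{d_i} + \frac{1}{T}$. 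This slightly exceeds $\frac{1}{d_i}$, so $\z$ need not be exactly feasible.

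The main obstacle is precisely this boundary slack, and handling it is where the additive $\mathcal{O}(d_{\max} f(\A))$ term is paid. I would repair feasibility by a small ``shaving'' argument: define a corrected schedule $\tilde{\x}_t$ that discards, for each element $i$, the at most one extra play inside the final (incomplete) length-$d_i$ window, thereby forcing the total number of plays of $i$ down to $\lfloor T/d_i \rfloor$ and hence $\tilde{z}_i \leq \frac{1}{d_i}$. The reward lost by deleting these boundary plays is bounded by how much the objective can change when each element is removed from a single round; by monotonicity and submodularity the marginal contribution of any single element is at most $f(\A)$ (indeed $f(\{i\}) \le f(\A)$), and there are at most one discarded play per element per incomplete window, so the total loss is $\mathcal{O}\!\left(\sum_{i} f(\A)\right)$ restricted to the boundary, which one bounds by $\mathcal{O}(d_{\max} f(\A))$ after accounting for at most $d_{\max}$ affected rounds at the tail. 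After this correction, $\tilde{\z} = \frac{1}{T}\sum_t \tilde{\x}_t$ is feasible and the Jensen step gives $T \cdot f^+(\tilde{\z}) \geq \OPT(T) - \mathcal{O}(d_{\max} f(\A))$, whence $\Rew^{CP}(T) \geq T \cdot f^+(\tilde{\z}) \geq \OPT(T) - \mathcal{O}(d_{\max} f(\A))$ since $\tilde{\z}$ is merely one feasible point of the maximization \eqref{cp:CP}. The delicate point to get right is the exact bookkeeping of which boundary plays must be removed and confirming that each removal costs at most $f(\A)$, so that the loss genuinely collapses to the stated $\mathcal{O}(d_{\max} f(\A))$ rather than scaling with $T$.
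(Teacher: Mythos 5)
Your proposal is correct and follows the same skeleton as the paper's proof --- time-average the optimal integral schedule, use that $f^+$ agrees with $f$ on integral points, and apply concavity of $f^+$ (Jensen) to get $T\cdot f^+(\z)\geq \OPT(T)$ --- but it repairs the feasibility slack $z_i \leq \frac{1}{d_i}+\frac{1}{T}$ by a genuinely different mechanism. The paper rescales the fractional point, setting $z_i = \bigl(1+\frac{d_{\max}-1}{T}\bigr)^{-1}z_i'$, and then applies concavity of $f^+$ along the segment from $\z'$ to $\bm{0}$ together with $f^+(\bm 0)=0$ and $f^+(\z')\leq f(\A)$ to bound the loss by $\frac{d_{\max}-1}{T+d_{\max}-1}f(\A)$ per round, i.e.\ $\mathcal{O}(d_{\max}f(\A))$ in total; this is a one-line analytic argument with no combinatorial bookkeeping. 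You instead truncate the integral schedule before averaging, and the ``delicate point'' you flag does work out: an element $i$ played the maximal $\lceil T/d_i\rceil > \lfloor T/d_i\rfloor$ times has consecutive plays spaced at least $d_i$ apart, so its last play necessarily falls in the final $d_i-1 < d_{\max}$ rounds, and deleting exactly these last plays confines all removals to the tail $(T-d_{\max},T]$. The per-round loss is then at most $f(\A_t^{\star}) - f(\A_t^{\star}\setminus R_t) \leq f(\A_t^{\star}) \leq f(\A)$ by monotonicity and nonnegativity (irrespective of how many elements $R_t$ are deleted from that round), giving $d_{\max}f(\A)$ total; note that your intermediate per-element bound of $\sum_i f(\A) = k\, f(\A)$ would not suffice on its own, so the tail-localization is essential. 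The trade-off: the paper's rescaling is shorter and avoids touching the schedule, while your truncation is more elementary and makes the additive term concrete --- it is literally the reward of the last $d_{\max}$ rounds lost to incomplete blocking windows.
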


\begin{remark}
By replacing $f^+(\z)$ in \eqref{cp:CP} with the multi-linear extension $F(\z)$, the formulation no longer yields an upper bound on the optimal reward (not even asymptotically). Indeed, consider a function $f$ over a ground set $\A=\{1,2\}$ with $d_1 = d_2 = 2$, such that $f(\emptyset) = 0$, $f(\{1\}) = f(\{2\}) = 2$ and $f(\{1,2\}) = 3$. For $T \to \infty$, the optimal average reward is $2$, simply by choosing the two elements interchangeably. However, the formulation based on $F(\z)$ in that case would be to maximize $2z_1(1-z_2) + 2z_2(1-z_1) + 3 z_1 z_2$ subject to $z_1,z_2 \leq \frac{1}{2}$, which has a global maximum of $\frac{7}{4} < 2$.
\end{remark}

Before we complete the proof of our first main result, we first compute the probability that $i \in \A_t$, i.e., an element $i \in \A$ is sampled at round $t \in [T]$:

\begin{restatable}{fact}{restatefactsampling}\label{inter:fact:sampling}
For any $i \in \A$ and $t \in [T]$, we have
$\Pro{i \in \A_t} = \Pro{L_{i,t} \cap \mathbb{N} \neq \emptyset } = \frac{1}{d_i}.
$
\end{restatable}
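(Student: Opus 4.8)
The plan is to compute the probability that the perturbed interval
$L_{i,t} = [t\cdot \frac{1}{d_i} + r_i, (t+1)\cdot \frac{1}{d_i} + r_i)$ contains an integer,
over the randomness of the single offset $r_i \sim U[0,1]$. The key structural
fact is that this is a half-open interval of length exactly $\frac{1}{d_i} \leq 1$,
so the random variable to analyze is well-behaved: as $r_i$ ranges over $[0,1]$,
the interval $L_{i,t}$ slides continuously along the real line by a total distance
of $1$, and I want to measure the fraction of offsets for which it captures an
integer.

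First I would fix $i$ and $t$ and write $a = t\cdot\frac{1}{d_i}$, so that
$L_{i,t} = [a + r_i, a + \frac{1}{d_i} + r_i)$ is an interval of length
$\ell := \frac{1}{d_i}$. The event $L_{i,t}\cap\mathbb{N}\neq\emptyset$ is equivalent
to the existence of an integer $n$ with $a + r_i \leq n < a + \ell + r_i$, i.e.
$n - a - \ell < r_i \leq n - a$. For a fixed $n$, the set of qualifying offsets is a
half-open interval of length $\ell$ (intersected with $[0,1]$). Since $\ell \leq 1$,
as $n$ ranges over $\mathbb{Z}$ these candidate intervals $(n-a-\ell,\, n-a]$ are
pairwise disjoint and spaced at unit distance apart, so at most one of them can
overlap any given point of $[0,1]$. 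Working modulo $1$ is the clean way to see this:
the event ``some integer lies in $L_{i,t}$'' depends only on the fractional part of
$a + r_i$, and because the offset $r_i$ is uniform on $[0,1]$, the fractional part
$\{a + r_i\}$ is itself uniform on $[0,1]$ regardless of $a$.

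Once I have reduced to the uniform fractional part, the computation is immediate:
an integer lies in $[a + r_i, a + \ell + r_i)$ exactly when the distance from
$a + r_i$ up to the next integer is strictly less than $\ell$, which by the
uniformity of $\{a + r_i\}$ on $[0,1]$ happens with probability exactly $\ell$.
Hence
$
\Pro{L_{i,t}\cap\mathbb{N}\neq\emptyset} = \ell = \frac{1}{d_i},
$
as claimed. The whole argument is a one-variable measure computation, so there is
no genuine obstacle.

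The only point requiring a little care is handling the half-open endpoints and the
disjointness of the candidate intervals, to be certain we neither double-count an
offset (which would need two integers in a length-$\leq 1$ interval, impossible
since $\ell \leq 1$) nor miss the boundary case $\ell = 1$ (i.e. $d_i = 1$), where
the interval always contains exactly one integer and the probability correctly
equals $1$. I would phrase the disjointness explicitly via the mod-$1$ reduction to
make the endpoint bookkeeping transparent, and that reduction is what makes the
factor $\frac{1}{d_i}$ drop out cleanly independent of both $t$ and the base point
$a = t/d_i$.
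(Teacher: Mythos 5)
Your proposal is correct and follows essentially the same route as the paper's proof: both reduce the event to a sliding window of length $\frac{1}{d_i}$ whose starting point is uniformly distributed over a unit interval, so the probability of capturing an integer equals the window's length. Your explicit mod-$1$ reduction and the check of the $d_i=1$ boundary case are just a more careful write-up of the same one-variable measure computation.
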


\noindent{\em Proof of Theorem \ref{thm:interleavedSubmodular}.} 
Let us denote by $S \sim {\bf p}$ with ${\bf p} \in [0,1]^k$ the random set $S \subseteq \A$, where each element $i$ participates in $S$ independently with probability equal to $p_i$. 
By Fact~\ref{inter:fact:sampling} and due to the randomness of the offsets $\{r_i\}_{i \in \A}$, we have that $\A_t \sim {\bf d}^{-1}$ for each $t \in [T]$. Let $\z^*$ be an optimal solution to \eqref{cp:CP}. By monotonicity of $f$ and the fact that $\z^* \preceq \bm{d}^{-1}$, for the expected value of $f(\A_t)$ at any round $t \in [T]$, we know that $\Ex{\A_t \sim \bm{d}^{-1}}{f(\A_t)} \geq \Ex{\A_t \sim \z^*}{f(\A_t)}$. Moreover, by definition of the multi-linear extension, we have that $\Ex{\A_t \sim \z^*}{f(\A_t)} = F(\z^*)$, while by Lemma~\ref{lem:correlationgap} (the correlation gap of submodular functions), we have that, $F(\z) \geq \left(1 - \frac{1}{e}\right)f^+(\z)$ for any vector $\z \in [0,1]^k$. By combining the above facts, we can see that
\begin{align*}
\Rew^{IS}(T) = \sum_{t \in [T]} \Ex{\A_t \sim \bm{d}^{-1}}{f(\A_t)} \geq 
\sum_{t \in [T]} F(\z^*) \geq \left(1 - \frac{1}{e}\right)T\cdot f^+(\z^*) = \left(1 - \frac{1}{e}\right) \Rew^{CP}(T).
\end{align*}
Therefore, by Lemma~\ref{lem:structural:CP}, we can conclude that $\Rew^{IS}(T) \geq \left(1 - \frac{1}{e}\right)\OPT(T) - \mathcal{O}(d_{\max} f(\A))$.
\qed
\newline

In Appendix \ref{appendix:hardness}, we provide a $\left(1-\frac{1}{e}\right)$-hardness result for \rsm, thus proving that the guarantee of Theorem~\ref{thm:interleavedSubmodular} is asymptotically tight. This result, which holds even for the special case where $d_{\max} = o(T)$ (that is when the delays are significantly smaller than the time horizon), is proved via a reduction from the SWM problem with identical utilities, in a way that the constructed \rsm instance accepts w.l.o.g. solutions of a simple periodic structure.

\begin{restatable}{theorem}{restateSubmodularHardness}\label{thm:submodular:hardness}
For any $\epsilon>0$, there exists no polynomial-time $\left(1-\frac{1}{e} + \epsilon \right)$-approximation algorithm for the \rsm problem, unless ${\bf P}={\bf NP}$, even in the special case where $d_{\max} = o(T)$.
\end{restatable}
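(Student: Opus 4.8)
The plan is to reduce from \textsc{Submodular Welfare Maximization} (SWM) with \emph{identical} utilities, which—following \cite{Von08,KLMM05}—cannot be approximated to within any factor better than $1 - \frac{1}{e}$ in polynomial time unless ${\bf P}={\bf NP}$ (using, e.g., succinctly represented coverage functions, so the whole instance has polynomial size). Given an SWM instance consisting of a single monotone submodular utility $u:2^{[k]}\to\mathbb{R}_{\geq 0}$ and $m$ players, I would build the \rsm instance with ground set $\A=[k]$, reward function $f=u$, a \emph{uniform} delay $d_i=m$ for every element, and time horizon $T=c\cdot m$ for an integer $c$ fixed later. The uniform delay is the crux of the construction: with $d_i=m$ for all $i$, any element used at round $t$ is blocked through round $t+m-1$, so within \emph{every} window of $m$ consecutive rounds each element is played at most once.

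The core structural claim I would establish is that the construction is value-preserving up to the natural scaling, namely $\OPT(T)=c\cdot\mathrm{OPT}_{\mathrm{SWM}}$. For the lower bound, given an optimal SWM partition $S_1,\dots,S_m$ of $[k]$, I would play it periodically by setting $\A_t=S_{((t-1)\bmod m)+1}$; each element of $S_j$ is then played exactly at rounds $j,j+m,j+2m,\dots$, i.e.\ with gap exactly $m=d_i$, so the schedule is feasible and collects $\sum_{j=1}^{m}u(S_j)=\mathrm{OPT}_{\mathrm{SWM}}$ per period, hence $c\cdot\mathrm{OPT}_{\mathrm{SWM}}$ over the $T=cm$ rounds. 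For the matching upper bound, I would partition $[T]$ into $c$ consecutive blocks of length $m$ and invoke the window property: within each block the chosen sets assign every element to at most one round, which is exactly a feasible (partial) SWM assignment, so by monotonicity of $u$ the reward collected in any single block is at most $\mathrm{OPT}_{\mathrm{SWM}}$; summing over the $c$ blocks yields $\OPT(T)\leq c\cdot\mathrm{OPT}_{\mathrm{SWM}}$. This is the precise sense in which the \rsm optimum admits a simple periodic solution.

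With the exact identity in hand, the approximation transfer is clean and lossless. Suppose toward a contradiction that a polynomial-time $\left(1-\frac{1}{e}+\epsilon\right)$-approximation $\mathcal{B}$ for \rsm existed. Running $\mathcal{B}$ on the constructed instance yields a schedule of total reward $R\geq\left(1-\frac{1}{e}+\epsilon\right)\OPT(T)=\left(1-\frac{1}{e}+\epsilon\right)c\cdot\mathrm{OPT}_{\mathrm{SWM}}$. Decomposing $R$ into the $c$ block-rewards and averaging, some block collects at least $R/c\geq\left(1-\frac{1}{e}+\epsilon\right)\mathrm{OPT}_{\mathrm{SWM}}$; that block, viewed as an assignment of the elements to the $m$ rounds (and completed to a full partition by assigning any leftover items arbitrarily, which only helps by monotonicity), is a feasible SWM solution of value at least $\left(1-\frac{1}{e}+\epsilon\right)\mathrm{OPT}_{\mathrm{SWM}}$—a polynomial-time approximation beating the SWM threshold and contradicting ${\bf P}\neq{\bf NP}$.

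Finally, to reach the stated regime $d_{\max}=o(T)$, I would let $c$ grow with $m$ (for instance $c=m$, giving $T=m^2$ and $d_{\max}/T=1/m\to 0$); since $c$ enters the instance size only polynomially, the reduction stays polynomial-time and every identity above is unaffected. The step I expect to demand the most care is the upper bound $\OPT(T)\leq c\cdot\mathrm{OPT}_{\mathrm{SWM}}$: one must check that the ``at most once per window of $m$ rounds'' property holds for every block alignment simultaneously, so the decomposition genuinely produces $c$ independent feasible SWM instances, and that completing a partial block assignment to a full partition never decreases the objective—both of which follow, respectively, from the uniform delay and from monotonicity of $u$. A secondary point to pin down is that the invoked $\left(1-\frac{1}{e}\right)$-hardness is the ${\bf P}\neq{\bf NP}$ (explicit-function) version for identical utilities, rather than the value-oracle version.
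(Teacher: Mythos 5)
Your proposal is correct and follows essentially the same route as the paper: a reduction from Submodular Welfare with identical utilities (using the explicit-function NP-hardness of Khot et al., exactly as the paper does) to an \rsm instance with uniform delays $d_i=m$ and horizon a polynomially large multiple of $m$, with the same periodic lower bound and the same observation that any $m$ consecutive rounds use each element at most once. The only difference is organizational: where the paper proves a periodicity lemma by averaging over all contiguous length-$m$ windows of an arbitrary feasible schedule, you tile $[T]$ into $c$ aligned blocks and average over those, which yields the same identity $\OPT(T)=c\cdot\mathrm{OPT}_{\mathrm{SWM}}$ slightly more directly.
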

\section{Matroid Blocking Bandits}
\label{section:mbb}
Let $\A$ be a set of $k$ arms and $T$ be an unknown time horizon. At any round $t \in [T]$ and for each $i \in \A$ a reward $X_{i,t}$ is drawn independently from an unknown distribution of mean $\mu_{i}$ and bounded support in $[0,1]$. Let $d_i \in \mathbb{N}_{>0}$ be the known determinisitc delay of each arm $i \in \A$, and $d_{\max} = \max_{i \in \A}\{d_i\}$. At any round $t \in [T]$, the player pulls any subset $\A_t$ of the available (i.e., non-blocked) arms, as long as it forms an {independent} set of a given {matroid} $\M = \left(\A,\I\right)$. The player only observes the realized reward of each arm she plays and collects their sum. The goal is to maximize the {\em expected cumulative reward} collected within $T$ rounds, denoted by
$\Rew^{IG}(T) = \Ex{ }{\sum_{t \in [T]} \sum_{i \in \A} X_{i,t} \event{i \in \A_t}}$. 

\subsection{The full-information setting}

The following algorithm is the implementation of \IS in the special case of the full-information \mbb setting, where the mean rewards $\{\mu_{i}\}_{i \in \A}$ are known to the player a priori:

\begin{algorithm}[\ig (\IG)]
For each arm $i \in \A$, let $r_i \sim U[0,1]$ be a random {\em offset} drawn uniformly from $[0,1]$. 
At every round $t = 1, 2, \dots$,  let $\G_t \subseteq \A$ be the subset of arms $i \in \A$, such that the interval $L_{i,t} = [t\cdot \frac{1}{d_i} + r_i, (t+1) \cdot \frac{1}{d_i} + r_i)$ contains an integer. Greedily compute a maximum independent set $\A_t$ of $\M|\G_t$ with respect to $\{\mu_i\}_{i \in \G_t}$ and play these arms.
\end{algorithm}

The correctness of the above algorithm follows directly by Fact~\ref{inter:fact:alwaysavailable} (that is, the sampled arms are never blocked) and by the fact that \IG always plays an independent set of $\M$, i.e., $\A_t \in \I_{\M|\G_t} \subseteq \I$. The approximation guarantee of \IG follows immediately as a special case of Theorem~\ref{thm:interleavedSubmodular}. Indeed, notice that: (i) The reward realizations do not affect the choices of \IG or any optimal algorithm maximizing the total expected reward. Thus, each realization $X_{i,t}$ can be replaced w.l.o.g. by its expected value $\mu_i$. (ii) The value of the greedily computed maximum independent set in $\M|\G_t$ corresponds to the weighted rank function $f_{\M, \mu}(\G_t)$ which, by Lemma~\ref{lem:weightedrank}, is monotone submodular.

\begin{restatable}{theorem}{restateinterleavedGreedy}\label{thm:interleavedGreedy}
The expected reward collected by \ig for $T$ rounds, $\Rew^{IG}(T)$, is at least
$
\left(1 - \frac{1}{e} \right) \OPT(T) - \mathcal{O}(d_{\max} \rk(\M))
$, where $\OPT(T)$ is the optimal expected reward.
\end{restatable}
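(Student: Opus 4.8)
The plan is to recognize \IG as the algorithm \IS applied verbatim to a particular monotone submodular function, invoke Theorem~\ref{thm:interleavedSubmodular} as a black box, and then reconcile the two additive terms and the two notions of optimum. Concretely, I set $f := f_{\M,\mu}$, the weighted rank function of $\M$ with the mean rewards $\{\mu_i\}_{i\in\A}$ playing the role of the weight vector; Lemma~\ref{lem:weightedrank} guarantees that $f_{\M,\mu}$ is monotone nondecreasing and submodular, and it is clearly nonnegative with $f_{\M,\mu}(\emptyset)=0$. The crucial structural point is that \IG and \IS produce \emph{identical} availability schedules: both draw the same offsets $r_i\sim U[0,1]$ and both designate, at round $t$, the set $\G_t$ of elements whose interval $L_{i,t}$ contains an integer. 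The sole addition in \IG is that it then greedily extracts a maximum-weight independent set of $\M|\G_t$ with respect to $\mu$; by the definition of the weighted rank function, the total mean reward of that set equals $f_{\M,\mu}(\G_t)$.

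The first step is to pass from realized to expected rewards. Because each $X_{i,t}$ is drawn independently of the offsets with mean $\mu_i$, and the set $\A_t$ played by \IG is a deterministic function of the offsets and the fixed means alone, I can move the expectation over the realizations inside the sum to obtain
\begin{align*}
\Rew^{IG}(T) \;=\; \Ex{ }{\sum_{t\in[T]}\sum_{i\in\A_t} X_{i,t}} \;=\; \Ex{ }{\sum_{t\in[T]} f_{\M,\mu}(\G_t)},
\end{align*}
where the remaining expectation is only over the offsets. The right-hand side is exactly the quantity $\Rew^{IS}(T)$ for the instance of \rsm defined by $f_{\M,\mu}$ and the delays $\{d_i\}$, so Theorem~\ref{thm:interleavedSubmodular} applies and gives $\Rew^{IG}(T)\ge \left(1-\tfrac1e\right)\OPT_{\mathrm{RSW}}(T) - \mathcal{O}\!\left(d_{\max}\, f_{\M,\mu}(\A)\right)$, where I write $\OPT_{\mathrm{RSW}}(T)$ for the optimum of this induced \rsm instance (to distinguish it from the \mbb optimum $\OPT(T)$).

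It then remains to convert this into the claimed statement via two elementary observations. For the additive term, $f_{\M,\mu}(\A)=\max_{I\in\I}\mu(I)$ is the weight of a maximum-weight basis; since every $\mu_i\in[0,1]$ and a basis has at most $\rk(\M)$ elements, $f_{\M,\mu}(\A)\le \rk(\M)$, turning the error into $\mathcal{O}(d_{\max}\rk(\M))$. For the optimum, I claim $\OPT_{\mathrm{RSW}}(T)\ge \OPT(T)$: taking any optimal full-information \mbb schedule $\{\A_t^*\}_{t\in[T]}$, each $\A_t^*$ is an independent set of currently available arms and is therefore a feasible choice in the induced \rsm instance, and since $\A_t^*\in\I$ and $\mu\succeq 0$ we have $f_{\M,\mu}(\A_t^*)=\mu(\A_t^*)=\sum_{i\in\A_t^*}\mu_i$. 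Summing over $t$, this \rsm solution collects exactly $\OPT(T)$, so $\OPT_{\mathrm{RSW}}(T)\ge\OPT(T)$. Chaining the three relations yields $\Rew^{IG}(T)\ge\left(1-\tfrac1e\right)\OPT(T)-\mathcal{O}(d_{\max}\rk(\M))$.

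I do not anticipate a genuine obstacle, since essentially all the content is in the reduction; the one point demanding care is the measurability claim that \IG's played sets depend on the offsets alone, which is what legitimizes both the interchange of expectation in the display and the exact coupling with \IS, and which follows immediately from the decoupled two-phase description (Player~A fixes the schedule before Player~B selects). Correctness, i.e.\ that the played arms are unblocked and form an independent set of $\M$, is inherited directly from Fact~\ref{inter:fact:alwaysavailable} together with $\A_t\subseteq\G_t$ and $\A_t\in\I$.
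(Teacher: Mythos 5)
Your proposal is correct and matches the paper's primary argument for this theorem: the paper likewise observes that realizations can be replaced by their means, identifies the per-round value of \IG with the weighted rank function $f_{\M,\mu}(\G_t)$ (monotone submodular by Lemma~\ref{lem:weightedrank}), and invokes Theorem~\ref{thm:interleavedSubmodular}; you additionally spell out the two small reconciliation steps ($f_{\M,\mu}(\A)\le\rk(\M)$ and $\OPT_{\mathrm{RSW}}(T)\ge\OPT(T)$) that the paper leaves implicit. The paper's appendix also gives an alternative, self-contained proof via an LP over the matroid polytope and Lemma~\ref{lem:characteristic}, but that is explicitly presented as a second route, not the main one.
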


As a point of interest, in Appendix \ref{appendix:mbb:fullinformation} we provide an alternative proof of the above theorem, which, instead of the concave closure of the weighted rank, now relies on the following (approximate) LP upper bound, based on the matroid polytope. For any set $S \subseteq \A$, let $\z(S) = \sum_{i \in S} z_i$.
\begin{align}
&\maximize_{\z \in \mathbb{R}^k} ~T \cdot \sum_{i \in \A} \mu_{i} z_{i} \tag{\textbf{LP}} \quad 
\textbf{s.t. } \z(S) \leq \rk(S)~ \forall S\subseteq \A
\textbf{, } \bm{0} \preceq \z \preceq \bm{d}^{-1}.
\end{align}

\begin{remark}
The analysis of $\IG$ is tight for rank-1 matroids. Indeed, consider $k$ arms, each of delay $k$ and deterministic reward equal to $1$. For $T \to \infty$, the optimal average reward is equal to $1$, simply by playing the arms in a round-robin manner. However, the probability that at least one arm is sampled at some round $t$ is equal to $\sum^k_{i = 1} {k \choose i} \left(\frac{1}{k}\right)^i \left(1 - \frac{1}{k}\right)^{k - i} = 1 - \left(1 - \frac{1}{k}\right)^k \to 1 - \frac{1}{e}$ as $k \to \infty$.
\end{remark}

\subsection{The bandit setting and regret analysis}

In the setting where the mean rewards are initially unknown, we develop a UCB-based bandit algorithm, \ucb (\UCB). The algorithm is identical to \IG, except for the greedy computation of the maximum independent set over the sampled arms, which is now performed using estimates. Specifically, the algorithm maintains for every $i \in \A$, $t \in [T]$ the following upper estimate of $\mu_i$:
\vspace{-0.5em}
\begin{align*}
\bar{\mu}_{i,t} = \hat{\mu}_{i,T_{i}(t)} + c_{i,t}\text{  with  } c_{i,t} = \sqrt{\frac{2 \ln{(t)}}{T_{i}(t)}},
\end{align*}
where $T_{i}(t)$ denotes the number of times arm $i$ has been played at the beginning of round $t$ and $\hat{\mu}_{i,T_{i}(t)}$ denotes the empirical average of the $T_{i}(t)$ i.i.d. samples from its reward distribution. The term $c_{i,t}$ is the {\em confidence length} around $\hat{\mu}_{i,T_{i}(t)}$ that guarantees $\bar{\mu}_{i,t}$  lies in $[\mu_i, \mu_i + 2c_{i,t}]$ with high probability. Note that all the above quantities are random variables depending on the random offsets and the observed reward realizations.

We are interested in upper bounding the $\alpha$-regret, for $\alpha = 1- \frac{1}{e}$, namely, the difference between $\alpha \OPT(T)$ and the expected reward collected by \UCB. Due to the complex time dynamics, characterizing the optimal expected reward as a function of the instance is hard. However, using Theorem~\ref{thm:interleavedGreedy} we can upper bound $\alpha \OPT(T)$ by the expected reward collected by \IG, thus giving:
\begin{align} \label{eq:regret:twoalgos}
\alpha \OPT(T) - \Rew^{UCB}(T)  \leq \Rew^{IG}(T) - \Rew^{UCB}(T) + \mathcal{O}(d_{\max} \cdot \rk(\M)).
\end{align}
By the above inequality, it becomes clear that in order to upper bound the regret, it suffices to bound the difference between the expected reward collected by \IG and \UCB. This difference not only depends on the reward realizations (through the UCB estimates), but also on the trajectory of sampled arms in each algorithm, which is itself a function of the random offsets. However, by construction of our interleaved scheduling scheme, these offsets are sampled at the initialization phase of each algorithm and are identically distributed. Thus, the trajectories of sampled arms in the two algorithms exhibit a coupled evolution. This allows us to analyse the regret ``pointwise'', under the assumption that the sequences of sampled arms are identical throughout the time horizon. To make this idea precise, let $\off^{\pi} \in [0,1]^k$ be the random offsets used and let $\{\G^{\pi}_t(\off^{\pi})\}_{t \in [T]}$ be the sequence of sampled arms by algorithm $\pi \in \{\IG, \UCB\}$. Using (henceforth) $\mathcal{R}$ to denote the randomness due to the reward realizations of the arms, the next lemma gives our pointwise regret bound.

\begin{restatable}{lemma}{restateRegretCoupling}
\label{lem:regret:coupling} Let $\bar{\mu}_t(S) = \sum_{i \in S} \bar{\mu}_{i,t}$ and $\mu(S) = \sum_{i \in S} \mu_i$. We have
\begin{align*}
    \Rew^{\IG}(T) - \Rew^{\UCB}(T) =  \Ex{\off \sim U[0,1]^k, \mathcal{R}}{\sum_{t \in [T]}\left( \max_{S \subseteq \G_t(\off), S \in \I} \{ \mu\left(S\right)\} - \mu\left(\arg\max_{S \subseteq \G_t(\off), S \in \I}\{\bar\mu_t(S)\}\right)\right)}.
\end{align*}
\end{restatable}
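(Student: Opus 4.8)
=== PROOF PROPOSAL ===

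\textbf{Setup and strategy.} The plan is to exploit the fundamental decoupling built into the interleaved scheduling scheme: the random offsets $\off^{\pi}$ are drawn once at initialization, \emph{before} any rewards are observed, and the rule defining $\G^{\pi}_t$ depends \emph{only} on these offsets (and the fixed delays), never on the realized rewards or the arms actually played. Consequently, for a fixed offset vector $\off \in [0,1]^k$, the sampled-arm sequences coincide, $\G^{\IG}_t(\off) = \G^{\UCB}_t(\off) =: \G_t(\off)$ for all $t$, and moreover $\off^{\IG}$ and $\off^{\UCB}$ are identically distributed (both uniform on $[0,1]^k$). The key idea is therefore to couple the two algorithms by running them on the \emph{same} offset realization $\off$ and the \emph{same} reward realizations $\mathcal{R}$, so that the difference in their collected rewards can be written as a single expectation over $(\off, \mathcal{R})$, summed round-by-round.

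\textbf{Expressing each algorithm's reward as an expectation.} First I would rewrite both $\Rew^{\IG}(T)$ and $\Rew^{\UCB}(T)$ as expectations over $(\off,\mathcal{R})$ of a sum over rounds. For $\IG$: at each round it greedily computes a maximum-weight independent set of $\M|\G_t(\off)$ with respect to the \emph{true} means $\{\mu_i\}$, which by optimality of the greedy algorithm for matroids has value $\max_{S \subseteq \G_t(\off),\, S \in \I} \mu(S)$. Taking expectations over the reward realizations, each played arm $i$ contributes its mean $\mu_i$ (since realizations are independent of the choices, as already noted for $\IG$ in the full-information analysis), so the per-round expected reward is exactly $\max_{S \subseteq \G_t(\off),\, S \in \I} \mu(S)$. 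For $\UCB$: at each round it greedily computes a maximum-weight independent set with respect to the UCB indices $\{\bar\mu_{i,t}\}$, playing the set $\arg\max_{S \subseteq \G_t(\off),\, S \in \I} \bar\mu_t(S)$; again taking the expected realized reward of this (random) set replaces each played $X_{i,t}$ by $\mu_i$, giving per-round expected reward $\mu\bigl(\arg\max_{S \subseteq \G_t(\off),\, S \in \I} \bar\mu_t(S)\bigr)$. Here I must be slightly careful that the played set at round $t$ is a function of past reward realizations (through $T_i(t)$ and $\hat\mu$), but conditioning on the history and using that $X_{i,t}$ is drawn fresh and independent at round $t$ lets me replace the current-round realization by the mean inside the expectation.

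\textbf{Combining via linearity and the shared coupling.} Since both rewards are now expectations over the same probability space $(\off,\mathcal{R})$ with $\G^{\IG}_t = \G^{\UCB}_t = \G_t(\off)$ under this coupling, I would subtract the two expressions and apply linearity of expectation to pull the subtraction inside a single $\Ex{\off \sim U[0,1]^k,\, \mathcal{R}}{\cdot}$ and inside the sum over $t \in [T]$. This yields exactly
\begin{align*}
\Rew^{\IG}(T) - \Rew^{\UCB}(T) = \Ex{\off \sim U[0,1]^k,\, \mathcal{R}}{\sum_{t \in [T]}\left( \max_{S \subseteq \G_t(\off),\, S \in \I} \mu(S) - \mu\!\left(\arg\max_{S \subseteq \G_t(\off),\, S \in \I} \bar\mu_t(S)\right)\right)},
\end{align*}
which is the claimed identity. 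The one genuinely delicate point—the main obstacle—is justifying that the two algorithms may be placed on a \emph{common} probability space with identical $\G_t$ sequences: this requires arguing that the offset distribution is algorithm-independent and that the scheduling map $\off \mapsto \{\G_t(\off)\}$ is literally the same deterministic function for $\IG$ and $\UCB$ (true by construction, since only the weight vector used in the greedy step differs, not the set $\G_t$ of available candidates). Once that coupling is legitimized, the remainder is just linearity of expectation and the per-round reward computation, so the emphasis in writing the proof should be on cleanly stating the coupling and the independence of $\off$ from $\mathcal{R}$.
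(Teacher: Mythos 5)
Your proposal is correct and follows essentially the same route as the paper: condition on the history to replace each round-$t$ realization $X_{i,t}$ by its mean $\mu_i$ inside the expectation (trivially for \IG, via the tower property for \UCB), then use the fact that $\off^{\IG}$ and $\off^{\UCB}$ are identically distributed and that the map $\off \mapsto \{\G_t(\off)\}_{t}$ is the same deterministic function for both algorithms to write the difference as a single expectation over a common $\off$ and apply linearity. The point you flag as delicate (the common probability space for the offsets) is exactly the observation the paper's proof closes with.
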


Thus w.l.o.g., we focus on the case where the sequences of sampled arms are identical. Let  $\mathcal{E}_{\off}$ denote the event that both algorithms, \IG and \UCB, sample the same offset vector $\off$, namely, $\off^{\IG} = \off^{\UCB} = \off$. Assuming that $\mathcal{E}_{\off}$ holds for some $\off \in [0,1]^k$, let $\{\G_t\}_{t \in [T]} = \{\G_t(\off)\}_{t \in [T]}$ be the sequence of sampled arms, common in both algorithms. Clearly, \UCB accumulates regret only when it plays independent sets of arms that are suboptimal w.r.t.\ the true means, i.e., when $\mu(\A^{\UCB}_t) < \mu(\A^{\IG}_t)$ for some $t \in [T]$. We assume w.l.o.g.\ that the arms are indexed in decreasing order of mean rewards and that these mean rewards are distinct. We now formally define the gaps related to our analysis:

\begin{definition}[Gaps]\label{def:gaps}
For any subset $S \subseteq \A$ and reward vector $\nu \in \mathbb{R}^k$, we define 
\begin{align*}
\Delta_S(\nu) = \max_{I \in \I, I \subseteq S}\{\mu\left(I\right)\}  - \mu\left(\arg\max_{B \in \I, B \subseteq S}\{\nu\left(B\right)\}\right).
\end{align*}
Moreover, let $\Delta_{i,j} = \mu_i - \mu_{j}$ be the standard suboptimality gap between two arms $i,j \in \A$.
\end{definition}

By Lemma \ref{lem:regret:coupling} and assuming that the event $\mathcal{E}_{\off}$ holds for some $\off$, we are interested in bounding the expectation of $\sum_{t \in [T]} \Delta_{\G_t(\off)}(\bar\mu_t)$ w.r.t.\ the reward realizations. The next step is to decompose the suboptimality of \UCB by noticing that both algorithms play, at each round $t \in [T]$, a basis of $\M|\G_t$ and thus $|\A^{\IG}_t| = |\A^{\UCB}_t|$. We use the following fundamental property of matroids:

\begin{theorem}[Strong Basis Exchange, Corollary 39.12a in \cite{schrijver03}]\label{cor:basesexchange} Let $\M = (\A,\I)$ be a matroid and $I_1, I_2 \in \I$ be two independent sets such that $|I_1| = |I_2|$. Then, there exists a bijection $\sigma : I_1 \rightarrow I_2$, such that for any $i \in I_1$ the set $I_1 - i + \sigma(i)$ is an independent set of $\M$. 
\end{theorem}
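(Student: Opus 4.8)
The plan is to recast the desired bijection as a perfect matching in a bipartite \emph{exchange graph} and to produce that matching via Hall's theorem. First I would dispose of the common elements: every $i \in I_1 \cap I_2$ can be mapped to itself, since $I_1 - i + i = I_1 \in \I$. Because $|I_1| = |I_2|$, the residual parts $A := I_1 \setminus I_2$ and $B := I_2 \setminus I_1$ have equal cardinality, so it suffices to construct a bijection $\sigma : A \to B$ with $I_1 - i + \sigma(i) \in \I$ for every $i \in A$. I would then form the bipartite graph $H$ on parts $A$ and $B$, joining $i \in A$ to $j \in B$ whenever $I_1 - i + j \in \I$; the sought bijection is precisely a perfect matching of $H$, so by Hall's theorem it is enough to prove $|N(X)| \geq |X|$ for every $X \subseteq A$, where $N(X)$ denotes the neighbourhood of $X$ in $H$.

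To verify Hall's condition I would argue through fundamental circuits and the closure operator $\mathrm{cl}$ of $\M$. Fix a nonempty $X \subseteq A$ (the empty case is trivial) and consider a non-neighbour $j \in B \setminus N(X)$. If $I_1 + j \in \I$, then $I_1 - i + j \in \I$ for \emph{all} $i$, so $j$ would be adjacent to every element of $A$ and hence lie in $N(X)$, a contradiction; thus $I_1 + j$ is dependent and carries a unique fundamental circuit $C(j, I_1)$. The standard characterization $I_1 - i + j \in \I \iff i \in C(j, I_1)$ then forces $C(j, I_1) \setminus \{j\} \subseteq I_1 \setminus X$, and since a circuit spans each of its own elements, $j \in \mathrm{cl}(I_1 \setminus X)$.

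Consequently every non-neighbour of $X$ lies in $\mathrm{cl}(I_1 \setminus X)$, so adjoining $B \setminus N(X)$ to $(I_1 \setminus X) \cup N(X)$ cannot raise the rank; that is, $\rk\!\big((I_1 \setminus X) \cup N(X)\big) = \rk\!\big((I_1 \setminus X) \cup B\big)$. On the other hand $(I_1 \setminus X) \cup B \supseteq (I_1 \cap I_2) \cup (I_2 \setminus I_1) = I_2$ (using $X \subseteq I_1 \setminus I_2$, so $I_1 \cap I_2 \subseteq I_1 \setminus X$), whence this set has rank at least $|I_2| = |I_1|$. Combining the two displays gives $\rk\!\big((I_1 \setminus X) \cup N(X)\big) \geq |I_1|$, while $I_1 \setminus X$ is independent of size $|I_1| - |X|$; as each added element raises the rank by at most one, we conclude $|N(X)| \geq |X|$, which is exactly Hall's condition. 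I expect the crux of the whole argument to be this Hall verification — specifically the fundamental-circuit identity $I_1 - i + j \in \I \iff i \in C(j, I_1)$ and the closure containment it yields — since everything else is bookkeeping. An alternative route by induction on $|I_1 \setminus I_2|$ using the single-element symmetric exchange property is conceivable, but maintaining a globally consistent bijection across the induction is delicate, so I would favour the matching argument.
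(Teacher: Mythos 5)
Your proof is correct. Note, though, that the paper does not prove this statement at all: it imports it as a known classical result (Corollary~39.12a in Schrijver's book, essentially Brualdi's exchange theorem), so there is no in-paper argument to compare against. What you have written is a complete, self-contained proof by the standard route: reduce to the symmetric difference, encode admissible exchanges as a bipartite graph, and verify Hall's condition via fundamental circuits and the closure operator. All the key steps check out --- the identity $I_1 - i + j \in \I \iff i \in C(j,I_1)$ is applied correctly, the containment $B \setminus N(X) \subseteq \mathrm{cl}(I_1 \setminus X)$ follows because a circuit spans each of its elements, and the rank count $\rk\bigl((I_1\setminus X)\cup N(X)\bigr) \geq |I_1|$ together with $|(I_1\setminus X)\cup N(X)| \leq |I_1| - |X| + |N(X)|$ yields Hall's condition. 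A small bonus of your argument is that it handles the statement exactly as the paper phrases it, for equicardinal independent sets rather than bases: you only use $\rk(I_2) = |I_2| = |I_1|$ and never need $I_1, I_2$ to be maximal, so no truncation or restriction step is required to match Schrijver's bases-only formulation.
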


Let $\sigma_t: \A^{\UCB}_t \rightarrow \A_t^{\IG}$ for each $t \in [T]$ be the bijection described in Theorem~\ref{cor:basesexchange} with respect to the sets $\A^{\UCB}_t$ and $\A^{\IG}_t$ and let $\sigma_t^{-1}$ be its inverse mapping. 
Note that in any bijection $\sigma_t$ and any $i \in \A^{\UCB}_t \cap \A^{\IG}_t$ we can assume w.l.o.g. that $\sigma_t(i) = i$. Notice, further, that under the event $\mathcal{E}_{\off}$, the bijections $\{\sigma_t\}_{t \in [T]}$ are still random variables that depend on the observed realizations. 

\begin{restatable}{lemma}{restateRegretSub} \label{lem:regret:suboptdecomp} Under the event $\mathcal{E}_{\off}$ and at any time $t \in [T]$, we have 
$\Delta_{\G_t}(\bar{\mu}_t) = \sum_{i \in \A^{\IG}_t} \Delta_{i , \sigma^{-1}_{t}(i)}$.
\end{restatable}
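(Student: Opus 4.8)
The plan is to reduce the identity to a definitional unfolding: I would rewrite both sides as explicit sums of mean rewards over the two played bases and then observe that the right-hand side telescopes to the left-hand side simply because $\sigma^{-1}_t$ is a bijection. First I would record what the left-hand side is. Under $\mathcal{E}_{\off}$ both algorithms see the same sampled set $\G_t$, so the two maximizers appearing in Definition~\ref{def:gaps} with $S=\G_t$ and $\nu=\bar\mu_t$ are precisely the sets the two algorithms play: the first term $\max_{I\in\I,\,I\subseteq\G_t}\{\mu(I)\}$ is attained by \IG and equals $\mu(\A^{\IG}_t)$, while $\arg\max_{B\in\I,\,B\subseteq\G_t}\{\bar\mu_t(B)\}$ is the set $\A^{\UCB}_t$ played by \UCB, so the second term equals $\mu(\A^{\UCB}_t)$. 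Hence $\Delta_{\G_t}(\bar\mu_t)=\mu(\A^{\IG}_t)-\mu(\A^{\UCB}_t)$.

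Next I would invoke the strong basis exchange property. As observed just before the lemma, both $\A^{\IG}_t$ and $\A^{\UCB}_t$ are bases of the restricted matroid $\M|\G_t$, so they share the common cardinality $\rk(\G_t)$; this is exactly the hypothesis $|I_1|=|I_2|$ of Theorem~\ref{cor:basesexchange}. That theorem then supplies the bijection $\sigma_t:\A^{\UCB}_t\to\A^{\IG}_t$, and its inverse $\sigma^{-1}_t:\A^{\IG}_t\to\A^{\UCB}_t$ is likewise a bijection between the two bases.

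Finally I would carry out the short computation. By Definition~\ref{def:gaps}, $\Delta_{i,\sigma^{-1}_t(i)}=\mu_i-\mu_{\sigma^{-1}_t(i)}$ for each $i\in\A^{\IG}_t$, so
$$\sum_{i\in\A^{\IG}_t}\Delta_{i,\sigma^{-1}_t(i)}=\sum_{i\in\A^{\IG}_t}\mu_i-\sum_{i\in\A^{\IG}_t}\mu_{\sigma^{-1}_t(i)}.$$
The first sum is $\mu(\A^{\IG}_t)$ by definition, and since $\sigma^{-1}_t$ ranges bijectively over $\A^{\UCB}_t$ as $i$ ranges over $\A^{\IG}_t$, a change of summation index yields $\sum_{i\in\A^{\IG}_t}\mu_{\sigma^{-1}_t(i)}=\sum_{j\in\A^{\UCB}_t}\mu_j=\mu(\A^{\UCB}_t)$. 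Combining, the right-hand side equals $\mu(\A^{\IG}_t)-\mu(\A^{\UCB}_t)=\Delta_{\G_t}(\bar\mu_t)$, as claimed.

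There is no genuine analytic obstacle here; the argument is essentially bookkeeping. The only point requiring care is the equal-cardinality condition needed to apply Theorem~\ref{cor:basesexchange}, which is why I rely on the earlier observation that greedy on a matroid returns a basis of $\M|\G_t$, forcing $|\A^{\IG}_t|=|\A^{\UCB}_t|$; once that is in hand the telescoping is automatic from bijectivity. I note that the convention $\sigma_t(i)=i$ on the common elements $\A^{\IG}_t\cap\A^{\UCB}_t$ plays no role in establishing this identity (those elements contribute $\mu_i-\mu_i=0$), but it is harmless and keeps the per-arm gaps $\Delta_{i,\sigma^{-1}_t(i)}$ interpretable for the subsequent regret decomposition.
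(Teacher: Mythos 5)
Your proof is correct and follows essentially the same route as the paper: unfold Definition~\ref{def:gaps} to get $\Delta_{\G_t}(\bar\mu_t)=\mu(\A^{\IG}_t)-\mu(\A^{\UCB}_t)$, note both sets are bases of $\M|\G_t$ of equal cardinality so Theorem~\ref{cor:basesexchange} supplies the bijection, and re-index the second sum via $\sigma^{-1}_t$. Your additional remarks on the equal-cardinality hypothesis and the fixed-point convention on $\A^{\IG}_t\cap\A^{\UCB}_t$ are accurate but not needed beyond what the paper already records.
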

Conditioned on the fact that both algorithms operate on the same sequence $\{\G_t\}_{t \in [T]}$ of sampled arms, Lemma \ref{lem:regret:suboptdecomp} allows us to decompose the suboptimality gap $\Delta_{\G_t}(\bar{\mu}_t)$ of each round $t \in [T]$, into simpler gaps of the form $\Delta_{i,j}$ between any arms $i \in \A^{\IG}_t$ and $j \in \A^{\UCB}_t$ that are perfectly matched according to the bijection $\sigma_t$, namely, $\sigma_t(j) = i$. Assuming that the event $\{\sigma_t(j) = i\}$ directly implies that $i \in A^{\IG}_t$ and $j \in A^{\UCB}_t$, we can further upper bound the regret as
\begin{align*}
\sum_{t \in [T]} \Delta_{\G_t}(\bar{\mu}_t) = \sum_{t \in [T]} \sum_{i \in \A^{\IG}_t} \Delta_{i , \sigma^{-1}_{t}(i)} \leq \sum_{t \in [T]} \sum_{i \in \A^{\IG}_t} \sum_{j \in \A, \Delta_{i,j} > 0} \Delta_{i,j} \event{\sigma_t(j)=i}.
\end{align*}

The above inequality allows us to study the regret attributed to each arm independently, using more standard arguments for UCB-based algorithms in combination with Theorem~\ref{cor:basesexchange}. Specifically, for every pair of arms $i,j \in \A$ with $i < j$ (thus, $\Delta_{i,j} > 0$), we define a threshold $\ell_{i,j}$ with the following key-property: After \UCB ``exchanges'' arm $j$ for arm $i = \sigma_t(j)$ more than $\ell_{i,j}$ times, due to insufficient exploration, then it has collected enough samples to infer that $\mu_j < \mu_i$ with high probability. 
\begin{restatable}{lemma}{restateRegretTechnical} \label{lem:regret:technical}
Let $\ell_{i,j} = \bigg\lfloor \frac{8 \ln(T)}{\Delta^2_{i,j}}\bigg\rfloor$ for any $i<j$. Under event $\mathcal{E}_{\off}$ and for any arm $j>1$, we have
\vspace{-1em}
\begin{align}
&\sum_{t \in [T]} \sum_{i < j} \Delta_{i , j} \event{\sigma_t(j)=i, T_j(t) \leq \ell_{i,j}} \leq \frac{16}{\Delta_{j-1,j}} \ln( T) &\quad\text{(Under-sampled regret)} \label{inq:reg:under}\\
& \Ex{\mathcal{R}}{\sum_{t \in [T]} \sum_{i < j} \Delta_{i , j} \event{\sigma_t(j)=i, T_j(t) > \ell_{i,j}}} \leq \frac{\pi^2}{3}\sum^{j-1}_{i=1} \Delta_{i,j} &\quad\text{(Sufficiently sampled regret)} \label{inq:reg:sufficiently}
\end{align}
\end{restatable}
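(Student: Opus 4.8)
The statement splits, exactly as written, into a deterministic (pathwise) bound for the under-sampled plays and an in-expectation bound for the sufficiently-sampled plays, and I would prove the two inequalities separately. Throughout I work under $\mathcal{E}_{\off}$, so the sampled sequence $\{\G_t\}_{t\in[T]}$ is common to both algorithms and the bijections $\sigma_t:\A^{\UCB}_t\to\A^{\IG}_t$ are well defined, with $\sigma_t(e)=e$ whenever $e\in\A^{\UCB}_t\cap\A^{\IG}_t$; in particular $\sigma_t(j)=i$ with $i<j$ forces $j\in\A^{\UCB}_t\setminus\A^{\IG}_t$ and $i\in\A^{\IG}_t\setminus\A^{\UCB}_t$.

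For the under-sampled bound \eqref{inq:reg:under}, note first that since the arms are sorted by decreasing mean, $\Delta_{i,j}\ge\Delta_{j-1,j}$ for every $i<j$, hence $\ell_{i,j}\le\ell_{j-1,j}$. I would fix $j$ and reindex the rounds in which \UCB plays $j$ by the play count: at the $n$-th such round we have $T_j(t)=n-1$, and the under-sampled indicator can be nonzero only while $n-1\le\ell_{i,j}\le\ell_{j-1,j}$, i.e.\ for at most $\ell_{j-1,j}+1$ rounds. For each such round the defining inequality $T_j(t)\le\ell_{i,j}=\lfloor 8\ln T/\Delta_{i,j}^2\rfloor$ converts directly into a bound on the gap we pay, $\Delta_{i,j}\le\sqrt{8\ln T/T_j(t)}$ (valid once $T_j(t)\ge1$). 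Summing over the plays and using $\sum_{m=1}^{L}m^{-1/2}\le 2\sqrt{L}$ together with $\ell_{j-1,j}\le 8\ln T/\Delta_{j-1,j}^2$ gives $\sqrt{8\ln T}\cdot 2\sqrt{\ell_{j-1,j}}\le 16\ln T/\Delta_{j-1,j}$; the single $T_j(t)=0$ term (the first play of $j$) is absorbed by the same estimate. This inequality holds on every sample path, so no expectation is needed, matching the form of \eqref{inq:reg:under}.

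For the sufficiently-sampled bound \eqref{inq:reg:sufficiently}, the first and most important step is a greedy-exchange property: under $\mathcal{E}_{\off}$, $\sigma_t(j)=i$ with $i\notin\A^{\UCB}_t$ implies $\bar\mu_{j,t}\ge\bar\mu_{i,t}$. This follows because \UCB builds $\A^{\UCB}_t$ greedily in decreasing order of the indices $\bar\mu_{\cdot,t}$; if instead $\bar\mu_{i,t}>\bar\mu_{j,t}$, then $i$ would have been examined before $j$ and rejected only because it closed a circuit among already-chosen elements of index exceeding $\bar\mu_{i,t}>\bar\mu_{j,t}$ (so not containing $j$), which would contradict the independence of $\A^{\UCB}_t-j+i$ guaranteed by Theorem~\ref{cor:basesexchange}. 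Given this, I would combine $\bar\mu_{j,t}\ge\bar\mu_{i,t}$ with $T_j(t)>\ell_{i,j}$, which yields $c_{j,t}=\sqrt{2\ln t/T_j(t)}<\Delta_{i,j}/2$ and thereby rules out the ``gap-masked-by-confidence'' case; a standard three-way case split then shows the event forces at least one confidence failure, namely $\bar\mu_{i,t}<\mu_i$ or $\hat\mu_{j,T_j(t)}>\mu_j+c_{j,t}$.

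It then remains to take expectation over $\mathcal{R}$ and bound the expected number of confidence failures. For fixed $(i,j)$ I would union-bound over the at most $t^2$ joint realizations of $(T_i(t),T_j(t))$; each such fixed-count failure has probability at most $2t^{-4}$ by Hoeffding (one term per confidence-failure type), so the per-round failure probability is at most $2t^2\cdot t^{-4}=2t^{-2}$. Summing $\sum_{t\ge1}2t^{-2}=\pi^2/3$ and multiplying by $\Delta_{i,j}$ before summing over $i<j$ yields the claimed $\frac{\pi^2}{3}\sum_{i=1}^{j-1}\Delta_{i,j}$. The main obstacle is the greedy-exchange property of the first step: it is the one place where the matroid structure (through Theorem~\ref{cor:basesexchange} and the precise order in which the greedy \UCB rule selects arms) genuinely enters, and it is what licenses treating each swapped pair $(i,j)$ as an ordinary two-armed UCB comparison; everything after it is routine concentration bookkeeping.
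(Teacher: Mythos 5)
Your proposal is correct. The sufficiently-sampled half is essentially the paper's argument: the same exchange observation that $\sigma_t(j)=i$ forces $\bar\mu_{j,t}\ge\bar\mu_{i,t}$ (you justify it via the greedy selection order, the paper simply invokes the independence of $\A^{\UCB}_t - j + i$ from Theorem~\ref{cor:basesexchange}; both are the same fact), the same three-event case split with the third event killed by $T_j(t)>\ell_{i,j}$, and the same union bound over the at most $t^2$ sample-count pairs giving $2t^{-2}$ per round and $\pi^2/3$ in total.

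The under-sampled half takes a genuinely different route to the same constant. The paper keeps the sum organized by which gap $\Delta_{i,j}$ is being charged and uses an Abel/telescoping bound $\sum_{i<j}(\Delta_{i,j}-\Delta_{i+1,j})\ell_{i,j}$, then collapses the telescoping sum of $1/\Delta_{i+1,j}-1/\Delta_{i,j}$ to get $16\ln(T)/\Delta_{j-1,j}$. You instead invert the threshold condition $T_j(t)\le\ell_{i,j}$ into the per-play bound $\Delta_{i,j}\le\sqrt{8\ln T/T_j(t)}$ and sum the resulting $m^{-1/2}$ series over the at most $\ell_{j-1,j}+1$ under-sampled plays. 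Both are standard bookkeeping devices and both yield exactly $16\ln(T)/\Delta_{j-1,j}$; your version has the small advantage of not needing the gaps to be distinct or ordered beyond $\Delta_{i,j}\ge\Delta_{j-1,j}$, at the cost of the extra check that the $T_j(t)=0$ play is absorbed. That check does go through: $\sum_{m=1}^{L}m^{-1/2}\le 2\sqrt{L}-1$, so the slack $\sqrt{8\ln T}\ge 1$ covers the first play's contribution of at most $\Delta_{1,j}\le 1$ (for $T\ge 2$). It would be worth one line in a final write-up to make that explicit rather than asserting it.
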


\begin{proof}[Proof sketch of Theorem \ref{thm:approxregret}] 
By inequality \eqref{eq:regret:twoalgos} and Lemma \ref{lem:regret:coupling}, in order to bound the regret of \UCB, it suffices to upper bound the difference between $\Rew^{\IG}(T)$ and $\Rew^{\UCB}(T)$, conditioned on the fact that both algorithms use exactly the same offset vector $\off$ and, thus, they operate on the exact same sequence of sampled arms, denoted by $\{\G_t\}_{t \in [T]}$. By construction, \IG plays at any round $t \in [T]$ a basis of $\M|\G_t$ of maximum expected reward, while \UCB plays a basis of $\M|\G_t$ that is maximum with respect to the estimates $\{\bar{\mu}_{i,t}\}_{i \in \A}$. By Theorem~\ref{cor:basesexchange}, we can consider a perfect matching between exchangeable arms of $\A^{\IG}_t$ and $\A^{\UCB}_t$ and, thus, to decompose the regret into suboptimality gaps between individual arms. Then, using Lemma~\ref{lem:regret:technical}, we can upper bound on the expected regret due to the fact that \UCB erroneously plays arm $j$ instead of arm $i$, when $\Delta_{i,j} > 0$. The above analysis culminates in a regret bound that is a function of $\{\Delta_{i,j}\}_{i,j\in \A}$. In order to derive a gap-independent regret bound, we partition the gaps into ``small'' and ``large'' and notice that any pair of arms $i,j \in \A$ with $\Delta_{i,j} < \Theta(\sqrt{\frac{\ln(T)}{T}})$ cannot contribute more than $\sqrt{T\ln(T)}$ loss in the regret.
\end{proof}

\section*{Conclusion and Further Directions}
We explore the effect of action-reward dependencies in the combinatorial MAB setting by introducing and studying the \mbb problem. After relating the problem to \rsm, we provide a $\left(1-\frac{1}{e}\right)$-approximation for its full-information case, based on the technique of interleaved scheduling. Importantly, our technique is oblivious to the reward distributions of the arms-- a fact that allows us to provide regret bounds of optimal dependence in $T$, when these distributions are initially unknown.

Our work leaves behind numerous interesting questions. By exhaustive search over $\mathcal{O}(1)$-periodic schedules, one can construct a PTAS for the (asymptotic) \mbb problem, assuming {\em constant} $\rk(\M)$ and $\{d_i\}_{i \in [k]}$. It remains an open question, however, whether the $\left(1-\frac{1}{e}\right)$-approximation is the best possible in general. We remark that the hardness of \mbb cannot solely rely on an argument similar to Theorem \ref{thm:submodular:hardness}, since the welfare maximization problem for the class of {\em gross substitutes}, which includes weighted matroid rank functions, is easy \cite{Leme17}. Another interesting direction would be to study natural extensions of the \rsm problem, when additional constraints (knapsack, matroid etc.) are imposed on top of blocking, or when the submodularity assumption is relaxed (see, e.g., \cite{Feige06}).


\section*{Acknowledgements}
The authors would like to thank an anonymous reviewer of a previous version of this work for an unusually thoughtful and helpful review, which aided us in improving the document — in particular, for pointing out the idea of correlated rounding. Further, the authors would like to thank Jannik Matuschke for noticing that the weighted matroid rank function falls into the class of gross substitutes.

\bibliographystyle{plainurl}
\bibliography{ref.bib}

\newpage
\appendix
\section{Technical Notation} \label{appendix:notation}

For any event $\mathcal{E}$, we denote by $\event{\mathcal{E}} \in \{0,1\}$ the indicator variable such that $\event{\mathcal{E}} = 1$, if $\mathcal{E}$ occurs, and $\event{\mathcal{E}} = 0$, otherwise. For any non-negative integer $n \in \mathbb{N}$, we define $[n] = \{1,2, \dots, n\}$. For any vector $\mu \in \mathbb{R}^k$ and set $S \subseteq [k]$, we define $\mu(S) = \sum_{i \in S} \mu_i$. Moreover, we use the notation $t \in [a, b]$ (for $a \leq b$) for some time index $t$, in place of $t \in [T] \cap [a, \ldots , b\}$. Unless otherwise noted, we use the indices $i$, $j$ or $i'$ to refer to arms and $t$, $t'$ or $\tau$ to refer to time. Let $\A^{\pi}_t \in \I$ be the set of arms played by some algorithm $\pi \in \{\IS, \IG, \UCB\}$ at time $t$. Unless otherwise noted, all expectations are taken over the randomness of the offsets $\{r_i\}_{i \in [k]}$ and the reward realizations.

\section{Concentration inequalities}
\begin{theorem}[Hoeffding's Inequality \cite{Hoeffding}]\label{appendix:concentration:hoeffding}
Let $X_1, \dots, X_n$ be independent identically distributed random variables with common support in $[0,1]$ and mean $\mu$. Let $Y = X_1 + \dots + X_n$. Then for any $\delta \geq 0$,
\begin{align*}
    \Pro{Y-n\mu \geq \delta} \leq e^{-2\delta^2/n} \text{   and   }\Pro{Y-n\mu \leq -\delta} \leq e^{-2\delta^2/n}.
\end{align*}
\end{theorem}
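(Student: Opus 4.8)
The plan is to establish the upper tail bound via the Chernoff (exponential moment) method and then recover the lower tail by a symmetry argument. First I would fix a free parameter $s > 0$ and apply Markov's inequality to the exponentiated deviation: since $x \mapsto e^{sx}$ is increasing, $\Pro{Y - n\mu \geq \delta} = \Pro{e^{s(Y-n\mu)} \geq e^{s\delta}} \leq e^{-s\delta}\, \E{e^{s(Y-n\mu)}}$. Because the $X_i$ are independent and identically distributed, the moment generating factor tensorizes: $\E{e^{s(Y-n\mu)}} = \prod_{i=1}^n \E{e^{s(X_i - \mu)}} = \big(\E{e^{s(X_1-\mu)}}\big)^n$, reducing the task to controlling a single centered factor.

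The crux of the argument is to bound the single-variable factor $\E{e^{s(X_1 - \mu)}}$, which is exactly the content of \emph{Hoeffding's lemma}: if $Z$ is a zero-mean random variable with $Z \in [a,b]$ almost surely, then $\E{e^{sZ}} \leq e^{s^2(b-a)^2/8}$. I would prove this using convexity of $x \mapsto e^{sx}$: for every $x \in [a,b]$ we have $e^{sx} \leq \frac{b-x}{b-a}e^{sa} + \frac{x-a}{b-a}e^{sb}$. Taking expectations and using $\E{Z} = 0$ eliminates the linear term and yields $\E{e^{sZ}} \leq \frac{b}{b-a}e^{sa} - \frac{a}{b-a}e^{sb}$. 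Writing this right-hand side as $e^{\phi(s)}$ with $\phi(s) = \ln\big(\frac{b}{b-a}e^{sa} - \frac{a}{b-a}e^{sb}\big)$, a direct computation gives $\phi(0) = \phi'(0) = 0$, while $\phi''(s) = p(1-p)(b-a)^2$ for a weight $p \in [0,1]$ depending on $s$, whence $\phi''(s) \leq (b-a)^2/4$ uniformly. A second-order Taylor expansion with remainder then delivers $\phi(s) \leq s^2(b-a)^2/8$.

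Applying the lemma with $Z = X_1 - \mu \in [-\mu, 1-\mu]$, so that the range satisfies $b - a = 1$, gives $\E{e^{s(X_1-\mu)}} \leq e^{s^2/8}$, and therefore $\Pro{Y - n\mu \geq \delta} \leq e^{-s\delta + ns^2/8}$. The last step is to optimize over the free parameter: the exponent $-s\delta + ns^2/8$ is a convex quadratic in $s$ minimized at $s = 4\delta/n > 0$, and substituting this value collapses the exponent to $-2\delta^2/n$, which proves the upper tail. For the lower tail, I would apply the upper-tail bound to the variables $-X_1, \dots, -X_n$, which are again i.i.d., supported in $[-1,0]$, with mean $-\mu$; since $\Pro{Y - n\mu \leq -\delta} = \Pro{(-Y) - n(-\mu) \geq \delta}$, the identical estimate yields $e^{-2\delta^2/n}$.

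The main obstacle is Hoeffding's lemma, and specifically the uniform bound $\phi''(s) \leq (b-a)^2/4$: one must recognize that after the convexity step the quantity $p(1-p)$ appearing in $\phi''$ is the variance of a Bernoulli-type weight and is therefore at most $1/4$. Everything else — the Chernoff step, the tensorization across the independent coordinates, and the final one-dimensional optimization in $s$ — is routine calculus.
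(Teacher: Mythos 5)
Your proof is correct: the Chernoff/exponential-moment step, the tensorization over the independent coordinates, Hoeffding's lemma via convexity and the second-order Taylor bound $\phi''(s) \leq (b-a)^2/4$, the optimization at $s = 4\delta/n$, and the reflection argument for the lower tail are all sound, and this is the standard proof of the result. Note that the paper does not prove this statement at all --- it is quoted verbatim as a classical theorem with a citation to Hoeffding's original work --- so there is no in-paper argument to compare against; your write-up simply supplies the textbook derivation that the citation points to.
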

\section{Recurrent Submodular Welfare: Omitted Proofs}
\subsection{Correctness and approximation guarantee}

\restatefactalwaysavailable*
\begin{proof}
Recall that at any round $t \in [T]$, the algorithm only chooses a subset $\A_t$ of the elements. Consider any element $i \in \A$ such that $i \in \A_t$ for some $t \in [T]$. By definition of $\A_t$, the interval $L_{i,t} = [t\cdot \frac{1}{d_i} + r_i, (t+1) \cdot \frac{1}{d_i} + r_i)$ contains an integer. It is not hard to see that, in that case, none of the intervals $L_{i,t'}$ for $t' \in [t-d_{i}+1, d_i - 1]$ can contain an integer. Therefore, the last time element $i$ has been chosen must be before $t- d_i$, which implies feasibility with respect to the blocking constraints.
\end{proof}

\restatefactsampling*

\begin{proof}
For any fixed $i \in \A$ and $t \in [T]$, because of the fact that $\frac{1}{d_i} \leq 1$ and $r_i \in [0,1]$, the interval $L_{i,t} = [t\cdot \frac{1}{d_i} + r_i, (t+1) \cdot \frac{1}{d_i} + r_i)$ clearly contains at most one integral point. The event that $\{[t\cdot \frac{1}{d_i} + r_i, (t+1) \cdot \frac{1}{d_i} + r_i)\cap \mathbb{N} \neq \emptyset \}$ is equivalent to the event that a continuous window of size equal to $\frac{1}{d_i}$ starting from the (real) point $t\cdot \frac{1}{d_i} + r_i$ contains an integer. For $r_i$ ranging in $[0,1]$, the starting point of the interval lies between $t\cdot \frac{1}{d_i}$ and $t\cdot \frac{1}{d_i} + 1$. It is not hard to see that fraction of possible realizations of $r_i$ such that the window contains an integer equals its size. The fact follows since for any $i \in \A$, the window has size $\frac{1}{d_i}$ and the offset $r_i$ is sampled uniformly at random from $[0,1]$.
\end{proof}

\restateStructuralCP*
\begin{proof}
In order to prove the lemma, we first construct an (non-convex) IP upper bound on the optimal expected reward over $T$ rounds, based on the multi-linear extension of $f$.

\begin{align}
\textbf{maximize:}& \sum_{i \in [T]} \sum_{S \subseteq \A} f(S) \prod_{i \in S} x_{i,t} \prod_{i \notin S} (1-x_{i,t}) \tag{\textbf{MP}} \label{mp:MP}\\
\textbf{s.t.}& \sum_{t' \in [t,t+d_i-1]} x_{i,t'} \leq 1, \forall i \in \A, \forall t \in [T] \label{mp:window} \\
\qquad &\x_{t} \in \{0,1\}^k, \forall t \in [T] \notag
\end{align}

In the formulation \eqref{mp:MP}, each variable $x_{i,t}$ can be thought of as the 0-1 indicator of playing arm $i\in \A$ at time $t \in [T]$. Intuitively, constraints \eqref{mp:window} of \eqref{mp:MP} indicate the fact that, due to blocking constraints, each arm $i \in \A$ can be played at most once every $d_i$ steps. Clearly, any optimal solution to \rsm can be mapped onto the above formulation and, thus, the optimal solution of \eqref{mp:MP} provides an upper bound on $\OPT(T)$.

Let $\x_t \in \{0,1\}^k$ for each $t \in [T]$ be a vector such that $(\x_t)_i = x_{i,t}$. Notice that for any integral $\x \in \{0,1\}^k$, the multi-linear extension is equal to the concave closure of any set function $f$, that is, $f^+(\x) = F(\x)$. Therefore, \eqref{mp:MP} remains an upper bound, even if we replace its objective function with $g(\x_1, \dots, \x_T) = \sum_{t \in [T]} f^+(\x_t)$.

We now fix any optimal solution $\{x^*_{i,t}\}_{i\in \A, t \in [T]}$ to \eqref{mp:MP} under the objective $g(\x_1, \dots, \x_T) = \sum_{t \in [T]} f^+(\x_t)$. Let us define the variables $\{z'_{i}\}_{i \in \A}$, such that
\begin{align*}
    z'_i = \frac{1}{T} \sum_{t \in [T]} x^*_{i,t} \geq 0, \quad \forall i \in \A.
\end{align*}
In the above definition, each $z'_i$ is the fraction of time an element $i \in \A$ is chosen in an optimal solution. Let $\z' \in [0,1]^k$, such that $(\z')_i = z'_i$ $\forall i \in \A$.

By concavity of $f^+$, we have
\begin{align*}
g(\x^*_1, \dots, \x^*_T) = \sum_{t \in [T]} f^+(\x^*_t) = T \sum_{t \in [T]} \frac{1}{T} f^+(\x^*_t) \leq T f^+(\frac{1}{T}\sum_{t \in [T]}\x^*_t) = T f^+(\z'),
\end{align*}
where the inequality follows by the fact that $\z'$ can be thought of as a convex combination of $\{\x^*_1, \dots, \x^*_T\}$.

Moreover, for each $i \in \A$ and by averaging constraints \eqref{mp:window} of \eqref{mp:MP} over all $t \in [T]$, we can see that 
\begin{align*}
\frac{1}{T}\sum_{t \in [1,d_i-1]} t x^*_{i,t} + \frac{1}{T} \sum_{t \in [d_i,T]} d_i x^*_{i,t} \leq 1 \Leftrightarrow \frac{1}{T} \sum_{t \in [T]} d_i x^*_{i,t} \leq 1 + \frac{1}{T}\sum_{t \in [1,d_i-1]} (d_i - t) x^*_{i,t}. 
\end{align*}
Given the fact that $\sum_{t \in [1, d_i-1]} x^*_{i,j} \leq 1$, the above inequality immediately implies that
\begin{align*}
    z'_i \leq \frac{1}{d_i}\left(1 + \frac{d_i-1}{T} \right)\quad \forall i \in \A.
\end{align*}
Consider now the assignment $z_i = \left(1 + \frac{d_{\max}-1}{T} \right)^{-1} z'_i$, $\forall i \in \A$. For this assignment, we can easily verify that the constraints of \eqref{cp:CP} are trivially satisfied, since $0 \leq z_i \leq \frac{1}{d_i}$, $\forall i \in \A$.

Let $\z \in [0,1]^k$, such that $(\z)_i = z_i$ $\forall i \in \A$. By the above analysis, we can see that
\begin{align*}
    \z = \z' - \frac{d_{\max}-1}{T + d_{\max} -1} \z',
\end{align*}
where we use the fact that $\frac{1}{1+\beta} = 1 - \frac{\beta}{1+ \beta}$ for any $\beta \in \mathbb{R}$.
Finally, by concavity of $f^+$ we have
\begin{align*}
    f^+(\z) &= 
    f^+\left(\left(1- \frac{d_{\max}-1}{T + d_{\max} -1} \right)\z' + \frac{d_{\max}-1}{T + d_{\max} -1} \bm{0} \right)\\
    &\geq \left(1- \frac{d_{\max}-1}{T + d_{\max} -1} \right)f^+(\z') + \frac{d_{\max}-1}{T + d_{\max} -1} f^+(\bm{0})\\
    &\geq f^+(\z') - \frac{d_{\max}-1}{T + d_{\max} -1} f(\A),
\end{align*}
where the last inequality follows by the facts that $f^+(\bm{0}) = f(\bm{0}) = 0$ and $f^+(\z') \leq f^+(\bm{1}) = f(\A)$, since $f$ is monotone.

Therefore, by exhibiting a feasible solution $\z$ of \eqref{cp:CP} such that 
$$
T f^+(\z) \geq T f^+(\z') - \mathcal{O}(d_{\max} f(\A)) \geq g(\x^*_1, \dots, \x^*_T) - \mathcal{O}(d_{\max} f(\A)) \geq \OPT(T) - \mathcal{O}(d_{\max} f(\A)),
$$
the proof is completed.
\end{proof}

\subsection{Hardness of approximation}
\label{appendix:hardness}
The goal of this section is to show that the $\left(1-\frac{1}{e}\right)$-multiplicative factor in the approximation guarantee of Theorem~\ref{thm:interleavedSubmodular} cannot be improved, unless $\textbf{P} = \textbf{NP}$. Specifically, we prove the following result:

\restateSubmodularHardness* 

In order show the above hardness result, we study for simplicity the average version of \rsm, where the objective is to maximize the average reward over $T$ time steps, namely, $\frac{1}{T}\left(\sum_{t \in [T]} f(\A_t)\right)$, where $\A_t$ is the set of elements used at time $t \in [T]$. Notice that in the average case, the additive term in the approximation guarantee of \ig, as presented in Theorem~\ref{thm:interleavedSubmodular}, vanishes as $T \to \infty$. Let $\OPT$ be the average reward collected by any optimal algorithm for \rsm. 

Our proof relies on a reduction from the Submodular Welfare (SW) problem~\cite{Von08}, in the special case where the players have identical utility functions. The problem can be formally defined as follows:

\begin{definition}[Submodular Welfare with Identical Utilities (SWIU)]
We consider a set of $k$ items and $m$ players, each associated with the same monotone submodular utility function $u:2^{[k]} \rightarrow \mathbb{R}_{\geq 0}$ over the items. The goal is to partition the $k$ items into $m$ subsets $S_1,\dots, S_m$, such that to maximize $\sum_{i \in [m]}u(S_i)$.
\end{definition}

As noted in~\cite{Von08}, the hardness result presented in~\cite{KLMM05} for the SW problem also holds for SWIU, namely, the special case of SW where all the players have the same utility function. Note, also that the \rsm problem is defined in the {\em value oracle} model, as we are only allowed to make queries of the function value for any input set.

\begin{theorem}[\cite{KLMM05}]\label{thm:hardness:swiu}
For any $\epsilon > 0$, there exists no polynomial-time $\left(1 - \frac{1}{e} + \epsilon\right)$-approximation algorithm for the SWIU problem in the value oracle model, unless ${\bf P} = {\bf NP}$.
\end{theorem}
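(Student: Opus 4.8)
The plan is to establish this lower bound, following the reduction of \cite{KLMM05}, by a gap-preserving reduction from a promise problem that already exhibits the $1-\frac1e$ threshold, namely the NP-hardness of approximating Max-$k$-Coverage within a factor $1-\frac1e+\epsilon$ (Feige). From such an instance I would build an SWIU instance whose single identical utility $u$ is an explicitly represented \emph{coverage function}: fix a universe $\Omega$, associate with every item $j$ a subset $A_j\subseteq\Omega$, and set $u(S)=\big|\bigcup_{j\in S}A_j\big|$, which is monotone submodular and whose value on any queried set is computable in polynomial time. This last point matters: since the oracle is efficiently simulable, the resulting hardness holds under $\mathbf{P}\neq\mathbf{NP}$ rather than being merely an oracle lower bound.

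The conceptual first step is to rewrite the welfare of a partition $(S_1,\dots,S_m)$ of the items under an identical coverage utility. For each element $\omega\in\Omega$, let $C_\omega=\{j:\omega\in A_j\}$ be the set of items covering it. Then
$$\sum_{i\in[m]} u(S_i)=\sum_{\omega\in\Omega}\big|\{\,i\in[m]: S_i\cap C_\omega\neq\emptyset\,\}\big|,$$
so each element contributes the number of players that receive at least one item covering it. Maximizing welfare is therefore exactly the problem of \emph{spreading} the covering items of every element across as many players as possible. Under the uniform random allocation of \cite{Von08} (each item to an independent uniform player), element $\omega$ contributes $m\big(1-(1-\tfrac1m)^{|C_\omega|}\big)$ in expectation, which tends to the $1-\frac1e$ fraction of its trivial maximum $\min(|C_\omega|,m)$; this is precisely the quantity a hard instance must pin down as an upper bound.

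Given this reformulation, the construction uses Feige's \emph{partition systems} to engineer $\Omega$ so that perfect spreading is achievable exactly when the coverage instance is a YES instance. Concretely, I would set $m=k$ and install, per element of the coverage universe, a partition-system gadget on a fresh block of $\Omega$, so that in the YES direction a genuine cover by $k$ sets induces an allocation routing every element's covering items into $k$ distinct players, achieving the trivial upper bound $W:=\sum_\omega\min(|C_\omega|,m)$; whereas in the NO direction the absence of a small cover, combined with the coverage-gap guarantee of the partition system, caps the attainable per-element spread so that every allocation yields total welfare at most $(1-\frac1e+\epsilon)W$. The $W$ versus $(1-\frac1e+\epsilon)W$ separation then transfers the Max-Coverage inapproximability verbatim.

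The main obstacle is the \textbf{soundness (NO) direction}: ruling out that a \emph{clever non-uniform} allocation beats the $1-\frac1e$ fraction, given that there are no per-player capacities and the players are allowed to receive arbitrarily many items. Here the identical-utility restriction is both the source of the difficulty and the saving grace — one cannot tailor distinct functions to distinct players, but by the same token submodularity lets me bound each player's contribution by its marginal coverage and aggregate these bounds against the partition-system gap, showing that concentrating many items on a few players cannot recover more than a $(1-\frac1e)$ fraction of $W$ unless the covering structure is itself ``resolvable'', which the gap instance forbids. I note that the $1-\frac1e$ constant obtained here is exactly the one that appears through the correlation gap (Lemma~\ref{lem:correlationgap}) on the algorithmic side, confirming that the bound is asymptotically tight.
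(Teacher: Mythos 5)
This theorem is not proved in the paper at all: it is imported as a black box from \cite{KLMM05}, together with the observation (attributed to \cite{Von08}) that the hard instances constructed there already have identical utilities. So there is no ``paper proof'' to compare against; what you have written is an attempt to reconstruct the external result. Your sketch does have the right general shape --- the actual argument of \cite{KLMM05} does use explicitly represented coverage utilities built from Feige-style partition systems, and your reformulation of the welfare as $\sum_{\omega}\bigl|\{i : S_i\cap C_\omega\neq\emptyset\}\bigr|$, i.e.\ as a \emph{spreading} objective, is exactly the right way to see why $1-\frac{1}{e}$ is the relevant threshold. The remark that explicit representability is what upgrades an oracle lower bound to an $\mathbf{NP}$-hardness is also correct and worth making.

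However, there are two genuine gaps. First, the reduction cannot be run black-box from the Max-$k$-Coverage promise problem, and the problem already appears in the \emph{completeness} direction: a YES instance of Max-$k$-Coverage only promises $k$ sets whose union is $\Omega$, which says nothing about whether the covering items of each element $\omega$ can be routed to $|C_\omega|$ distinct players simultaneously; existence of a small cover does not produce an allocation attaining $W=\sum_\omega\min(|C_\omega|,m)$. To get that, one has to go back to the underlying gap problem (the multi-prover proof system behind Feige's set-cover hardness) and build the partition systems so that a consistent proof/labeling induces a partition of the items in which every element's covering items land in distinct parts --- this is where the real construction lives, and your sketch replaces it with the phrase ``install a partition-system gadget.'' Second, the soundness direction, which you yourself flag as the main obstacle, is only gestured at: the claim that submodularity plus the partition-system gap caps every non-uniform allocation at a $(1-\frac{1}{e}+\epsilon)$ fraction of $W$ is precisely the technical core of \cite{KLMM05} and does not follow from the uniform-random-allocation calculation you give (that calculation bounds the performance of one particular allocation, not of all allocations). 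As written, the proposal is a plausible plan rather than a proof; for the purposes of this paper, citing \cite{KLMM05} as the authors do is the appropriate move.
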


We start from a simple construction for the non-average case of \rsm in order to show how our problem is directly associated with SWIU: Consider an instance of SWIU of $k$ items and $m$ players. Let $u:2^{[k]} \rightarrow \mathbb{R}_{\geq 0}$ be the monotone submodular utility function which is commonly used by all players. Given the above instance, we can construct in polynomial time an instance of \rsm as follows: Let $\A$ be the set of $k$ elements, each corresponding to an item, and let $f:2^{\A} \rightarrow \mathbb{R}_{\geq 0}$ be our function, chosen such that $f \equiv u$. We set the delay of each element $i \in \A$ as well as the time horizon to be equal to the number of players, namely, $ d_i = T  = m$ for each $i \in \A$.

Clearly, in the above construction where the delays are all equal to the time horizon, each element can be chosen at most once by any algorithm for \rsm. Therefore, the above constructed instance of \rsm exactly corresponds to SWIU, given that any solution to latter immediately translates into a solution of \rsm of the same total reward, and the opposite. 

The above construction immediately relates the two problems in the case where the delays can be of the same order as the time horizon. However, it does not rule out the possibility that the \rsm problem might become easier in the special case where $d_{\max} = o(T)$. Indeed, one could argue that for small enough delays, exploiting the possible periodicity of the \rsm solutions might lead to improved approximation guarantees. Notice, further, that the approximation guarantee we provide in Theorem~\ref{thm:interleavedSubmodular} for \IS becomes meaningless in the above scenario, since the additive loss for $d_{\max} = T$ becomes $\mathcal{O}(T\cdot f(\A))$.

In order to overcome the above technical issue and show that the multiplicative factor of $\left(1 - \frac{1}{e}\right)$ in Theorem~\ref{thm:interleavedSubmodular} cannot be improved, we map any instance of SWIU onto an instance of \rsm such that $T \gg d_{\max}$. Given any instance of SWIU, we can construct in polynomial time an instance of \rsm as follows: We define $\A$ to be the set of $k$ items, $f \equiv u$ to be the monotone submodular function and $d_i = m$ $\forall i \in \A$ to be the delay of all elements. In this case, we consider a time horizon $T = m \cdot \lceil\text{poly}(k,m) \rceil$, where by $\text{poly}(k,m)$ we denote some polynomial function in $k$ and $m$. 

We first show that, without loss of generality, we can focus our attention on solutions to the average case of \rsm that exhibit a periodic structure of period $m$.

\begin{lemma}\label{lem:hardness:periodic}
Let $\nu: [T] \rightarrow 2^{\A}$ be any feasible assignment to the above instance of \rsm of average reward $R$. We can construct in polynomial time a feasible assignment $\nu': [T] \rightarrow 2^{\A}$ of average reward at least $R' \geq R$, such that $\nu'(t) = \nu(t + m)$ $\forall t \in \mathbb{N}$, namely, $\nu'$ is a periodic assignment of period $m$.
\end{lemma}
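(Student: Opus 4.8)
The plan is to exploit the two special features of the constructed instance: every delay equals $m$, and the horizon $T = m N$ is an exact multiple of $m$ (with $N = \lceil\mathrm{poly}(k,m)\rceil$). First I would partition $[T]$ into $N$ consecutive blocks $B_\ell = \{(\ell-1)m+1, \dots, \ell m\}$, each of length exactly $m$. The crucial observation is that, because every delay equals $m$, any feasible assignment plays each element \emph{at most once} inside a single block: if $i$ were played at two times $t_1 < t_2$ lying in the same block, then $t_2 - t_1 \leq m-1 < d_i$, contradicting feasibility of $\nu$. Hence the restriction of $\nu$ to any single block $B_\ell$ is a valid ``one period'' pattern, i.e.\ an assignment of each element to at most one of the $m$ phases within the period.

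Second, I would run a simple averaging (pigeonhole) argument over the blocks. Writing $R_\ell = \frac{1}{m}\sum_{p=1}^{m} f\bigl(\nu((\ell-1)m+p)\bigr)$ for the average reward of $\nu$ restricted to block $B_\ell$, the overall average reward satisfies $R = \frac{1}{N}\sum_{\ell=1}^{N} R_\ell$. Therefore some block $B_{\ell^\ast}$ attains $R_{\ell^\ast} \geq R$, and such a block is found in polynomial time by evaluating $f$ on the $\mathcal{O}(T)$ played sets.

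Finally, I would define $\nu'$ to be the period-$m$ assignment obtained by repeating the pattern of $B_{\ell^\ast}$ across $[T]$, namely $\nu'(t) = \nu\bigl((\ell^\ast-1)m + ((t-1)\bmod m) + 1\bigr)$, so that $\nu'(t) = \nu'(t+m)$ for all $t$. Its average reward is exactly $R_{\ell^\ast} \geq R$, giving $R' \geq R$. The remaining task is to verify feasibility of $\nu'$: by the first step each element is played at most once per period, say at phase $p$, so in $\nu'$ it is played precisely at the times $p, p+m, p+2m, \dots$, which are spaced exactly $m = d_i$ apart and hence respect the blocking window $[t, t+m-1]$.

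I expect the only genuinely delicate point to be the feasibility check across block boundaries: one must confirm that spacing the plays exactly $m$ apart is permitted, since the blocking window has length $m-1$ and a gap of $m$ is tight but legal, and that periodically repeating a within-block pattern can never introduce a violation. Everything else — the block partition, the within-block uniqueness of each element, and the averaging step selecting the best block — is routine, and the polynomial-time bound follows because the construction only inspects and copies one of the $N$ blocks.
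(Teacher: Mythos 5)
Your proposal is correct and follows essentially the same route as the paper: an averaging/pigeonhole argument locates a length-$m$ window whose average reward is at least $R$, and that window is repeated periodically, with feasibility following from the fact that all delays equal $m$. Your write-up is in fact slightly more explicit than the paper's on the two delicate points (each element appears at most once per block, and a spacing of exactly $m$ between consecutive plays is legal), which is a welcome addition rather than a deviation.
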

\begin{proof}
Given that the average reward of the assignment $\nu$ is $R$, there must exist a continuous subsequence of rounds of length $m$, that is, $\{t, \dots, t + m -1\}$ for some $t \in [T-m]$, such that
\begin{align*}
    \frac{1}{m}\sum^{t+m-1}_{\tau = t} f(\nu(t)) \geq R.
\end{align*}
In the opposite case, we immediately get a contradiction to the fact that the average reward is at least $R$.

Let $L$ with $|L| = m$ be such a sequence. We now construct the periodic assignment $\nu'$ by repeating the assignment of the subinterval $L$, as follows:
\begin{align*}
\nu'(t) = \nu(L(t \mod m)) \in 2^{\A} ~\forall t \in [T].  
\end{align*}
It is not hard to verify that since $d_i = m$ for each $i \in \A$ and since $L$ is a subsequence of a feasible assignment of length $m$, the assignment $\nu'$ never violates the blocking constraints. Moreover, the average reward of $\nu'$ equals the average reward of the interval $L$ which is at least $R$. Finally, notice that the subsequence $L$ can be found in polynomial time, given the fact that the time horizon $T$ is defined to be polynomial in $k$ and $m$.
\end{proof}

We can now complete the proof of our hardness result. 
\newline

\noindent{\it Proof of Theorem~\ref{thm:submodular:hardness}.}
We prove the result via a reduction from the SWIU problem to the average version of the \rsm. Clearly, the average and non-average version of \rsm share the same approximability status, as the two problems are essentially identical up to a scaling of the objective function. 

Given an instance $I$ of SWIU, we can construct in polynomial time an instance $I'$ of the average version of \rsm, as described above. Let $\OPT_{SWIU}(I)$ and $\OPT_{\rsm}(I')$ be the optimal solution of SWIU and \rsm on the corresponding instance, respectively. 

We first show that when $\OPT_{SWIU}(I) \geq R$ for some reward $R$, then we necessarily have that $\OPT_{\rsm}(I') \geq \frac{R}{m}$. Indeed, let $L:[m] \rightarrow 2^{[k]}$ be an allocation that achieves a reward $R' = \OPT_{SWIU}(I) \geq R$ for the instance $I$ of SWIU. As indicated in proof of Lemma~\ref{lem:hardness:periodic}, we can construct in polynomial time a periodic assignment for the \rsm problem of average reward exactly $\frac{R'}{m}$, which implies that $\OPT_{\rsm}(I') \geq \frac{R'}{m} \geq \frac{R}{m}$.

Now, we would like to show that if $\OPT_{SWIU}(I) \leq \alpha R$ for some reward $R$ and $\alpha \in (0,1)$, then it has to be that $\OPT_{\rsm}(I') \leq \alpha \frac{R}{m}$. We prove the statement via its contrapositive, assuming that $\OPT_{\rsm}(I') > \alpha \frac{R}{m}$ for some reward $R$ and $\alpha \in (0,1)$. Let $\frac{R'}{m}> \alpha \frac{R}{m}$ be the optimal average reward of \rsm. By Lemma~\ref{lem:hardness:periodic}, we can assume w.l.o.g. that the assignment $\OPT_{\rsm}(I')$, that achieves an average reward of $\frac{R'}{m}$, is a periodic assignment of period $m$. However, given that all the delays are equal to $m$ in the instance $I'$ of \rsm, it is easy to see that in any period of $m$ consecutive rounds, each element is played at most once. Moreover, the average reward of each period is exactly $\frac{R'}{m}$. Therefore, any continuous subsequence of length $m$ in the solution of the \rsm naturally induces a solution to the instance $I$ of SWIU of total reward exactly $R'$. This, in turn, implies that $\OPT_{SWIU}(I) \geq R' \geq \alpha R$.

By the above discussion, we have completed the proof of a reduction from SWIU to \rsm. Therefore, any polynomial-time $\left(1 - \frac{1}{e} + \epsilon\right)$-approximation algorithm for \rsm, for some $\epsilon>0$, would imply a $\left(1 - \frac{1}{e} + \epsilon\right)$-approximation algorithm for SWIU. However, by Theorem~\ref{thm:hardness:swiu} this is not possible, unless $\textbf{P} = \textbf{NP}$.
\qed
\newline

We believe that, through a similar reduction as above, we can prove information-theoretic hardness of the \rsm problem by leveraging the results in~\cite{MSV08}. We leave this as future work.
\section{Matroid Blocking Bandits: Omitted Proofs}

\subsection{The full-information setting}
\label{appendix:mbb:fullinformation}
We now provide an alternative analysis for \ig (\IG), which, as opposed to Theorem~\ref{thm:interleavedSubmodular} for the \rsm problem, does not rely on the concave closure of submodular functions. We first note that the correctness of \IG follows directly by Fact~\ref{inter:fact:alwaysavailable}, that is, the set of sampled arms $\G_t$ at each round $t \in [T]$ only contains available (i.e., non-blocked) arms, in combination with the fact that the algorithm always plays an independent set $\A_t \in \I_{\M|\G_t} \subseteq \I$. 

For any set $S \subseteq \A$, let $\z(S) = \sum_{i \in S} z_i$. Consider the following LP, based on the matroid polytope associated with $\M$:
\begin{align}
\textbf{maximize: }& T \cdot \sum_{i \in \A} \mu_{i} z_{i} \tag{\textbf{LP}} \label{lp:LP}\\
\textbf{s.t. }& \z(S) \leq \rk(S), \forall S\subseteq \A \label{flp:rank}\\
\qquad & {\bf 0} \preceq \z \preceq \bm{d}^{-1}. \label{flp:window}
\end{align}

In \eqref{lp:LP}, each variable $z_{i}$ can be thought of as the fraction of rounds where arm $i\in \A$ is played. Intuitively, constraints \eqref{flp:window} of \eqref{lp:LP} indicate the fact that, due to blocking constraints, the fraction of time we can play an arm $i \in \A$ is upper bounded by $\frac{1}{d_i}$, while constraints \eqref{flp:rank} impose the rank restrictions in order to guarantee that the set of arms played at any round $t$ corresponds to an independent set of the matroid $\M$. 

As we show in the following lemma, the formulation \eqref{lp:LP} yields an approximate upper bound on $\OPT(T)$, while the approximation becomes exact as $T$ increases.

\begin{restatable}{lemma}{restateStructuralflp}\label{lem:structural:flp}
Let $\Rew^{LP}(T)$ be the optimal solution to \eqref{lp:LP} and $\OPT(T)$ be the optimal expected reward over $T$ rounds. We have
$
\Rew^{LP}(T) \geq \OPT(T) - \mathcal{O}\left(d_{\max} \rk(\M)\right).
$
\end{restatable}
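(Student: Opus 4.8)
The plan is to mirror the proof of Lemma~\ref{lem:structural:CP}, replacing the concave-closure objective by the linear matroid objective and adding per-round rank constraints. First I would write down an integer program that captures $\OPT(T)$: introduce $0$-$1$ variables $x_{i,t}$ indicating that arm $i$ is played at round $t$, with objective $\sum_{t \in [T]} \sum_{i \in \A} \mu_i x_{i,t}$, the blocking constraints $\sum_{t' \in [t, t+d_i-1]} x_{i,t'} \leq 1$ for all $i,t$, and the matroid constraints $\sum_{i \in S} x_{i,t} \leq \rk(S)$ for all $S \subseteq \A$ and all $t$. Since feasibility (both blocking and independence) never depends on the reward realizations, an optimal algorithm may be taken to follow a fixed schedule $\A_1,\A_2,\dots$, so its expected reward is $\sum_{t}\sum_{i \in \A_t}\mu_i$; mapping this schedule to indicators shows the IP optimum is at least $\OPT(T)$. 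I would state the matroid constraints as rank inequalities precisely so that they survive the averaging step.

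Next I would average over time, setting $z'_i = \frac{1}{T}\sum_{t \in [T]} x^*_{i,t}$ for a fixed optimal integral solution $\{x^*_{i,t}\}$. Because the objective is linear in the $x_{i,t}$, averaging is lossless: $T\sum_{i\in\A}\mu_i z'_i$ equals the IP optimum and hence is $\geq \OPT(T)$ — this is exactly where the matroid case is cleaner than the \rsm case, since no concavity is needed. Averaging the rank constraints over $t$ yields $\z'(S) \leq \rk(S)$ for every $S$, and averaging the blocking constraints as in Lemma~\ref{lem:structural:CP} yields $z'_i \leq \frac{1}{d_i}\bigl(1 + \frac{d_i-1}{T}\bigr)$.

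Then I would rescale, defining $z_i = \bigl(1 + \frac{d_{\max}-1}{T}\bigr)^{-1} z'_i$, and verify that $\z$ is feasible for~\eqref{lp:LP}: the cap $z_i \leq \frac{1}{d_i}$ follows from the averaged blocking bound together with $d_i \leq d_{\max}$, while $\z \preceq \z'$ gives $\z(S) \leq \z'(S) \leq \rk(S)$, so the rank constraints are preserved. This last preservation is the one genuinely new verification beyond the CP proof, and it is immediate because downscaling only shrinks each partial sum $\z(S)$. Finally, using $\frac{1}{1+\beta} = 1 - \frac{\beta}{1+\beta}$ with $\beta = \frac{d_{\max}-1}{T}$, I obtain $\Rew^{LP}(T) \geq T\sum_i \mu_i z_i \geq \OPT(T) - \frac{d_{\max}-1}{T + d_{\max}-1}\OPT(T)$, and I convert the multiplicative shrinkage into the claimed additive loss via the crude bound $\OPT(T) \leq T\cdot\rk(\M)$ (each round contributes at most $\rk(\M)$, since $\mu_i \leq 1$ and a play is an independent set). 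This makes the loss at most $(d_{\max}-1)\rk(\M) = \mathcal{O}(d_{\max}\rk(\M))$. The main thing to get right is the bookkeeping in this final step, matching the multiplicative factor to the advertised $\mathcal{O}(d_{\max}\rk(\M))$ additive term; the structural steps are essentially forced by the analogy with Lemma~\ref{lem:structural:CP}.
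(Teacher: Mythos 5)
Your proposal is correct and follows essentially the same route as the paper's proof: the same IP relaxation with per-round rank constraints, the same observation that an optimal full-information schedule is deterministic, the same time-averaging and $\left(1+\frac{d_{\max}-1}{T}\right)^{-1}$ rescaling, and the same conversion of the multiplicative shrinkage into an additive $\mathcal{O}(d_{\max}\rk(\M))$ loss via $\OPT(T)\leq T\cdot\rk(\M)$. The points you flag as the genuinely new verifications (rank constraints surviving averaging and downscaling) are exactly the ones the paper checks.
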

\begin{proof}
In order to prove the Lemma, we first construct an IP upper bound on the optimal expected reward over $T$ rounds, $\OPT(T)$. Then, we construct \eqref{lp:LP} by averaging over time the 0-1 variables of the IP. For any set $S \subseteq \A$, let $\x_t(S) = \sum_{i \in S} x_{i,t}$. 
\begin{align}
\textbf{maximize:}& \sum_{i \in [T]} \sum_{i \in \A} \mu_{i} x_{i,t} \tag{\textbf{IP}} \label{lp:IP}\\
\textbf{s.t.}& \sum_{t' \in [t,t+d_i-1]} x_{i,t'} \leq 1, \forall i \in \A, \forall t \in [T] \label{lp:window} \\
\qquad & \x_t(S) \leq \rk(S), \forall S\subseteq \A, \forall t \in [T] \label{lp:rank} \\
\qquad & \x_{t} \in \{0,1\}^k, \forall t \in [T]. \notag
\end{align}

In \eqref{lp:IP}, each variable $x_{i,t}$ can be thought of as the 0-1 indicator of playing arm $i\in \A$ at time $t \in [T]$. Intuitively, constraints \eqref{lp:window} of \eqref{lp:IP} indicate the fact that, due to the blocking constraints, each arm $i \in \A$ can be played at most once every $d_i$ steps, while constraints \eqref{lp:rank} impose the rank restrictions due to the matroid $\M$ at any round $t$. Let $\Rew^{IP}(T)$ be the optimal solution to \eqref{lp:IP}.

Fix any (optimal) algorithm and let $\A^{*}_t$ be the set of arms played by the algorithm at round $t$. Notice that the sets $\A^{*}_t$ are deterministic, given that the choices of any full-information algorithm that maximizes the expected cumulative reward are independent of the observed reward realizations. By linearity of expectation, the expected reward collected (over the randomness of the reward realizations) by the optimal algorithm can be expressed as 
\begin{align*}
    \Ex{}{\sum_{t \in [T]} \sum_{i \in \A^{*}_t} X_{i,t}} = \sum_{t \in [T]} \sum_{i \in \A^{*}_t} \Ex{}{X_{i,t}} = \sum_{t \in [T]} \sum_{i \in \A^{*}_t} \mu_i.
\end{align*}
Consider a feasible solution of \eqref{lp:IP} such that for each $i \in \A$ and $t \in [T]$, we set $x_{i,t} = 1$, if $i \in \A^{*}_t$, and $x_{i,t}=0$, otherwise. It is not hard to verify that the objective of \eqref{lp:IP} for this assignment coincides with the expected reward collected by the above optimal algorithm. Moreover, constraints \eqref{lp:window} are satisfied, since for any arm $i \in [T]$ and any window of $d_i$ consecutive time steps, the algorithm can play the arm at most once. Finally, constraints \eqref{lp:rank} are satisfied, since for any time $t$, the set of arms played, $\A^{*}_t$, is an independent set of the matroid $\M$, thus satisfying all the rank constraints. Therefore, by exhibiting a feasible solution of \eqref{lp:IP} that has the same objective value as the expected reward of any optimal algorithm, we conclude that $\Rew^{IP}(T) \geq \OPT(T)$.

Consider any optimal solution $\x^*$ of \eqref{lp:IP} for a time horizon $T$. By constraints \eqref{lp:window}, for any $t \in [T]$ and $i \in \A$, we have $\sum_{t' \in [t,t+d_i-1]} x^*_{i,t'} \leq 1$. By working along the lines of the proof of Lemma~\ref{lem:structural:CP} and averaging constraints \eqref{lp:window} over all $t \in [T]$, we get
\begin{align}
    \frac{1}{T}\sum_{t \in [T]} x^*_{i,t} \leq \frac{1}{d_i} \left(1 + \frac{d_i-1}{T}\right), \forall i \in \A \label{eq:flp:window}. 
\end{align}
Similarly, for any set $S \subseteq \A$, by averaging the inequalities of \eqref{lp:rank} over all rounds $t \in [T]$, we get
\begin{align}
    \frac{1}{T}\sum_{t \in [T]} \x^*_t(S) \leq \rk(S), \forall S \subseteq \A. \label{eq:flp:rank}
\end{align}
Now, consider an assignment of \eqref{lp:LP} such that
\begin{align*}
z_i = \left(1 + \frac{d_{\max}-1}{T}\right)^{-1} \frac{1}{T} \sum_{t \in [T]} x^*_{i,t},~~\forall i \in \A.    
\end{align*}

It is not hard to see that by inequality \eqref{eq:flp:window}, we have $z_{i} \leq \frac{1}{d_i}$ for any $i \in \A$. Moreover, given that $\left(1 + \frac{d_{\max}-1}{T}\right)^{-1} \leq 1$, for any set $S \subseteq \A$, we have that $\sum_{i \in S} z_i \leq \frac{1}{T}\sum_{t \in [T]} \x^*_t(S) \leq \rk(S)$. Therefore, the assignment $\z \in \mathbb{R}_{+}$ with $(\z)_i = z_i$ satisfies constraints \eqref{flp:rank} and \eqref{flp:window} of \eqref{lp:LP}. Considering the objective value of \eqref{lp:LP} for the assignment $\z$, we have that
\begin{align*}
    T \sum_{i \in \A} \mu_i z_i &= T  \sum_{i \in \A} \mu_i \left(1 + \frac{d_{\max}-1}{T}\right)^{-1} \frac{1}{T} \sum_{t \in [T]} x^*_{i,t} \\
    &\geq \left(1 + \frac{d_{\max}-1}{T}\right)^{-1} \sum_{t \in [T]} \sum_{i \in \A} \mu_i x^*_{i,t}\\
    &\geq \left(1 - \frac{d_{\max}-1}{d_{\max}-1 + T}\right) \sum_{t \in [T]} \sum_{i \in \A} \mu_i x^*_{i,t},
\end{align*}
where the last inequality follows by the fact that $\frac{1}{1+\beta} = 1 - \frac{\beta}{1+ \beta}$ for any $\beta \in \mathbb{R}$.
By exhibiting a feasible solution of \eqref{lp:LP} of value greater than $\left(1 - \frac{d_{\max}-1}{d_{\max}-1 + T}\right) \Rew^{IP}(T)$, the lemma follows by the fact that $\Rew^{IP}(T) \geq \OPT(T)$ and that $\OPT(T) \leq T \cdot \rk(\M)$, since the rewards of all arms lie in $[0,1]$.
\end{proof}

We are now ready to complete the proof of the following result.

\restateinterleavedGreedy*

\begin{proof}
Before we proceed with the proof, we first emphasize that the algorithm \IG is not aware of the reward realizations of each round before it plays a subset of arms. Therefore, since the objective it to maximize the cumulative expected reward, we can assume that the reward of each arm $i \in \A$ is deterministic and equal to $\mu_i$.

Let $\z^*$ be an optimal solution to \eqref{lp:LP}. Given the fact that the feasible set of \eqref{lp:LP} is essentially the intersection of the matroid polytope $\mathcal{P}(\M)$ and the (downward-closed) blocking constraints $\z^* \leq \bm{d}^{-1}$, it holds that $\z^* \in \mathcal{P}(\M)$. Therefore, the point $\z^*$ can be expressed as a convex combination of characteristic vectors of $k$ independent sets of $\M$, denoted by $T_1, \dots, T_k $, where $T_j \in \I, \forall j \in [k]$. By Lemma \ref{lem:characteristic}, this in turn induces a probability distribution, $\I(\z^*)$, over $T_1, \dots, T_k$, such that the marginal probability of each element $i \in \A$ being in the sampled set is exactly $z^*_i$.

Conditioned on the random offsets $\{r_i\}_{i \in \A}$, the sequence of sampled sets $\{\G_t\}_{t \in [T]}$ is deterministic and independent of the observed rewards. Let $f_{\M,\mu}(\G_t)$ be the weighted rank function over the subset $\G_t$, that is, the expected reward of a maximum independent set of $\M$ contained in $\G_t$. By denoting as $\G_t \sim {\bf p}$ the random set of elements, where each element $i \in \A$ participates with probability equal to $({\bf p})_i = p_i$, we have that $\G_t \sim {\bf d}^{-1}$ for each $t \in [T]$. The expected reward of \IG can be expressed as
\begin{align*}
    \Rew^{IG}(T) = \sum_{t \in [T]} \E{\mu(\A_t)} = \sum_{t \in [T]} \Ex{\G_t \sim {\bf d}^{-1}}{f_{\M,\mu}(\G_t)} \geq  \sum_{t \in [T]} \Ex{\G_t \sim \z^*}{f_{\M,\mu}(\G_t)},
\end{align*}
where the last inequality follows by Lemma \ref{lem:weightedrank}, namely, the fact that the weighted rank function $f_{\M,\mu}(\G_t)$ is a monotone (non-decreasing) and by the fact that $\z^* \preceq \bm{d}^{-1}$.

Let $F_{\M,\mu}(\z)$ and $f_{\M,\mu}^+(\z)$ be the multi-linear extension and the concave closure of function $f_{\M,\mu}$, respectively. By the correlation gap inequality for submodular functions (see Lemma \ref{lem:correlationgap}), for each vector $\z$, we have that $F_{\M,\mu}(\z) \geq \left(1 - \frac{1}{e}\right)f_{\M,\mu}^+(\z)$. Moreover, by definition of the concave closure, it has to be that $f_{\M,\mu}^+(\z) \geq \Ex{I \sim \I(\z)}{f_{\M, \mu}(I)}$, since $f_{\M,\mu}^+(\z)$ is the maximum valued distribution over independent sets, such that the marginal contribution of each element $i \in \A$ is equal to $z_i$, i.e., $\Prob{I \sim \I(\z)}{i \in I} = z_i$. By combining the above facts, we have that
\begin{align*}
    \sum_{t \in [T]} \Ex{\G_t \sim \z^*}{f_{\M,\mu}(\G_t)} = T \cdot F_{\M,\mu}(\z^*) \geq \left(1 - \frac{1}{e}\right) T \cdot f^+_{\M,\mu}(\z^*) \geq \left(1 - \frac{1}{e}\right) T \cdot \Ex{I \sim \I(\z^*)}{f_{\M, \mu}(I)}.
\end{align*}

Using the fact that the greedy algorithm collects every element in $I$ for any independent set $I \in \I$, we have that $\Ex{I \sim \I(\z^*)}{f_{\M, \mu}(I)} = \Ex{I \sim \I(\z^*)}{\mu(I)}$. Finally, since the marginal probability of each element $i \in \A$ being in $I \sim \I(\z^*)$ equals $z^*_i$, we have
\begin{align*}
T \cdot \Ex{I \sim \I(\z^*)}{f_{\M, \mu}(I)} = T \cdot \Ex{I \sim \I(\z^*)}{\mu(I)} = T \cdot \sum_{i \in \A} \mu_i z^*_i = \Rew^{LP}(T).
\end{align*}
By combining the above relations with Lemma \ref{lem:structural:flp}, we get that
\begin{align*}
    \Rew^{IG}(T) \geq \left(1 - \frac{1}{e}\right) \Rew^{LP}(T) \geq \left(1 - \frac{1}{e}\right) \OPT(T) - \mathcal{O}(d_{\max} \rk(\M)),
\end{align*}
thus, the proof is completed.
\end{proof}

\subsection{The bandit setting and regret analysis}

\restateRegretCoupling*
\begin{proof}
Let $\{\G_t(\off)\}_{t \in [T]}$ be the sequence of sampled arms over $T$ rounds as a function of the sampled offsets $\off \in [0,1]^k$. Moreover, let $X_t(S)$ be the realized rewards of a subset $S \subseteq \A$ of arms at round $t \in [T]$. We denote by $\A^{\pi}_t$ the arms played at round $t\in[T]$ and by $H^{\pi}_t = \{\A^{\pi}_1, X_1(\A^{\pi}_1), \dots, \A^{\pi}_t, X_t(\A^{\pi}_t)\}$ the {\em history} of arm playing and observed realizations up to (and including) time $t$ by algorithm $\pi \in \{\IG, \UCB\}$. Recall that we denote by $\mathcal{R}$ the randomness due to the reward realizations of the arms.

Notice that in the case of \UCB and for fixed offsets, the player's actions only depend on the previous realized rewards of the arms. Thus, for any fixed offset vector $\off^{\UCB}$, we have
\begin{align*}
&\Ex{\mathcal{R}}{\sum_{i \in \A} X_{i,t} \event{i \in \arg\max_{S \subseteq \G_t(\off^{\UCB}), S \in \I}\{\bar{\mu}_t(S)\}} }\\
    &= \Ex{\mathcal{R}}{\sum_{i \in \A} \Ex{\mathcal{R}}{ X_{i,t} \event{i \in \arg\max_{S \subseteq \G_t(\off^{\UCB}), S \in \I}\{\bar{\mu}_t(S)\}}~|~H^{\UCB}_{t-1}}}\\
    &= \Ex{\mathcal{R}}{\sum_{i \in \A} \Ex{\mathcal{R}}{ X_{i,t}~|~H^{\UCB}_{t-1}} \event{i \in \arg\max_{S \subseteq \G_t(\off^{\UCB}), S \in \I}\{\bar{\mu}_t(S)\}}} \\
    &= \Ex{\mathcal{R}}{\sum_{i \in \A} \mu_i \event{i \in \arg\max_{S \subseteq \G_t(\off^{\UCB}), S \in \I}\{\bar{\mu}_t(S)\}}}\\
    &=\Ex{\mathcal{R}}{ \mu\left( \arg\max_{S \subseteq \G_t(\off^{\UCB}), S \in \I}\{\bar{\mu}_t(S)\}\right)}.
\end{align*}

Similarly, notice that the algorithm \IG is oblivious to the realized rewards. Therefore, for any fixed offset vector $\off^{\IG}$ and at any time $t \in [T]$, we get
\begin{align*}
    \Ex{\mathcal{R}}{\sum_{i \in \A} X_{i,t} \event{i \in \arg\max_{S \subseteq \G_t(\off^{\IG}), S \in \I}\{{\mu}(S)\}} }
    &= 
    \Ex{\mathcal{R}}{\sum_{i \in \A} \mu_i \event{i \in \arg\max_{S \subseteq \G_t(\off^{\IG}), S \in \I}\{\mu(S)\}}}\\
    &=\Ex{\mathcal{R}}{ \max_{S \subseteq \G_t(\off^{\IG}), S \in \I}\{\mu(S)\}}.
\end{align*}
The lemma follows by observing that the offsets $\off^{\IG}$ and $\off^{\UCB}$ of the two algorithms follow exactly the same distribution. Therefore, we have

\begin{align*}
&\Rew^{\IG}(T) - \Rew^{\UCB}(T) \\
= & \Ex{\off^{\IG} \sim [0,1]^k, \mathcal{R}}{\sum_{t \in [T]}\max_{S \subseteq \G_t(\off^{\IG}), S \in \I}\{\mu(S)\}} - \Ex{\off^{\UCB} \sim [0,1]^k, \mathcal{R}}{\sum_{t \in [T]}\mu\left( \arg\max_{S \subseteq \G_t(\off^{\UCB}), S \in \I}\{\bar{\mu}_t(S)\}\right)} \\
= & \Ex{\off \sim [0,1]^k, \mathcal{R}}{\sum_{t \in [T]}\left(\max_{S \subseteq \G_t(\off), S \in \I}\{\mu(S)\} - \mu\left( \arg\max_{S \subseteq \G_t(\off), S \in \I}\{\bar{\mu}_t(S)\}\right)\right)}.
\end{align*}
\end{proof}

\restateRegretSub*

\begin{proof}
Recall that under the event $\mathcal{E}_{\off}$, both algorithms \IG and \UCB use the same offset vector $\off$ and, thus, they operate on same sequence of sampled arms over time. Let $\G_t = \G_t(\off)$ be the common set of sampled arms and let $\A^{\IG}_t$ and $\A^{\UCB}_t$ be the maximal independent sets computed by \IG and \UCB, respectively, at any round $t \in [T]$. Notice that for any $t \in [T]$ both $\A^{\IG}_t$ and $\A^{\UCB}_t$ are bases of the restricted matroid $\M|\G_t$ and, thus, correspond to independent sets of $\I$ of equal cardinality. Let $\sigma_t$ be the bijection between $\A^{\IG}_t$ and $\A^{\UCB}_t$ described by Theorem~\ref{cor:basesexchange}. For any $t \in [T]$, we have that
$$
\Delta_{\G_t}(\bar\mu) =  \mu(\A^{\IG}_{t}) - \mu(\A^{\UCB}_{t}) = \sum_{i \in \A_t^{\IG}} \mu_i - \sum_{j \in \A_t^{\UCB}} \mu_j = \sum_{i \in \A^{\IG}_t} \left(\mu_i - \mu_{\sigma^{-1}_t(i)}\right) = \sum_{i \in \A^{\IG}_t} \Delta_{i , \sigma^{-1}_{t}(i)}.
$$
\end{proof}

\restateRegretTechnical*

\begin{proof}
We first focus on proving inequality \eqref{inq:reg:under}, that is, the part of the regret attributed to an arm $j >1$ when not enough samples have been collected. Notice that the algorithm $\UCB$ never accumulates regret when it plays the arm $j=1$ of highest mean reward. Recall that for any fixed $j \in \A$, we have $\Delta_{1,j} > \Delta_{2,j} > \dots > \Delta_{j,j} = 0$, since we assume w.l.o.g. that the arms have distinct mean rewards. By construction of our algorithm, if the number of samples from arm $j \in \A$ is increased at some round $t$, it is because there exists exactly one arm $i \in \A$ with $\Delta_{i,j} > 0$, such that $\sigma_t(j) = i$. The above is implied by Theorem~\ref{cor:basesexchange}, given the fact that each bijection $\sigma_t$ for all $t \in [T]$ maps each arm played by \UCB in $\A^{\UCB}_t$ to a single arm played by \IG in $\A_t^{\IG}$. On the other hand, as the number of obtained samples $T_j(t)$ from arm $j \in \A$ by time $t\in [T]$ increases, the maximum suboptimality gap $\Delta_{i,j}$ that can be charged in the under-sampled part of the regret is that of the maximum reward $i \in \A$ that satisfies $T_j(t) \leq \ell_{i,j}$. By the above analysis, for any $j>1$, we get that 
\begin{align}
\sum_{t \in [T]} \sum^{j-1}_{i = 1} \Delta_{i , j} \event{\sigma_t(j)=i, T_j(t) \leq \ell_{i,j}} \notag
&\leq \sum^{j-1}_{i=1} \left(\Delta_{i,j} -  \Delta_{i+1,j}\right)\ell_{i,j} \notag\\
&\leq \sum^{j-1}_{i=1} \left(\Delta_{i,j} -  \Delta_{i+1,j}\right)\frac{8 \ln(T)}{\Delta^2_{i,j}} \label{eq:regret:technical:1},
\end{align}
where the last inequality follows by definition of $\ell_{i,j}$.

The rest of the claim follows by simple algebra. Indeed,
\begin{align*}
    \eqref{eq:regret:technical:1}&\leq \left(\sum^{j-1}_{i=1}\frac{\Delta_{i,j} - \Delta_{i+1,j}}{\Delta^2_{i,j}}\right)8 \ln(T) \notag\\
    &\leq \left(\frac{1}{\Delta_{j-1,j}} + \sum^{j-2}_{i=1}\frac{\Delta_{i,j} - \Delta_{i+1,j}}{\Delta^2_{i,j}}\right)8 \ln(T) \notag\\
    &\leq \left(\frac{1}{\Delta_{j-1,j}} + \sum^{j-2}_{i=1}\frac{\Delta_{i,j} - \Delta_{i+1,j}}{\Delta_{i,j} \Delta_{i+1,j}}\right)8 \ln(T) \notag\\
    &= \left(\frac{1}{\Delta_{j-1,j}} + \sum^{j-2}_{i=1}\left(\frac{1}{\Delta_{i+1,j}} - \frac{1}{\Delta_{i,j}}\right)\right)8 \ln(T) \notag\\
    &= \left(\frac{2}{\Delta_{j-1,j}} - \frac{1}{\Delta_{1,j}}\right)8 \ln(T) \notag\\
    &\leq \frac{16}{\Delta_{j-1,j}} \ln(T) \notag.
\end{align*}

We now focus on proving inequality~\eqref{inq:reg:sufficiently}, that is, the regret accumulated after a sufficient number of samples has been collected from an arm $j > 1$. Notice, that given the event $\mathcal{E}_{\off}$, the expectation in the LHS of inequality~\eqref{inq:reg:sufficiently} is taken only over the randomness of the realized rewards that are observed by \UCB.

For proving the upper bound, we fix any arm $j > 1$ and focus on each arm $i \in \A$ such that $i < j$ and, thus, $\Delta_{i,j}>0$. Let us fix any such arm $i \in \A$. For any $t \in [T]$, the event $\{\sigma_t(j) = i\}$ implies that $\{\mu_{i} > \mu_j, \bar{\mu}_{i,t} \leq \bar{\mu}_{j,t}\}$, namely, the order of the UCB-indices at time $t \in [T]$ of $i$ and $j$ is inconsistent with the order of their true mean rewards. In the opposite case, the algorithm \UCB would have chosen the set $\A^{\UCB}_t - j + i$, which, as suggested by Theorem~\ref{cor:basesexchange}, is an independent set of $\M$. Therefore, for any arm $i < j$, we have
\begin{align}
    \{\sigma_t(j)=i, T_j(t) > \ell_{i,j}\} \subseteq \{\bar{\mu}_{i,t} \leq \bar{\mu}_{j,t},\mu_i > \mu_j, T_j(t) > \ell_{i,j}\}. \label{eq:lem:ucb:0}
\end{align}
Note that the inclusion in the above expression is because the inconsistency in the order of UCB-indices does not necessarily imply that $\sigma_t(j)=i$ (i.e., that \UCB actually exchanges $j$ for $i$ at time $t \in [T]$).

By definition of the UCB-indices, the event $\bar{\mu}_{i,t} \leq \bar{\mu}_{j,t}$ at time $t \in [T]$ implies that 
\begin{align}
    \hat{\mu}_{i,T_{i}(t)} + \sqrt{\frac{2 \ln{(t)}}{T_{i}(t)}} \leq \hat{\mu}_{j,T_{j}(t)} + \sqrt{\frac{2 \ln{(t)}}{T_{j}(t)}}. \label{eq:lem:ucb:1}
\end{align}

We fix $s_i = T_i(t)$ and $s_j = T_j(t) > \ell_{i,j}$ to be the number of samples obtained from arm $i$ and $j$, respectively, by time $t \in [T]$. Notice that in order for \eqref{eq:lem:ucb:1} to hold, at least one of the following events must be true:
\begin{align*}
     \textbf{(i) }\bigg\{\hat{\mu}_{i,s_{i}} + \sqrt{\frac{2 \ln{(t)}}{s_i}} \leq \mu_i \bigg\}, \textbf{   (ii)  }\bigg\{\hat{\mu}_{j,s_{j}} \geq \mu_j + \sqrt{\frac{2 \ln{(t)}}{s_{j}}}\bigg\},\textbf{   (iii)  } \bigg\{\mu_i < \mu_j + 2 \sqrt{\frac{2 \ln{(t)}}{s_j}}\bigg\}.
\end{align*}
Indeed, it can be easily verified that the simultaneous negation of the above three events contradicts \eqref{eq:lem:ucb:1} for any fixed number of samples $s_i,s_j$. 

By our choice of $\ell_{i,j} = \bigg\lfloor \frac{8 \ln(T)}{\Delta^2_{i,j}}\bigg\rfloor$ and the fact that $s_j \geq \ell_{i,j} + 1 \geq \frac{8 \ln(T)}{\Delta^2_{i,j}}$, we can see that event $\textbf{(iii)}$ cannot be true, since in that case, we have
$$
 \mu_j + 2 \sqrt{\frac{2 \ln{(t)}}{s_j}}  \leq \mu_j + 2 \sqrt{\frac{2 \Delta^2_{i,j}\ln{(t)}}{8 \ln(T)}} \leq \mu_j + \Delta_{i,j} = \mu_i.
$$
Moreover, by Hoeffding's inequality, for the probabilities of the events $\textbf{(i)}$ and $\textbf{(ii)}$, we have that
$$
\Pro{\hat{\mu}_{i,s_{i}} + \sqrt{\frac{2 \ln{(t)}}{s_i}} \leq \mu_i} \leq e^{-4 \ln(t)} = t^{-4}\text{   and   }\Pro{\hat{\mu}_{j,s_{j}} \geq \mu_j + \sqrt{\frac{2 \ln{(t)}}{s_{j}}}} \leq e^{-4 \ln(t)} = t^{-4},
$$
where the probability is taken over the randomness of the reward realizations.

Therefore, for any numbers of samples $s_i = T_i(t)$ and $s_j = T_j(t) > \ell_{i,j}$, we have
\begin{align}
    \Pro{\bar{\mu}_{i,t} \leq \bar{\mu}_{j,t},\mu_i > \mu_j, T_j(t) = s_j, T_i(t)= s_i} 
    &\leq \Pro{\hat{\mu}_{i,s_{i}} + \sqrt{\frac{2 \ln{(t)}}{s_i}} \leq \mu_i} + \Pro{\hat{\mu}_{j,s_{j}} \geq \mu_j + \sqrt{\frac{2 \ln{(t)}}{s_{j}}}} \notag\\
    &\leq 2\cdot t^{-4}  \label{eq:ucb:union}.
\end{align}
Finally, by union bound over the possible number of samples, $s_i$ and $s_j$, and using the aforementioned results, for any $j > 1$ and time $t \in [T]$, we have
\begin{align}
    &\Ex{\mathcal{R}}{\sum_{t \in [T]} \sum^{j-1}_{i = 1} \Delta_{i , j} \event{\sigma_t(j)=i, T_j(t) > \ell_{i,j}}} \notag\\
    &= \Ex{\mathcal{R}}{\sum_{t \in [T]} \sum^{j-1}_{i = 1} \sum^{t-1}_{s_i = 0} \sum^{t-1}_{s_j = \ell_{i,j}+1} \Delta_{i , j} \event{\sigma_t(j)=i, T_j(t) = s_j, T_i(t)= s_i}}\label{lem:ucb:f2}\\
    &\leq \Ex{\mathcal{R}}{\sum_{t \in [T]} \sum^{j-1}_{i = 1} \sum^{t-1}_{s_i = 0} \sum^{t-1}_{s_j = \ell_{i,j}+1} \Delta_{i , j} \event{\bar{\mu}_{i,t} \leq \bar{\mu}_{j,t},\mu_i > \mu_j, T_j(t) = s_j, T_i(t)= s_i}}\label{lem:ucb:f3}\\
    &= \sum_{t \in [T]} \sum^{j-1}_{i = 1} \sum^{t-1}_{s_i = 0} \sum^{t-1}_{s_j = \ell_{i,j}+1} \Delta_{i , j} \Pro{\bar{\mu}_{i,t} \leq \bar{\mu}_{j,t},\mu_i > \mu_j, T_j(t) = s_j, T_i(t)= s_i}\notag\\
    &\leq  \sum_{t \in [T]} \sum^{j-1}_{i = 1} \Delta_{i , j} 2 t(t-1) t^{-4}  \label{lem:ucb:f4},
\end{align}
where in \eqref{lem:ucb:f2} we consider any possible number of samples by time $t$ for each arm. Moreover, inequality \eqref{lem:ucb:f3} follows by \eqref{eq:lem:ucb:0} and \eqref{lem:ucb:f4} follows by \eqref{eq:ucb:union}.
The proof of inequality \eqref{inq:reg:sufficiently} follows by the fact that 
$$\sum_{t \in [T]} t(t-1)t^{-4} \leq \sum_{t \in [T]}t^{-2} \leq \sum^{+\infty}_{t =1}t^{-2} = \frac{\pi^2}{6}.$$
\end{proof}

\subsection{Proof of Theorem \ref{thm:approxregret}}

\restateApproxRegret*

\begin{proof}
By inequality \eqref{eq:regret:twoalgos}, Lemma~\ref{lem:regret:coupling} and Definition \ref{def:gaps}, we can upper bound the $\alpha$-regret, for $\alpha = 1 - \frac{1}{e}$, as
\begin{align}
\alpha \OPT(T) - \Rew^{\UCB}(T) \leq \Ex{\off \sim [0,1]^k, \mathcal{R}}{\sum_{t \in [T]}\Delta_{\G_t(\off)}(\bar{\mu}_t)} + \mathcal{O}(d_{\max}\cdot \rk(\M)), \label{eq:reg:final:1}
\end{align}
where the expectation is taken over the randomness of the offset vector $\off$ and the reward realizations.

Under the event $\mathcal{E}_{\off}$, that is, where both \IG and \UCB use the same offsets $\off$, let $\{\sigma_t\}_{t \in [T]}$ be the sequence of bijections between $\A_t^{\UCB}$ and $\A_t^{\IG}$ over all rounds $t \in [T]$, as described in Theorem~\ref{cor:basesexchange}. Using Lemma~\ref{lem:regret:suboptdecomp}, we have that
\begin{align}
    \Ex{\off \sim [0,1]^k, \mathcal{R}}{\sum_{t \in [T]}\Delta_{\G_t(\off)}(\mu_t)} 
    &= \Ex{\off \sim [0,1]^k, \mathcal{R}}{\sum_{t \in [T]}\sum_{i \in \A^{\IG}_t} \Delta_{i , \sigma^{-1}_{t}(i)}}\notag \\
    &= \Ex{\off \sim [0,1]^k, \mathcal{R}}{\sum_{t \in [T]}\sum_{i \in \A^{\IG}_t}
    \sum_{j \in \A} \Delta_{i,j} \event{\sigma_t(j) = i}}\notag\\
    &\leq \Ex{\off \sim [0,1]^k, \mathcal{R}}{\sum_{t \in [T]}\sum_{j \in \A}\sum_{i < j}
    \Delta_{i,j} \event{\sigma_t(j) = i}}, \label{eq:reg:final:2}
\end{align}
where in the last inequality we restrict ourselves to arms $i < j$, where $\Delta_{i,j}>0$.

Now using the results of Lemma~\ref{lem:regret:technical}, we can further upper bound \eqref{eq:reg:final:2} as 

\begin{align}
&\Ex{\off \sim [0,1]^k, \mathcal{R}}{\sum_{t \in [T]}\sum_{j \in \A}\sum_{i < j} \Delta_{i,j} \event{\sigma_t(j) = i}} \notag \\
&= \Ex{\off \sim [0,1]^k, \mathcal{R}}{\sum_{t \in [T]}\sum_{j \in \A}\sum_{i < j} \Delta_{i,j} \event{\sigma_t(j) = i, T_j(t) \leq \ell_{i,j}}} \notag\\
&\qquad \qquad+ \Ex{\off \sim [0,1]^k}{ \Ex{\mathcal{R}}{\sum_{t \in [T]}\sum_{j \in \A}\sum_{i < j} \Delta_{i,j} \event{\sigma_t(j) = i, T_j(t) > \ell_{i,j}}}} \notag \\
&\leq \sum_{j > 1} \frac{16}{\Delta_{j-1,j}}\ln(T) + \frac{\pi^2}{3} \sum_{j > 1}\sum_{i = 1}^{j-1} \Delta_{i,j}. \label{eq:reg:final:3}
\end{align}

By combining inequalities \eqref{eq:reg:final:1}, \eqref{eq:reg:final:2} and \eqref{eq:reg:final:3}, we can upper bound the regret as a function of the gaps as follows:
\begin{align*}
    \alpha \OPT(T) - \Rew^{\UCB}(T) \leq \sum_{j > 1} \frac{16}{\Delta_{j-1,j}}\ln(T) + \frac{\pi^2}{3} \sum_{j > 1}\sum_{i = 1}^{j-1} \Delta_{i,j} + \mathcal{O}(d_{\max}\cdot \rk(\M))~~ \text{ (gap-dependent regret)}.
\end{align*}

In order to conclude the proof of the theorem, we would like to construct a regret bound that is independent of the gaps. The standard method is to partition the suboptimality gaps into ``small'' and ``large'' and, then, separately study their contribution to the regret. Specifically, for each $j \in \A$ and fixed $\epsilon > 0$, we define:
\begin{align*}
    S_j = \{i < j~|~\Delta_{i,j} \leq \epsilon\}\text{ and }L_j = \{i < j~|~\Delta_{i,j} > \epsilon\}.
\end{align*}
Starting again from \eqref{eq:reg:final:2} and noticing that the total regret due to small gaps can be at most $\epsilon\cdot T$ per arm, we have
\begin{align}
&\Ex{\off \sim [0,1]^k, \mathcal{R}}{\sum_{t \in [T]}\sum_{j \in \A}\sum_{i < j} \Delta_{i,j} \event{\sigma_t(j) = i}} \notag \\
&= \Ex{\off \sim [0,1]^k, \mathcal{R}}{\sum_{t \in [T]}\sum_{j \in \A}\sum_{i \in S_j} \Delta_{i,j} \event{\sigma_t(j) = i}} + \Ex{\off \sim [0,1]^k, \mathcal{R}}{\sum_{t \in [T]}\sum_{j \in \A}\sum_{i \in L_j} \Delta_{i,j} \event{\sigma_t(j) = i}} \notag\\
&\leq 
\epsilon k  T + \Ex{\off \sim [0,1]^k, \mathcal{R}}{\sum_{t \in [T]}\sum_{j \in \A}\sum_{i \in L_j} \Delta_{i,j} \event{\sigma_t(j) = i}}. \label{eq:reg:final:4}
\end{align}

We now focus only on the regret due to the large gaps, namely, the pairs $i,j$ such that $j \in \A$ and $i \in L_j$, which implies that $\Delta_{i,j} > \epsilon$. By exactly the same analysis as in the gap-dependent case, we can reach inequality \eqref{eq:reg:final:3}, in the restricted case where the summations only include pairs of arms such that $\Delta_{i,j} > \epsilon$ (notice that we can apply Lemma~\ref{lem:regret:technical} considering only the set $L_j$ of arms for each $j>1$). In addition, using the fact that $\Delta_{i,j} \leq 1$ for any $i,j \in \A$, we have 
\begin{align}
    \Ex{\off \sim [0,1]^k, \mathcal{R}}{\sum_{t \in [T]}\sum_{j \in \A}\sum_{i \in L_j} \Delta_{i,j} \event{\sigma_t(j) = i}} \leq 
    \sum_{j >1 } \frac{16}{\epsilon}\ln(T) + \frac{\pi^2}{6} k (k-1). \label{eq:reg:final:5}
\end{align}
By combining inequalities \eqref{eq:reg:final:4} and \eqref{eq:reg:final:5} with \eqref{eq:reg:final:1} and \eqref{eq:reg:final:2}, we have
\begin{align*}
    \alpha \OPT(T) - \Rew^{\UCB}(T) \leq \epsilon k T + \frac{16 k }{\epsilon}\ln(T) + \frac{\pi^2}{6} k(k-1) + \mathcal{O}(d_{\max}\cdot \rk(\M)).
\end{align*}
Finally, by setting $\epsilon = 4 \sqrt{\frac{\ln(T)}{T}}$, we get that 
\begin{align*}
   \alpha \OPT(T) - \Rew^{\UCB}(T) \leq 8 k \sqrt{T \ln(T)} + \frac{\pi^2}{6} k(k-1) + \mathcal{O}(d_{\max}\cdot \rk(\M))\quad \text{ (gap-independent regret)}.
\end{align*}
Therefore, we can conclude that the expected reward collected by \UCB in $T$ rounds is at least
\begin{align*}
    \left(1-\frac{1}{e}\right)\OPT(T) - \mathcal{O}\left(k \sqrt{T \ln(T)} + k^2 + d_{\max}\cdot \rk(\M) \right).
\end{align*}
\end{proof}
\section{Additional Results}

\subsection{Tight example for the naive greedy algorithm} \label{appendix:tightexample}

\begin{restatable}{lemma}{restateTightexample}\label{lem:tightexample}
For any $d \geq 2$, there exists an instance of the full-information variant of the \mbb problem (where the mean rewards are known a priori) such that the greedy strategy that plays a maximum mean reward independent set among the available arms collects a $\left(\frac{1}{2} + \frac{1}{2d}\right)$-fraction of the optimal expected reward.
\end{restatable}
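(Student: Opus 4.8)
The plan is to exhibit, for each $d \ge 2$, an explicit full-information \mbb instance on which the naive greedy rule collects only a $\left(\frac12+\frac1{2d}\right)$-fraction of the optimum, and to verify the ratio by a direct round-by-round (periodic) comparison of greedy against an optimal schedule. Since partition matroids already admit a $\left(1-\frac1e\right)$-guarantee, the instance must be built on a \emph{non-partition} matroid; the smallest such object, and the source of the slack, is the gap between a \emph{parallel class} (which contributes at most one element to any independent set) and \emph{freely combinable} singleton classes (whose elements may appear together in a basis). I would place the high-reward mass on the combinable singletons and a cheaper, abundant supply inside one parallel class, so that the myopic max-weight rule is lured into packing the premium singletons into a single round.

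Concretely, I would take a rank-two (or, in the scaled version, higher-rank) matroid consisting of one parallel class $A$ of abundant, slightly-cheaper elements together with premium singleton classes, assign all delays equal to $d$, and perturb the rewards infinitesimally so that the maximum-weight independent set strictly prefers to combine two premium singletons over any premium--$A$ pairing. The first step is to trace greedy: by maximality and the strict weight preference it plays the premium singletons together in one round, which blocks them simultaneously for the next $d-1$ rounds; during those rounds the only available elements lie in the single parallel class $A$, so the available matroid has rank one and greedy is confined to a single element of weight $\approx 1$ per round. Hence over each length-$d$ period greedy collects $\approx 2 + (d-1)\cdot 1 = d+1$. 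The second step is to exhibit an optimal (or merely better) schedule that \emph{staggers} the premium elements, pairing one premium element with one element of $A$ in every round and thereby keeping the per-round reward near $2$, for a periodic total of $\approx 2d$. Dividing and letting $T\to\infty$ (so that the $\mathcal{O}(d_{\max}\,\rk(\M))$ boundary terms vanish in the ratio) would yield $\frac{d+1}{2d}=\frac12+\frac1{2d}$.

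The step I expect to be the main obstacle is the joint calibration: I must simultaneously guarantee that the optimal schedule \emph{sustains} the higher per-round reward while greedy stays \emph{trapped} for the full $d-1$ rounds of each period. These pull in opposite directions, because any abundance that lets the optimum keep two productive ``slots'' alive tends also to rescue greedy. This is exactly where the non-partition structure is essential: the trap must be arranged so that, once greedy has clumped the combinable premium elements, every remaining available element lies in a single parallel class (dropping the available rank to one), whereas the optimum never exhausts its diversity. Secondary but necessary checks are that both schedules respect the blocking constraints (each chosen element was last played at least $d$ rounds earlier), that greedy's clumping is forced \emph{deterministically} by the strict perturbed inequalities rather than by adversarial tie-breaking, and that the additive terms are genuinely lower-order, so that the limiting ratio is exactly $\frac12+\frac1{2d}$.
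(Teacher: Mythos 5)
Your high-level plan (a periodic instance on which greedy clumps the valuable arms while the optimum staggers them) is the right one, but the concrete construction you propose does not achieve the claimed ratio, and the arithmetic for the two schedules is inconsistent with your own setup. For greedy to collect only $2+(d-1)\cdot 1=d+1$ per period, there can be essentially two premium elements, each blocked for $d-1$ rounds after being played; but then the optimum can use a premium element in at most two of every $d$ rounds, and in the remaining $d-2$ rounds every available element lies in the single parallel class $A$, whose restriction has rank one. So the optimum collects at most about $2\cdot 2+(d-2)\cdot 1=d+2$ per period against greedy's $d+1$, and the ratio tends to $1$, not to $\frac12+\frac1{2d}$. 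The tension you flag as ``the main obstacle'' is not a secondary calibration issue --- it is the entire difficulty, and with uniform delays and a single rank-one filler class it cannot be resolved: any round in which the premiums are blocked caps the optimum at the same rank-one reward that greedy gets. Enlarging the premium pool so that one is available every round rescues greedy as well (it then plays a full-rank set of premiums each round until they run out), and in every variant of your construction I can instantiate the ratio stays bounded away from $\frac12$.

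The paper's construction resolves this with two ingredients that are absent from your sketch. First, the arms are partitioned into $d$ groups $T_1,\dots,T_d$ with $|T_p|=p$, \emph{delay equal to $p$} (not uniform), and reward $1-\epsilon/p$; because each group's size equals its delay, the optimum can take one element from every group in every round by round-robin within each group, sustaining reward $\approx d$ per round indefinitely. Second, the matroid is the graphic matroid of a recursively built graph in which $T_p$ is a star spanning $G_p$: once $T_{p+1},\dots,T_d$ are blocked, $T_p$ is a maximum-weight \emph{basis} of the available restriction, so greedy is forced to exhaust exactly one group per round, earning $d,\,d-1,\dots,1$ over the period and hence $\frac{d+1}{2}$ on average, while one edge from each star always forms a forest, so the optimum's staggered choice is independent. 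The ratio $\frac{(d+1)/2}{d}=\frac12+\frac1{2d}$ then follows. If you want to salvage your approach, you would need to replace the single filler class by a graded family of classes whose sizes match their delays; at that point you have essentially rediscovered the paper's instance.
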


\begin{proof}
We consider an infinite time horizon and a graphic matroid based on the graph $G_d = (V_d, E_d)$, which is recursively defined as follows: Let $G_1 = (V_1, E_1)$ with $V_1 = \{u,v\}$, $E_1 = \{\{u,v\}\}$ and assume that the arm associated with edge $\{u,v\}$ has delay $1$ and mean reward $1-\epsilon$, for some $\epsilon > 0$. For the graph $G_d = (V_d, E_d)$, we have $V_d = V_{d-1} \cup \{u_d\}$ and $E_d = E_{d-1} \cup \{\{u,u_d\}, \forall u \in V_{d-1}\}$ (namely, $G_d$ is essentially the result of the join operation between $G_{d-1}$ and a single vertex graph). The arms that are associated with the edges of $E_{d} \setminus E_{d-1}$ all have delay equal to $d$ and mean reward equal to $1- \frac{\epsilon}{d}$. The above recursive construction is illustrated in Figure \ref{fig:graphicmatroid}.

\begin{figure}[h]
\centering
\tikzstyle{ipe stylesheet} = [
  ipe import,
  even odd rule,
  line join=round,
  line cap=butt,
  ipe pen normal/.style={line width=0.4},
  ipe pen heavier/.style={line width=0.8},
  ipe pen fat/.style={line width=1.2},
  ipe pen ultrafat/.style={line width=2},
  ipe pen normal,
  ipe mark normal/.style={ipe mark scale=3},
  ipe mark large/.style={ipe mark scale=5},
  ipe mark small/.style={ipe mark scale=2},
  ipe mark tiny/.style={ipe mark scale=1.1},
  ipe mark normal,
  /pgf/arrow keys/.cd,
  ipe arrow normal/.style={scale=7},
  ipe arrow large/.style={scale=10},
  ipe arrow small/.style={scale=5},
  ipe arrow tiny/.style={scale=3},
  ipe arrow normal,
  /tikz/.cd,
  ipe arrows, 
  <->/.tip = ipe normal,
  ipe dash normal/.style={dash pattern=},
  ipe dash dashed/.style={dash pattern=on 4bp off 4bp},
  ipe dash dotted/.style={dash pattern=on 1bp off 3bp},
  ipe dash dash dotted/.style={dash pattern=on 4bp off 2bp on 1bp off 2bp},
  ipe dash dash dot dotted/.style={dash pattern=on 4bp off 2bp on 1bp off 2bp on 1bp off 2bp},
  ipe dash normal,
  ipe node/.append style={font=\normalsize},
  ipe stretch normal/.style={ipe node stretch=1},
  ipe stretch normal,
  ipe opacity 10/.style={opacity=0.1},
  ipe opacity 30/.style={opacity=0.3},
  ipe opacity 50/.style={opacity=0.5},
  ipe opacity 75/.style={opacity=0.75},
  ipe opacity opaque/.style={opacity=1},
  ipe opacity opaque,
]
\definecolor{red}{rgb}{1,0,0}
\definecolor{green}{rgb}{0,1,0}
\definecolor{blue}{rgb}{0,0,1}
\definecolor{yellow}{rgb}{1,1,0}
\definecolor{orange}{rgb}{1,0.647,0}
\definecolor{gold}{rgb}{1,0.843,0}
\definecolor{purple}{rgb}{0.627,0.125,0.941}
\definecolor{gray}{rgb}{0.745,0.745,0.745}
\definecolor{brown}{rgb}{0.647,0.165,0.165}
\definecolor{navy}{rgb}{0,0,0.502}
\definecolor{pink}{rgb}{1,0.753,0.796}
\definecolor{seagreen}{rgb}{0.18,0.545,0.341}
\definecolor{turquoise}{rgb}{0.251,0.878,0.816}
\definecolor{violet}{rgb}{0.933,0.51,0.933}
\definecolor{darkblue}{rgb}{0,0,0.545}
\definecolor{darkcyan}{rgb}{0,0.545,0.545}
\definecolor{darkgray}{rgb}{0.663,0.663,0.663}
\definecolor{darkgreen}{rgb}{0,0.392,0}
\definecolor{darkmagenta}{rgb}{0.545,0,0.545}
\definecolor{darkorange}{rgb}{1,0.549,0}
\definecolor{darkred}{rgb}{0.545,0,0}
\definecolor{lightblue}{rgb}{0.678,0.847,0.902}
\definecolor{lightcyan}{rgb}{0.878,1,1}
\definecolor{lightgray}{rgb}{0.827,0.827,0.827}
\definecolor{lightgreen}{rgb}{0.565,0.933,0.565}
\definecolor{lightyellow}{rgb}{1,1,0.878}
\definecolor{black}{rgb}{0,0,0}
\definecolor{white}{rgb}{1,1,1}
\begin{tikzpicture}[ipe stylesheet]
  \draw[red, ipe pen fat]
    (64, 704)
     -- (96, 704)
     -- (96, 704);
  \pic[ipe mark large, fill=darkgray]
     at (64, 704) {ipe fdisk};
  \pic[ipe mark large, fill=darkgray]
     at (96, 704) {ipe fdisk};
  \filldraw[draw=red, ipe pen fat, fill=darkgray, ipe opacity 75]
    (144, 736)
     -- (128, 704)
     -- (128, 704);
  \filldraw[ipe pen fat, fill=white]
    (128, 704)
     -- (160, 704);
  \filldraw[red, ipe pen fat]
    (144, 736)
     -- (160, 704);
  \pic[ipe mark large, fill=darkgray]
     at (144, 736) {ipe fdisk};
  \pic[ipe mark large, fill=white]
     at (128, 704) {ipe fdisk};
  \pic[ipe mark large, fill=white]
     at (160, 704) {ipe fdisk};
  \filldraw[ipe pen fat, fill=white]
    (192, 704)
     -- (224, 704);
  \filldraw[ipe pen fat, fill=white]
    (192, 704)
     -- (208, 736);
  \filldraw[ipe pen fat, fill=white]
    (208, 736)
     -- (224, 704);
  \filldraw[draw=red, ipe pen fat, fill=darkgray]
    (208, 736)
     -- (240, 736);
  \filldraw[draw=red, ipe pen fat, fill=darkgray]
    (192, 704)
     -- (240, 736);
  \filldraw[draw=red, ipe pen fat, fill=black]
    (224, 704)
     -- (240, 736);
  \pic[ipe mark large, fill=white]
     at (192, 704) {ipe fdisk};
  \pic[ipe mark large, fill=white]
     at (208, 736) {ipe fdisk};
  \pic[ipe mark large, fill=white]
     at (224, 704) {ipe fdisk};
  \pic[ipe mark large, fill=darkgray]
     at (240, 736) {ipe fdisk};
  \node[ipe node, font=\Large]
     at (64, 672) {$G_1$};
  \node[ipe node, font=\Large]
     at (128, 672) {$G_2$};
  \node[ipe node, font=\Large]
     at (192, 672) {$G_3$};
  \node[ipe node, font=\Huge]
     at (256, 704) {. . .};
  \node[ipe node, font=\Large]
     at (320, 672) {$G_d$};
  \node[ipe node, font=\Large]
     at (323.895, 717.398) {$G_{d-1}$};
  \draw[ipe pen fat, ipe dash dashed]
    (362.6667, 730.6667)
     .. controls (362.6667, 741.3333) and (341.3333, 746.6667) .. (330.6667, 744)
     .. controls (320, 741.3333) and (320, 730.6667) .. (320, 725.3333)
     .. controls (320, 720) and (320, 720) .. (320, 714.6667)
     .. controls (320, 709.3333) and (320, 698.6667) .. (330.6667, 701.3333)
     .. controls (341.3333, 704) and (362.6667, 720) .. cycle;
  \filldraw[draw=red, ipe pen fat, fill=darkgray]
    (358.3048, 719.1079)
     -- (383.7055, 704.3703);
  \pic[ipe mark large, fill=darkgray]
     at (384, 704) {ipe fdisk};
  \filldraw[draw=red, ipe pen fat, fill=darkgray]
    (348.8779, 710.1407)
     -- (383.7886, 703.0978);
  \filldraw[draw=red, ipe pen fat, fill=darkgray]
    (364.0388, 730.0229)
     -- (385.3779, 704.2612);
  \filldraw[draw=red, ipe pen fat, fill=darkgray]
    (341.3156, 704.9987)
     -- (383.412, 702.607);
  \pic[ipe mark large, fill=darkgray]
     at (384, 704) {ipe fdisk};
\end{tikzpicture}
\caption{Recursive definition of $G_d$.}
\label{fig:graphicmatroid}
\end{figure}
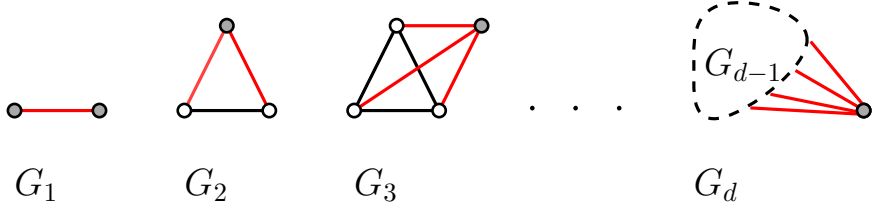

Consider now the arm-pulling schedule constructed by the greedy strategy. Let $T_p = E_p \setminus E_{p-1}$ be the new edges added at each step $p \in [d]$ in the recursive definition of $G_d$ (assuming that $E_0 = \emptyset$). Notice that for any integers $d \geq p_1 > p_2 \geq 1$ the edges of $T_{p_1}$ correspond to arms of higher mean reward than the edges of $T_{p_2}$. Therefore, the algorithm produces a periodic schedule of period $d$ as follows: Initially, the algorithm plays the $d$ arms of group $T_d$, collecting reward $d\left( 1 - \frac{\epsilon}{d}\right) = d - \epsilon$. Notice that, by construction, these edges form a spanning tree in $G_{d}$ and, thus, no additional arm can be played at the same time step. In the second time step of the period, the arms of $T_d$ are blocked and the algorithm plays the arms of $T_{d-1}$ collecting $d-1-\epsilon$ reward. Again, this is the maximum reward independent set of $G_d$ among the available arms. The algorithm proceeds similarly in the following steps and collects an average reward of 
$$
\frac{\sum^d_{p=1}(p-\epsilon)}{d} = \frac{d\cdot(d+1)/2-d\epsilon}{d} = \frac{d+1}{2} - \epsilon.
$$
In the above example, the optimal arm-pulling sequence is to play at each time $t \in [T]$, one arm of each group $T_p$ for $p \in [d]$. Notice that by construction of the delays and at each time step, there always exists at least one arm per group that is available. Moreover, by definition of the graph $G_d$, any such selection of arms never contains a circuit and, thus, it is an independent set of the graphic matroid. The expected reward collected by the optimal algorithm at each step is $d - \epsilon\sum_{p \in [d]}\frac{1}{p} = d - \epsilon H(d)$, were $H(d) = \sum_{p \in [d]}\frac{1}{p}$. 

In the above example, the ratio between the average reward collected by the greedy strategy and the optimal reward for $\epsilon \to 0$ becomes
$$
\lim_{\epsilon \to 0} \frac{\frac{d+1}{2} - \epsilon}{d - \epsilon H(d)} = \frac{1}{2} + \frac{1}{2d}.
$$
Therefore, by choosing large enough $d$, we can bring the approximation ratio of the above example arbitrarily close to $\frac{1}{2}$.
\end{proof}

\end{document}